\documentclass[11pt]{article}
\usepackage[preprint]{acl}
\usepackage{times}
\usepackage{latexsym}
\usepackage[T1]{fontenc}
\usepackage[utf8]{inputenc}
\usepackage{microtype}
\usepackage{inconsolata}
\usepackage{graphicx}
\usepackage{amsmath}
\usepackage{amssymb}
\usepackage{caption}
\usepackage{booktabs}
\usepackage{xcolor}
\usepackage{enumitem}
\usepackage{subcaption}
\usepackage{multirow}
\usepackage{mathrsfs} 
\usepackage{amsthm}

\usepackage{tabularx}
\usepackage{array}
\usepackage{makecell}

\theoremstyle{plain}
\newtheorem{theorem}{Theorem}
\newtheorem{proposition}{Proposition}

\theoremstyle{definition}
\newtheorem{assumption}{Assumption}

\theoremstyle{plain}

\newcommand{\up}{\textcolor{green!50!black}{$\uparrow$}}
\newcommand{\down}{\textcolor{red!70!black}{$\downarrow$}}

\usepackage[export]{adjustbox}
\usepackage{siunitx}

\usepackage[most]{tcolorbox}

\definecolor{AccentBlue}{HTML}{1F5FBF}
\definecolor{CardGray}{HTML}{F2F3F5}
\definecolor{RuleGray}{HTML}{1F1F1F}

\newtcolorbox{PromptCardFig}{
  enhanced,
  boxsep=7pt,
  left=10pt,right=10pt,top=8pt,bottom=10pt,
  colback=CardGray,
  colframe=RuleGray,
  arc=8pt,
  boxrule=1.0pt
}

\newcommand{\promptlbl}[1]{\textcolor{AccentBlue}{\bfseries #1}}

\tcbset{
  enhanced,
  boxsep=7pt,
  left=10pt,right=10pt,top=8pt,bottom=10pt,
}

\newtcolorbox{PromptCard}{
  colback=CardGray,
  colframe=RuleGray,
  arc=8pt,
  boxrule=1.0pt,
  breakable
}

\newtcolorbox{PromptCardNoBreak}{
  enhanced,
  boxsep=7pt,
  left=10pt,right=10pt,top=8pt,bottom=10pt,
  colback=CardGray,
  colframe=RuleGray,
  arc=8pt,
  boxrule=1.0pt
}

\definecolor{RespGray}{HTML}{F7F8FA}
\definecolor{ConsAccent}{HTML}{3B6FB6}
\definecolor{AggrAccent}{HTML}{D97706}
\definecolor{RespFrame}{HTML}{D5D9E0}

\newtcolorbox{ResponseCardNoBreak}[2]{
  enhanced,
  colback=RespGray,
  colframe=RespFrame,
  arc=7pt,
  boxrule=0.6pt,
  borderline west={2pt}{0pt}{#1},
  left=7pt,
  right=7pt,
  top=6pt,
  bottom=6pt,
  before skip=2pt,
  after skip=2pt,
  breakable=false,
  fontupper=\scriptsize,
  title={#2},
  fonttitle=\bfseries\small,
  coltitle=black,
  attach title to upper={\par\vspace{1mm}},
}

\title{Reliability-Prioritized Fine-Grained Generation in Multimodal Large Language Models}

\author{
\textbf{Xiaomeng Fan}$^{1,2,*}$,
\textbf{Wei Wu}$^{1,2,*}$,
\textbf{Yuwei Wu}$^{1,2}$,
\textbf{Zhi Gao}$^{1,2,\dagger}$ \\
\textbf{Shiyu Luo}$^{1,2}$,
\textbf{Mingyang Gao}$^{1,2}$,
\textbf{Haoyu Zhao}$^{1,2}$,
\textbf{Zhenxin Diao}$^{1,2}$ \\
\textbf{Yuxuan Ba}$^{1,2}$,
\textbf{Lijia Feng}$^{1,2}$,
\textbf{Yunde Jia}$^{2,1,\dagger}$,
\textbf{Mehrtash Harandi}$^{3}$ \\
\\[-0.3em]
\small $^{1}$Beijing Key Laboratory of Intelligent Information Technology, \\
\small School of Computer Science \& Technology, Beijing Institute of Technology \\
\small $^{2}$Guangdong Laboratory of Machine Perception and Intelligent Computing, Shenzhen MSU-BIT University \\
\small $^{3}$Department of Electrical and Computer System Engineering, Monash University \\
\\[-0.2em]
\small $^{*}$Equal contribution. \quad
\small $^{\dagger}$Corresponding authors.
}

\begin{document}
\maketitle

\begin{abstract}

Multimodal large language models (MLLMs) are increasingly expected to generate fine-grained descriptions of visual content.
However, we observe and theoretically show that generating fine-grained responses poses a reliability  challenge, \textit{i.e.}, fine-grained generation is more error-prone than coarse-grained generation. 
This phenomenon suggests that models should generate the finest description that remains reliable rather than simply produce more specific outputs.
To investigate this problem, we develop \textsc{GranFact}, a granularity-aware benchmark consisting of expert-verified multi-object images with coarse-to-fine category annotations. Then, we design a hierarchy-aware evaluation algorithm, which assesses both whether model predictions are visually correct and how specific the correct predictions are.
We also propose a reliability-prioritized preference optimization method based on Direct Preference Optimization, which penalizes unreliable fine-grained claims while rewarding reliable specificity.
Experiments on \textsc{GranFact} show that our method improves fine-grained generation while preserving reliability.
Code and data are available \href{https://github.com/WeiWu2025/GranFact}{here}.

\end{abstract}
\vspace{-1mm}
\section{Introduction}
\vspace{-1mm}
\begin{figure*}[h]
    \centering
    \includegraphics[width=0.87\linewidth]{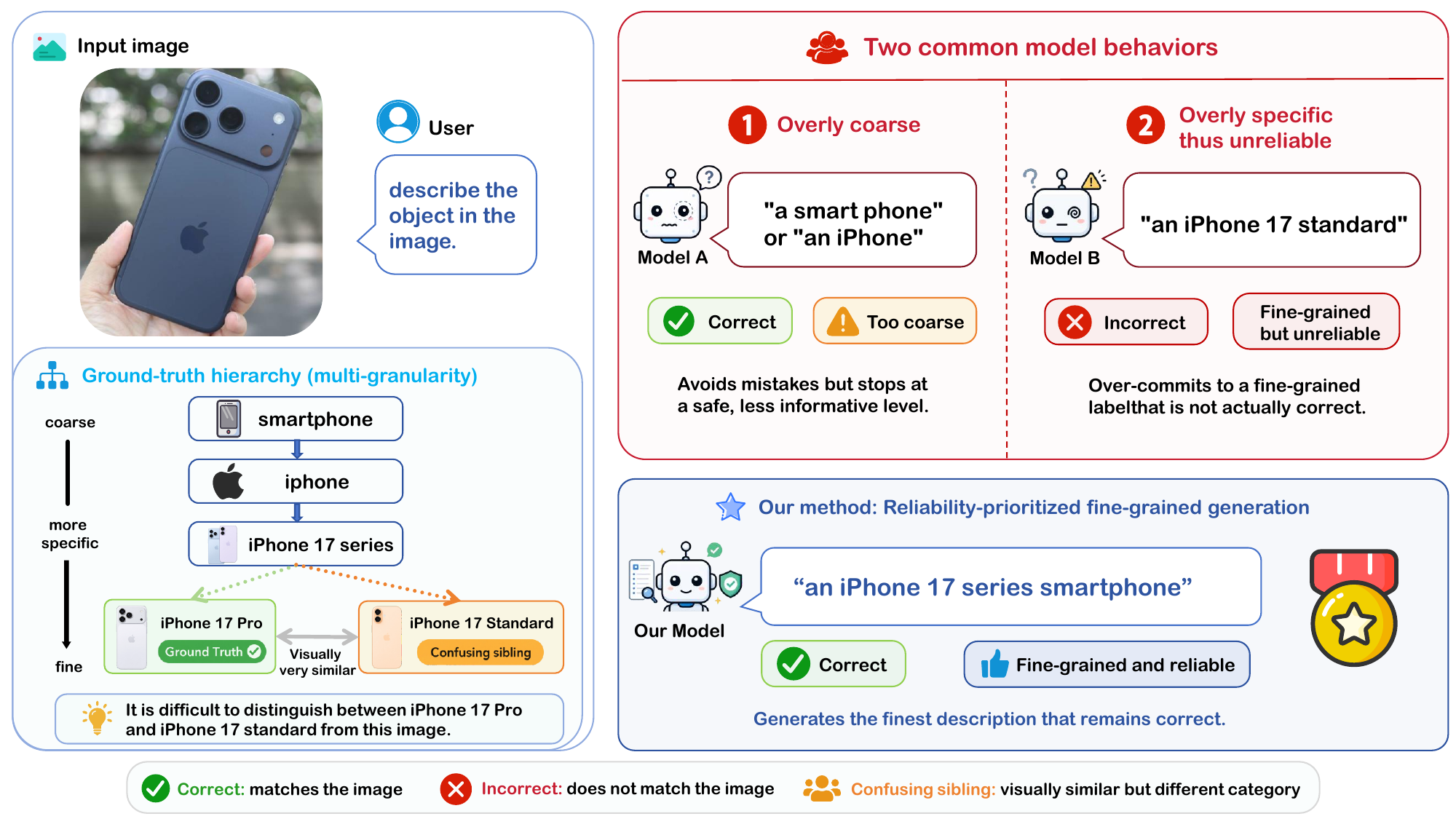}
    \caption{
Motivation of reliability-prioritized fine-grained generation.
Rather than simply maximizing specificity, a model should generate the finest description that remains reliable.
}
    \label{fig:motivation}
\end{figure*}

Multimodal large language models (MLLMs) have demonstrated strong visual understanding capabilities~\cite{alayrac2022flamingo,li2023blip2,liu2023visual,zhu2024minigpt,dai2023instructblip}, enabling open-ended descriptions of visual content across diverse scenarios. However, their reliability remains a significant challenge when generating fine-grained visual descriptions~\cite{kim2024finer,he2025finedefics,ye2025painting,yu2025benchmarking}. We observe and theoretically show that
producing a fine-grained generation is substantially more error-prone than producing a coarse-grained one for MLLMs.
As illustrated in Figure~\ref{fig:motivation}, given an image of an ``iPhone 17 Pro'', a model can often correctly describe it as a ``smartphone'' or an ``iPhone'', yet when asked for a fine-grained description, it may misidentify it as a visually similar model, such as an ``iPhone 17 standard''.

This phenomenon suggests that fine-grained visual description should not simply encourage more specific responses. Instead, models should generate the finest reliable description, and back off to a coarser but correct one when finer-grained predictions become unreliable. We refer to this problem as reliability-prioritized fine-grained visual description generation, where the model first ensures correctness and then chooses the finest description among reliable candidates.

Existing visual description benchmarks~\cite{dong2024benchmarking,cheng2025caparena,liu2025capability} and fine-grained recognition benchmarks~\cite{wah2011cub,krause2013cars,maji2013aircraft,kim2024finer,he2025finedefics,yu2025fgbmk} are insufficient for evaluating this capability. From the data perspective, existing benchmarks often focus on single objects or scenes with limited visual confusion, whereas realistic multi-object fine-grained benchmarks require substantial effort in image collection and coarse-to-fine annotation.
From the evaluation perspective, existing protocols usually judge correctness at a predefined granularity, making it non-trivial to determine whether coarser or finer descriptions remain correct. As a result, they fail to reliably evaluate models across granularities.
This issue is more challenging in open-ended multi-object generation, where evaluation must first identify which ground-truth (GT) object each prediction refers to, and then decide its granularity.
Without global alignment, repeated or unsupported predictions may be incorrectly rewarded, while missed objects are overlooked.

To address these challenges, we construct a granularity-aware benchmark, namely \textsc{GranFact}. \textsc{GranFact} consists of 581 images across 7 domains. The images are collected from the Internet and real-world photographs with multiple visually relevant objects. For each object, human experts provide category annotations at coarse-to-fine semantic granularities.
For evaluation, we introduce a hierarchy-aware evaluation algorithm that parses open-ended responses into structured entities and assigns them to GT entities under quantity constraints.
To resolve the assignments between predicted and GT entities, we introduce a reliability-prioritized objective that first prioritizes category-consistent or category-coarser assignments and then favors the finest-grained assignment.
Based on the final assignment, we design two types of metrics. Granularity-neutral metrics count a prediction as correct if it matches any annotated category level, while granularity-aware metrics further measure the specificity of correct descriptions.

Based on Direct Preference Optimization (DPO)~\cite{rafailov2023direct}, we propose Reliability-Prioritized DPO (RP-DPO) to improve fine-grained generation of MLLMs under a reliability-first constraint.
RP-DPO first applies rollback to hallucinated predicted objects by replacing them with coarse-grained ancestors shared with the GT. It then constructs reliability preferences between original and rolled-back responses, and granularity preferences among reliable responses. With larger margins for reliability, RP-DPO enforces reliability-first optimization while favoring finer-grained descriptions under comparable reliability.
Experiments with Qwen3-VL-8B on \textsc{GranFact} highlight the importance of reliability-first optimization for fine-grained multimodal understanding.

In summary, our contributions are threefold: 
\vspace{-3mm}
\begin{itemize}
    \item We introduce \textsc{GranFact}, a granularity-aware multimodal benchmark that includes expert-verified multi-object images with coarse-to-fine hierarchical labels.
    \vspace{-3mm}
    \item We propose a hierarchy-aware evaluation algorithm that aligns open-ended predictions with GT entities and measures the granularity and correctness of model responses.
    \vspace{-3mm}
    \item We propose RP-DPO, a reliability-prioritized preference optimization method that improves fine-grained visual description generation while preserving reliability.
\end{itemize}

\vspace{-3mm}
\section{Related Work}
\vspace{-1mm}
\subsection{MLLM Benchmarks}
\vspace{-1mm}
Recent MLLM benchmarks evaluate broad multimodal abilities, including perception and cognition~\cite{fu2025mme}, general multimodal capabilities~\cite{liu2024mmbench,zhang2025mme}, visual reasoning~\cite{lu2024mathvista}, and expert-level understanding~\cite{yue2024mmmu}. More targeted benchmarks study fine-grained recognition~\cite{yu2025fgbmk} and hallucination detection~\cite{li2023evaluating,wang2023amber,guan2024hallusionbench,cai2025mhalo,yin2026freak}. 
These benchmarks typically target fixed-granularity answers or binary correctness. In contrast, our benchmark jointly evaluates the correctness and granularity of open-ended descriptions.

\vspace{-2mm}
\subsection{Fine-Grained Visual Description}
\vspace{-1mm}
Prior work improves the visual specificity of MLLMs through detailed caption data~\cite{chen2024sharegpt4v}, grounded generation~\cite{you2024ferret,rasheed2024glamm}, and fine-grained recognition training~\cite{he2025finedefics,he2026fine,wu2026hypmodalalign}. These methods make outputs more detailed or discriminative, but do not explicitly prioritize reliability when increasing semantic specificity. We instead focus on reliability-prioritized fine-grained generation, encouraging more specific descriptions while preserving reliability.
\vspace{-2mm}
\subsection{Hallucination Mitigation in MLLMs}
\vspace{-1mm}
Hallucination mitigation aims to reduce visual claims that are not grounded in the input image. Existing methods address this issue through inference-time interventions, such as contrastive decoding~\cite{leng2024vcd,tong2025mitigating,chen2025dcd}, post-hoc correction~\cite{zhou2024lure,yin2024woodpecker}, or additional supervision and alignment signals~\cite{yu2024rlhfv}.
These methods improve reliability by reducing ungrounded claims, but do not explicitly consider the granularity of grounded descriptions.
Our work treats hallucination reduction as a prerequisite and further encourages finer-grained descriptions.

\vspace{-2mm}
\subsection{Direct Preference Optimization}
\vspace{-1mm}
Direct Preference Optimization (DPO) directly optimizes preference pairs without training an explicit reward model~\cite{rafailov2023direct}. Recent preference-optimization methods introduce target or adaptive margins to model different preference strengths~\cite{meng2024simpo,wu2025alphadpo,sun2025gamma}. Other works extend DPO to multi-objective alignment~\cite{zhou2024modpo} or multimodal settings~\cite{wang2024mdpo,fu2025hdpo}. Our method follows this line but focuses on reliability-prioritized fine-grained generation.

\vspace{-2mm}
\section{Analysis}
\label{sec:analysis}
\vspace{-1.5mm}

We analyze reliability--granularity trade-off from both theoretical and empirical perspectives.

\vspace{-2mm}
\subsection{Theoretical Analysis}
\label{sec:analysis_theory}
\vspace{-1mm}

Let $T$ denote the visual representation of image $I$ produced by the
model's visual encoder.
For two comparable semantic granularities, let $\Omega_f$ and $\Omega_c$
be the corresponding fine- and coarse-grained cut sets, and let $Y_f$ and
$Y_c$ denote their semantic labels.
We define the refinement information deficit as
$\eta_{f\mid c}=H(Y_f\mid T,Y_c)$ and use it to quantify the degree of
semantic refinement.
For a predictive distribution $\widehat P_\Omega(\cdot\mid T)$, we define
the log-score uncertainty risk as
$\mathcal R_{\mathrm{ls}}(\widehat P_\Omega)
=\mathbb E[-\log \widehat P_\Omega(Y_\Omega\mid T)]$.

\begin{theorem}[Uncertainty-Risk Gap]
\label{thm:semantic_risk_monotonicity}
Under Assumption~\ref{assumption:appendix_average_hierarchy_stability}
in Appendix~\ref{sec:appendix_theoretical_analysis}, the expected
coarse-to-fine risk gap is lower bounded by the expected refinement
information deficit:
\vspace{-2mm}
\begin{equation}
\small
\mathbb E_{(f,c)}
\left[
\mathcal R_{\mathrm{ls}}(\widehat P_f)
-
\mathcal R_{\mathrm{ls}}(\widehat P_c)
\right]
\ge
\mathbb E_{(f,c)}
\left[
\eta_{f\mid c}
\right].
\label{eq:semantic_risk_monotonicity_main}
\end{equation}
\end{theorem}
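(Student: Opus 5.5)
The plan is to decompose the fine-grained log-score risk with the chain rule, using the fact that the coarse label is a deterministic function of the fine label. Since $\Omega_f$ refines $\Omega_c$, there is a map $\pi:\Omega_f\to\Omega_c$ with $Y_c=\pi(Y_f)$. Any predictive distribution $\widehat P_f(\cdot\mid T)$ then induces a coarse marginal $\widehat P_{f\to c}(y_c\mid T)=\sum_{y_f\in\pi^{-1}(y_c)}\widehat P_f(y_f\mid T)$ and a conditional $\widehat P_f(y_f\mid T,y_c)$. Because $Y_c$ is determined by $Y_f$, we have pointwise
\[
-\log \widehat P_f(Y_f\mid T) = -\log \widehat P_{f\to c}(Y_c\mid T) - \log \widehat P_f(Y_f\mid T,Y_c).
\]
Taking expectations gives $\mathcal R_{\mathrm{ls}}(\widehat P_f)=\mathbb E[-\log\widehat P_{f\to c}(Y_c\mid T)]+\mathbb E[-\log\widehat P_f(Y_f\mid T,Y_c)]$.

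Next I bound the second term from below by Gibbs' inequality. Write the cross-entropy as the true conditional entropy plus a KL term:
\[
\mathbb E[-\log \widehat P_f(Y_f\mid T,Y_c)] = H(Y_f\mid T,Y_c)+\mathbb E_{T,Y_c}\bigl[\mathrm{KL}(P(\cdot\mid T,Y_c)\,\|\,\widehat P_f(\cdot\mid T,Y_c))\bigr]\ge \eta_{f\mid c}.
\]
This yields the per-pair inequality
\[
\mathcal R_{\mathrm{ls}}(\widehat P_f)-\mathcal R_{\mathrm{ls}}(\widehat P_c)\ge \eta_{f\mid c}+\bigl(\mathbb E[-\log\widehat P_{f\to c}(Y_c\mid T)]-\mathcal R_{\mathrm{ls}}(\widehat P_c)\bigr).
\]

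The remaining bracket compares two coarse predictors: the model's directly produced coarse distribution $\widehat P_c$ and the coarse marginal obtained by aggregating its fine distribution. Nothing forces this bracket to be nonnegative for a single pair, since the model's two heads need not be coherent. This is where Assumption~\ref{assumption:appendix_average_hierarchy_stability} must enter, and I expect it to be the main obstacle. I would read "average hierarchy stability" as the statement that, on average over comparable pairs $(f,c)$, the aggregated fine prediction is no better at the coarse level than the direct coarse prediction. Formally, this would mean $\mathbb E_{(f,c)}\bigl[\mathcal R_{\mathrm{ls}}(\widehat P_{f\to c})-\mathcal R_{\mathrm{ls}}(\widehat P_c)\bigr]\ge 0$, or some slack-free version of it. Under that reading, taking $\mathbb E_{(f,c)}$ of the per-pair inequality and dropping the nonnegative averaged bracket gives \eqref{eq:semantic_risk_monotonicity_main}.

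If the assumption is stated differently, I would adapt only this last step. For example, it may bound the averaged coarse-level mismatch by the averaged KL slack from the previous step; in that case the two terms cancel instead of each being nonnegative. The chain-rule and Gibbs steps remain routine in either case.
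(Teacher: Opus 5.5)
Your proposal is correct and essentially matches the paper's proof: the paper also inserts the induced coarse distribution $\widehat P_{f\to c}$ and writes the per-pair gap as $\eta_{f\mid c}+\lambda_{f\mid c}-\mu_{f\mid c}$, with $\mu_{f\mid c}=\mathcal R_{\mathrm{ls}}(\widehat P_c)-\mathcal R_{\mathrm{ls}}(\widehat P_{f\to c})$, and its assumption is exactly your reading, $\mathbb E_{(f,c)}[\mu_{f\mid c}]\le 0$. The only cosmetic difference is in $\lambda_{f\mid c}$: the paper defines it as a difference of two KL divergences and gets $\lambda_{f\mid c}\ge 0$ from the contraction of KL under coarsening, whereas your conditional KL term equals that same $\lambda_{f\mid c}$ by the chain rule for KL, so you obtain nonnegativity directly from Gibbs' inequality.
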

\vspace{-2mm}
The formal definitions, assumption, and proof are provided in
Appendix~\ref{sec:appendix_theoretical_analysis}.
Theorem~\ref{thm:semantic_risk_monotonicity} formalizes the
reliability--granularity trade-off in a hierarchical semantic space,
showing that generating at a finer granularity is inherently more error-prone on average.
\vspace{-2mm}

\subsection{Reliability--Granularity Trade-off}
\label{sec:analysis_prompt_tradeoff}
\vspace{-1mm}

\vspace{-2mm}
\begin{figure}[ht]
    \centering
    \includegraphics[width=0.9\linewidth]{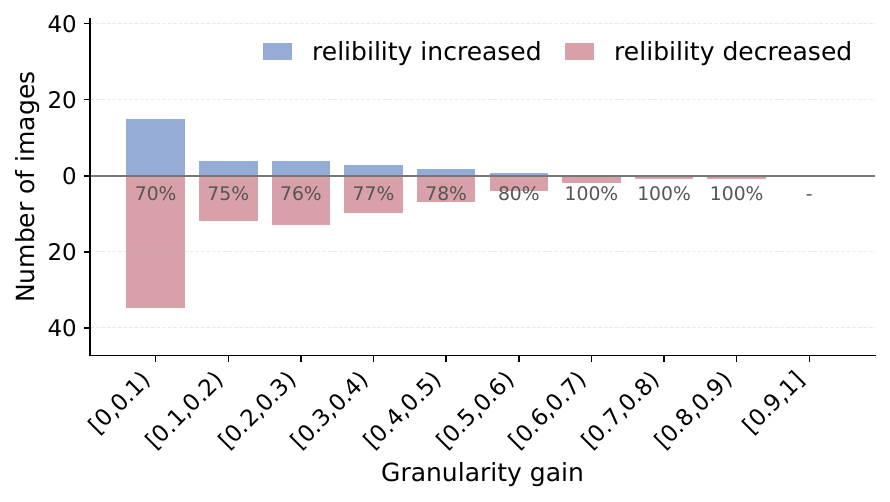}
    \caption{
    Reliability changes from conservative to aggressive prompting across per-image granularity gains.
    }
    \label{fig:prompt_specificity_tradeoff}
\end{figure}
\vspace{-3.5mm}
We conduct an empirical analysis to examine the trade-off between granularity and reliability. We prompt a model to describe the same input images using either a conservative prompt or an aggressive prompt that asks for maximal detail.
We then measure how reliability changes, as shown in Figure~\ref{fig:prompt_specificity_tradeoff}.
Finer-grained generations are more frequently associated with reliability drops.

This suggests that simply eliciting more specific descriptions is insufficient. Instead, models should generate finer-grained descriptions only when the additional specificity can be stated reliably.

 \vspace{-1mm}
\section{\textsc{GranFact}}
\label{sec:benchmark}
\vspace{-2mm}
\textsc{GranFact} consists of a manually annotated fine-grained image dataset and an evaluation protocol.



\vspace{-1mm}
\subsection{Evaluation Set Construction}
\label{sec:benchmark_construction}

\begin{figure*}[t]
    \centering
    \includegraphics[width=\linewidth]{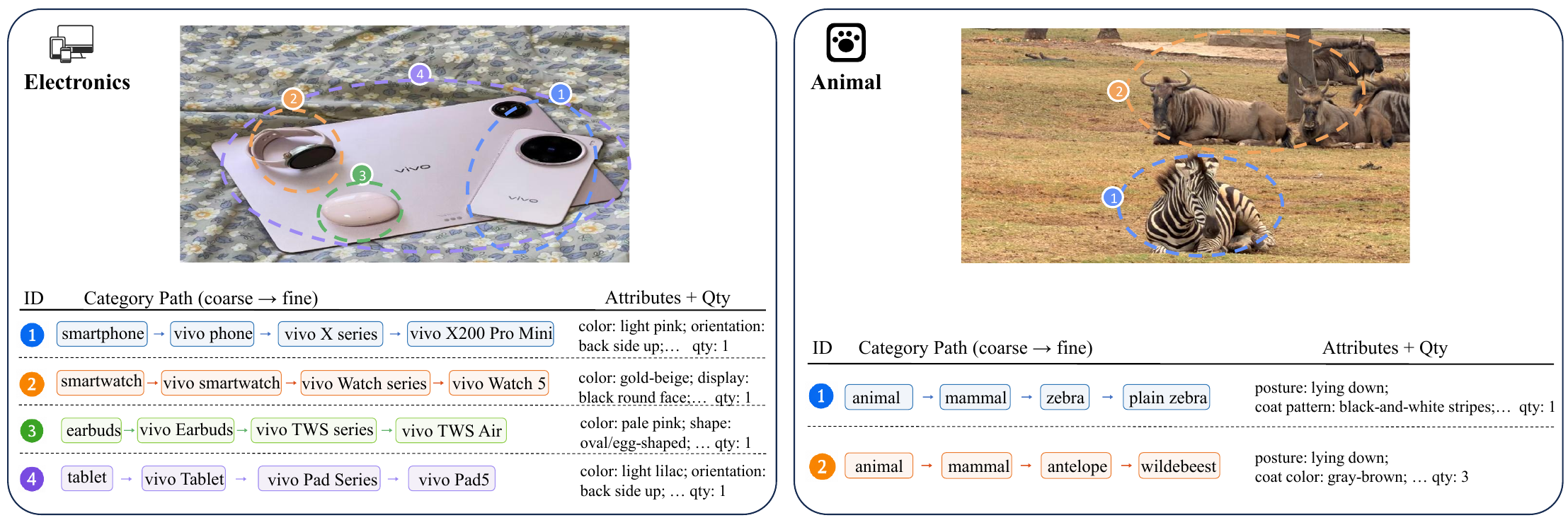}
    \caption{Qualitative examples of dataset annotations across different domains.
}
    \label{fig:dataset_example}
\end{figure*}
\begin{figure*}[t]
    \centering
    \captionsetup[subfigure]{
        font=small,
        labelfont=bf,
        labelformat=parens,
        labelsep=none,
        justification=centering,
        skip=2pt
    }

    \newcommand{\subfigheight}{3.15cm}

    \begin{subfigure}[t]{0.31\textwidth}
        \centering
        \includegraphics[height=\subfigheight, width=\linewidth, keepaspectratio]
        {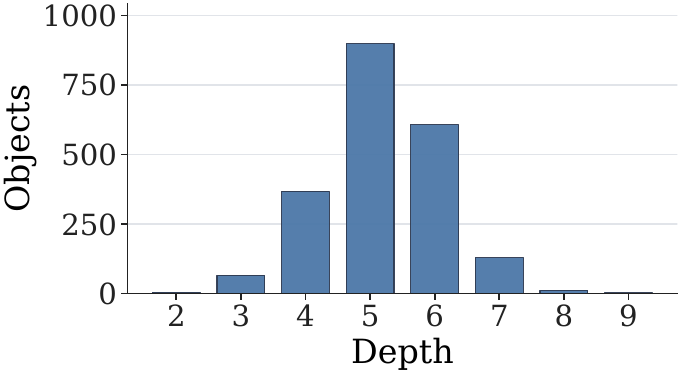}
        \caption{}
        \label{fig:level_distribution}
    \end{subfigure}
    \hspace{0.035\textwidth}
    \begin{subfigure}[t]{0.31\textwidth}
        \centering
        \includegraphics[height=\subfigheight, width=\linewidth, keepaspectratio]
        {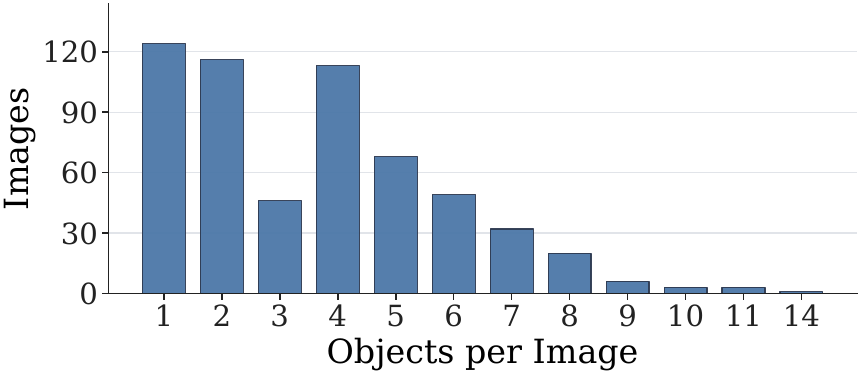}
        \caption{}
        \label{fig:objects_per_image}
    \end{subfigure}
    \hspace{0.035\textwidth}
    \begin{subfigure}[t]{0.25\textwidth}
        \centering
        \includegraphics[height=\subfigheight, keepaspectratio]{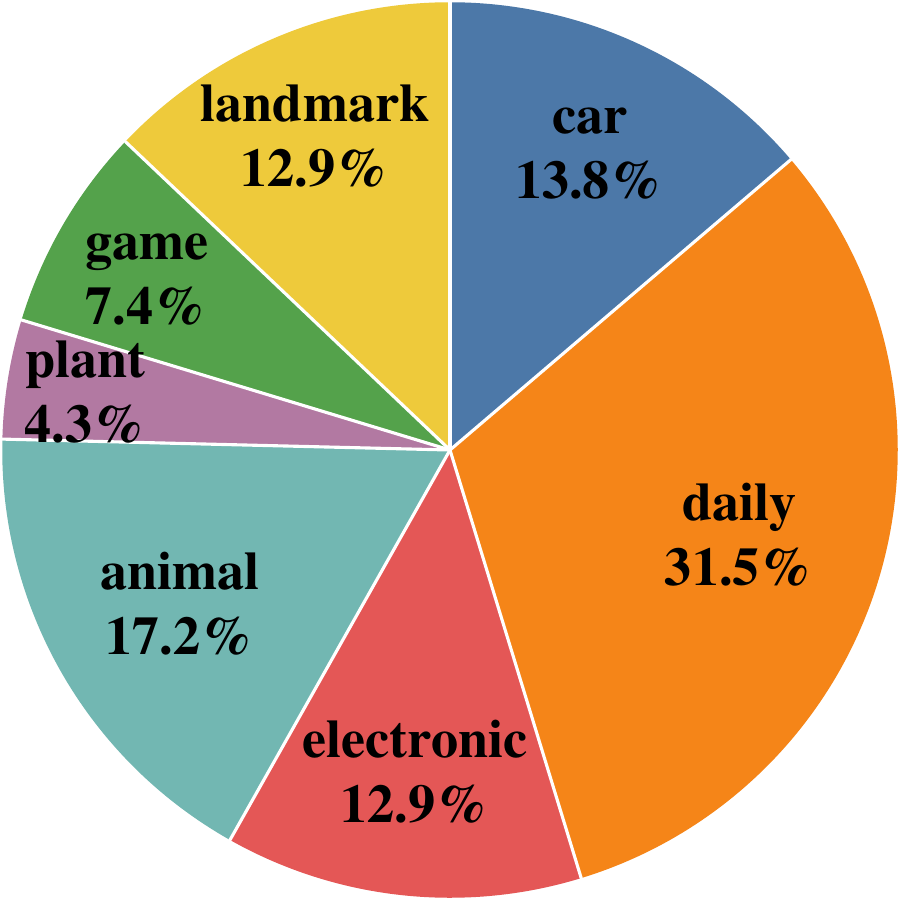}
        \caption{}
        \label{fig:domain_distribution}
    \end{subfigure}

    \vspace{-0.6em}
    \caption{
    Dataset statistics of \textsc{GranFact}.
        Panels (a) and (b) include only objects with multi-granularity
        category annotations.
        (a) Distribution of the maximum category-path depth per annotated object.
        (b) Distribution of multi-granularity annotated objects per image.
        (c) Domain distribution of images.
    }
    \label{fig:dataset_statistics}
    \vspace{-0.8em}
\end{figure*}

\textsc{GranFact} contains 581 images collected through web sourcing and real-world photography across 7 visual domains: daily objects, plants, animals, cars, electronics, landmarks, and games.
Each annotated entity is manually labeled with coarse-to-fine categories, quantity, and visual attributes, with examples shown in Figure~\ref{fig:dataset_example}.
Figure~\ref{fig:dataset_statistics} summarizes granularity depth, annotated objects per image, and domain distribution, with further details provided in Appendix~\ref{sec:appendix_benchmark_construction}.



Let $\mathcal{Z}$ denote the evaluation set. 
Each image is annotated with a set of GT entities $\mathcal{G}=\{g_j\}_{j=1}^{n}$. 
Each entity $g_j$ is labeled with a coarse-to-fine category $\mathcal{H}_j=(c_{j,1},\ldots,c_{j,L_j})$, a quantity capacity $d_j$, and visual attributes, where $c_{j,1}$ and $c_{j,L_j}$ denote the coarsest and finest categories, respectively.

\vspace{-2mm}
\subsection{Hierarchy-Aware Evaluation Algorithm}
\label{sec:evaluation_algorithm}
\vspace{-1mm}
We propose a hierarchy-aware evaluation pipeline for open-ended visual descriptions.
It first constructs a feasible set of prediction--GT assignments, then selects the final assignment with a reliability-prioritized, granularity-aware objective.



\vspace{-2mm}
\subsubsection{Response Parsing}
\label{sec:response_parsing}
Given an image and its corresponding MLLM response, we use an LLM to parse the free-form response into structured prediction entities.
The parsed prediction set is denoted as $\mathcal{P}=\{p_i\}_{i=1}^{m}$, where each prediction entity $p_i$ consists of a predicted category $q_i$, a predicted quantity $s_i$, and a set of non-quantitative attributes $\mathbf{u}_i$.
The full parsing details are provided in Appendix~\ref{sec:appendix_parser}.


\vspace{-2mm}
\subsubsection{Prediction--GT Assignment}
\label{sec:flow_formulation}


After parsing an open-ended response into structured prediction entities, evaluation requires assigning predicted entities to annotated GT entities.
This assignment is non-trivial because the number of predicted entities and GT entities may differ, while model outputs can appear at arbitrary semantic granularities and may include hallucinated entities.
We first augment the GT entity set with a dummy hallucination entity $g_{n+1}$, obtaining $\widetilde{\mathcal{G}}=\mathcal{G}\cup\{g_{n+1}\}$.
We then define an assignment matrix $\mathbf{X}\in\mathbb{R}_{\ge 0}^{m\times(n+1)}$ to represent the quantity assignment from prediction entities to the augmented GT entity set.
Unlike a binary matching matrix, each entry in $\mathbf{X}$ specifies how much of a predicted quantity is assigned to a GT entity. For example, a prediction with quantity larger than one may be distributed across several compatible GT entities.

We constrain that an assignment matrix $\boldsymbol{X}$ must assign each predicted quantity to GT entities or the hallucination entity, \textit{i.e.}, $\sum_{j=1}^{n+1} x_{ij}=s_i$. 
Moreover, we also constrain the total quantity assigned to each GT entity $g_j$ to be at most its annotated capacity $d_j$, \textit{i.e.},  $\sum_{i=1}^{m}x_{ij}\le d_j$.
The hallucination entity $g_{n+1}$ is left unconstrained, so unsupported or over-counted predictions can always be assigned to it and counted as hallucination.
Overall, the constraints on $\boldsymbol{X}$ can be summarized as 
\vspace{-2mm}
\begin{equation}
\small
\label{eq:assignment_feasible_set}
\begin{aligned}
\mathcal{X}
=
\Big\{
&\mathbf{X}\in\mathbb{R}_{\ge 0}^{m\times(n+1)}
\ \Big| 
 \sum_{j=1}^{n+1} x_{ij}=s_i,\quad 1\le i\le m, \\
& \sum_{i=1}^{m}x_{ij}\le d_j,\quad 1\le j\le n
\Big\}.
\end{aligned}
\end{equation}
\vspace{-2mm}
\subsubsection{Objective of Prediction--GT Assignment}
\label{sec:flow_objective}
\label{sec:granularity_matching}
\label{sec:flow_alignment}

With the assignment set $\mathcal{X}$, we select the final assignment matrix using a reliability-prioritized objective.
This objective is based on two assignment scores, \textit{i.e.}, a reliability score $M(\mathbf{X})$ and a granularity-aware score $M_{\mathrm{gran}}(\mathbf{X})$.  
$M(\mathbf{X})$ measures the degree to which predicted entities are assigned to category-compatible GT entities, where an assignment is considered category-compatible if the predicted category matches the GT category or one of its ancestor categories.
 $M_{\mathrm{gran}}(\mathbf{X})$ further accounts for category granularity and attribute consistency.
The final assignment matrix maximizes these two scores in lexicographic order, giving primary priority to $M(\mathbf{X})$ and using $M_{\mathrm{gran}}(\mathbf{X})$ to distinguish assignments with the same reliability score, which can be modeled as\footnote{
Optimal assignments may be non-unique but yield identical metric values; see Appendix~\ref{sec:appendix_welldefined_metrics}.
}
\vspace{-1mm}
\begin{equation}
\small
\label{eq:lexicographic_flow}
\begin{aligned}
\mathbf{X}^{\star}
&\in
\arg\max_{\mathbf{X}' \in \mathcal{X}}
\quad
M_{\mathrm{gran}}(\mathbf{X}') \\
\mathrm{s.t.}\quad
\mathbf{X}'
&\in
\arg\max_{\mathbf{X}\in\mathcal{X}}
M(\mathbf{X}) .
\end{aligned}
\end{equation}

To compute the two assignment-level scores $M(\cdot)$ and $M_{\mathrm{gran}}(\cdot)$, we define a granularity-aware assignment score $w_{ij}$ for each prediction entity $p_i$ and GT entity $g_j$.
$w_{ij}$ assigns positive scores to category-compatible assignments, with higher scores for finer category levels and higher attribute consistency. It assigns negative scores to incompatible assignments and neutral scores to assignments to the dummy GT entity. Formally, $w_{ij}$ is computed as
\vspace{-1mm}
\begin{equation}
\small
w_{ij}
=
\begin{cases}
\dfrac{\ell_{ij}+a_{ij}}{L_j+1},
& j\le n,\ \ell_{ij}>0, \\[5pt]
-1,
& j\le n,\ \ell_{ij}=0, \\[3pt]
0,
& j=n+1 .
\end{cases}
\label{eq:edge_weight}
\end{equation}
$\ell_{ij}\in\{0,\ldots,L_j\}$ is the category granularity level assigned to $(p_i,g_j)$, where $\ell_{ij}\leq 0$ indicates category incompatibility and a larger value indicates a finer supported category level. 
$a_{ij}\in[0,1]$ is the attribute consistency score between $p_i$ and $g_j$, computed over their non-quantitative attributes.
Using these scores, $M(\mathbf{X})$ and $M_{\mathrm{gran}}(\mathbf{X})$ are computed as
\vspace{-2mm}
\begin{equation}
\small
\begin{aligned}
M(\mathbf{X})
&=
\sum_{i=1}^{m}
\sum_{j=1}^{n}
\mathbb{I}[w_{ij}>0]x_{ij}, \\
M_{\mathrm{gran}}(\mathbf{X})
&=
\sum_{i=1}^{m}
\sum_{j=1}^{n+1}
w_{ij}x_{ij}.
\end{aligned}
\label{eq:flow_scores}
\end{equation}

We solve the lexicographic assignment in Eq.~(\ref{eq:lexicographic_flow}) by casting the quantity assignment as a minimum-cost flow problem on a bipartite network~\citep{ahuja1988network} and solving it with a successive shortest augmenting-path algorithm on the residual network~\citep{klein1967primal}. 
Appendix~\ref{sec:appendix_eval_details} provides additional details on the overall evaluation algorithm.


\vspace{-2mm}
\subsection{Evaluation Metrics}
\vspace{-1mm}
\label{sec:flow_metrics}
We compute metrics by aggregating quantities at two levels.
For each example $z$, we aggregate over entities to obtain $S_z=\sum_i s_i$, $D_z=\sum_j d_j$, $M_z=M(\mathbf{X}^{\star}_z)$, and $M_{\mathrm{gran},z}=M_{\mathrm{gran}}(\mathbf{X}^{\star}_z)$.
We omit the subscript $z$ for dataset-level sums over examples, e.g., $S=\sum_{z\in\mathcal{Z}}S_z$ and similarly for $D$, $M$, and $M_{\mathrm{gran}}$.
\textsc{GranFact} reports granularity-neutral metrics and granularity-aware metrics.

\vspace{-2mm}
\paragraph{Granularity-neutral metrics.}
Granularity-neutral metrics measure whether predictions are assigned to category-compatible GT entities, regardless of how fine-grained the descriptions are.
The precision, recall, and F1 are
\vspace{-2mm}
\begin{equation}
\small
\mathrm{P}=\frac{M}{S},\quad
\mathrm{R}=\frac{M}{D},\quad
\mathrm{F1}=
\frac{2\mathrm{P}\mathrm{R}}{\mathrm{P}+\mathrm{R}} .
\label{eq:normal_prf}
\end{equation}
We also report the Image Reliability Rate that is
\vspace{-2mm}
\begin{equation}
\small
    \mathrm{IR}
    =
    \frac{1}{|\mathcal{Z}|}
    \sum_{z\in\mathcal{Z}}
    \mathbb{I}[M_z=S_z],
    \label{eq:image_reliability}
\end{equation}
where an example is reliable only when all predicted quantities are assigned to category-compatible GT entities.

\vspace{-2mm}
\paragraph{Granularity-aware metrics.}
Granularity-aware metrics further account for the specificity and attribute consistency of reliable predictions.
The granularity-weighted precision, recall, and F1 are
\vspace{-2mm}
\begin{equation}
\small
\begin{gathered}
\mathrm{P}_{\mathrm{gran}}=\frac{M_{\mathrm{gran}}}{S},
\quad
\mathrm{R}_{\mathrm{gran}}=\frac{M_{\mathrm{gran}}}{D}, \\
\mathrm{F1}_{\mathrm{gran}}
=
\frac{2\mathrm{P}_{\mathrm{gran}}\mathrm{R}_{\mathrm{gran}}}
{\mathrm{P}_{\mathrm{gran}}+\mathrm{R}_{\mathrm{gran}}}.
\end{gathered}
\label{eq:gran_prf}
\end{equation}
Finally, we report the granularity-aware counterpart of image reliability:
\vspace{-2mm}
\begin{equation}
\small
    \mathrm{GIR}
=
\frac{1}{|\mathcal{Z}|}
\sum_{z\in\mathcal{Z}}
\mathbb{I}[M_z=S_z]
\cdot
\frac{M_{\mathrm{gran},z}}{M_z},
    \label{eq:gran_image_reliability}
\end{equation}
where the granularity term is set to zero when $M_z=0$.
$\mathrm{GIR}$ gives non-zero credit only to fully reliable responses and weights them by their average granularity-aware score.
As an additional diagnostic, we also report the average granularity score of reliable predictions
\vspace{-2mm}
\begin{equation}
\small
    \mathrm{G}_{\mathrm{avg}}
    =
    \frac{M_{\mathrm{gran}}}{M},
    \label{eq:g_avg}
\end{equation}
where $\mathrm{G}_{\mathrm{avg}}$ is set to $0$ when $M=0$.

\vspace{-2mm}
\section{Reliability-prioritized DPO}
\label{sec:method}
\vspace{-1mm}

We propose Reliability-prioritized DPO (RP-DPO), a preference optimization method for fine-grained visual description generation.
It consists of reliability-guided semantic rollback (RSR), preference construction, and metric-margin DPO.

\vspace{-2mm}
\subsection{Reliability-Guided Semantic Rollback}
\vspace{-1mm}
\label{sec:rsr}
\begin{figure}[ht]
    \centering
    \includegraphics[width=0.9\columnwidth]{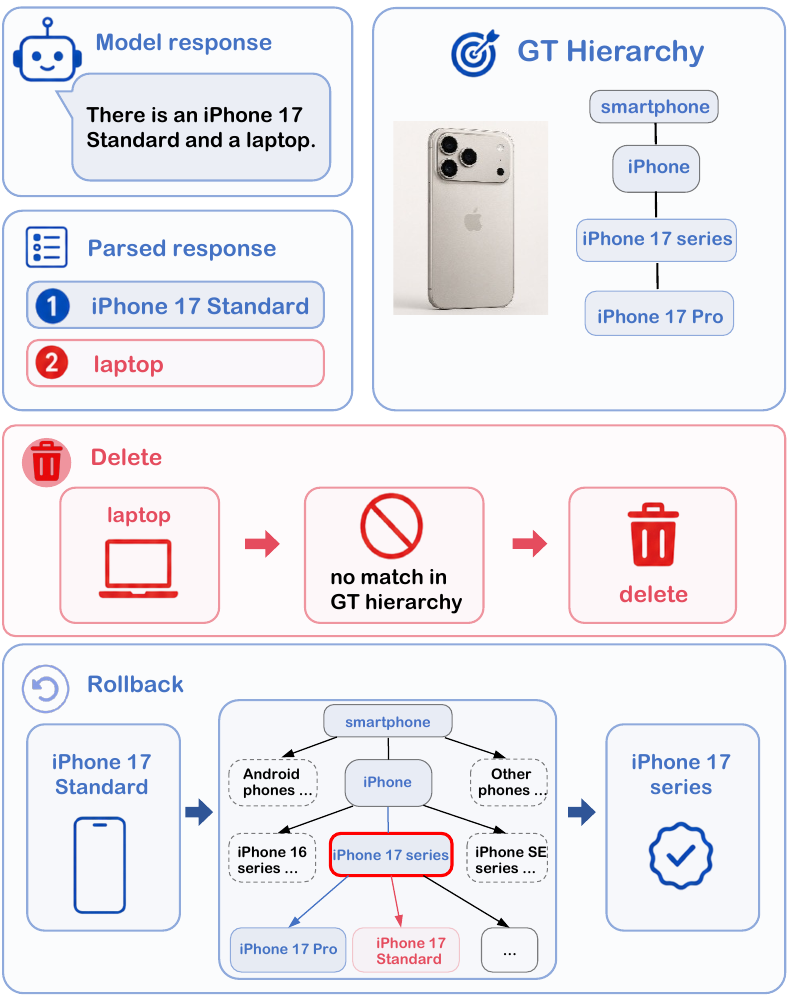}
    \caption{
    Illustration of RSR with \textsc{Delete} and \textsc{Rollback} operations.
    }
    \label{fig:rsr_example}
    \vspace{-3mm}
\end{figure}

Since an image often contains multiple objects, models may fail to produce correct descriptions for all objects in responses. To construct reliable positive samples for DPO, RSR converts erroneous responses into semantically valid ones by rolling incorrect predictions back to the coarse-grained ancestor category shared with GT objects. 

Specifically, we first parse the model response $y_{\mathrm{orig}}$ into prediction objects. For a hallucinated predicted object, we apply \textsc{Delete} or \textsc{Rollback} to correct it.
If a predicted object and its ancestor categories have no intersection with the hierarchical annotations of any GT entity,
RSR assigns a \textsc{Delete} to remove the predicted object.
If a coarser category of the predicted object matches an ancestor category of a GT object, RSR assigns a \textsc{Rollback} to replace it with the finest shared ancestor category.
Based on these two operations, we prompt an LLM with the original response and the corrected objects to produce a rectified response $y_{\mathrm{rect}}=\mathrm{RSR}(y_{\mathrm{orig}})$.
We illustrate an example in Figure~\ref{fig:rsr_example}.

\vspace{-2mm}
\subsection{Preference Construction}
\label{sec:pref_construction}
\vspace{-1mm}
%


We construct two complementary preference sets, $\mathcal{D}_{\mathrm{rel}}$ for reliability preferences and $\mathcal{D}_{\mathrm{gran}}$ for granularity preferences. For each image $I$, we sample $K$ candidate responses $\mathcal{Y}_{\mathrm{orig}}(I)=\{y_{\mathrm{orig}}^{(k)}\}_{k=1}^{K}$ and obtain $K$ rectified responses $\mathcal{Y}_{\mathrm{rect}}(I)=\{\mathrm{RSR}(y_{\mathrm{orig}}^{(k)})\}_{k=1}^{K}$.

Reliability preferences encourage models to avoid category-incompatible predictions by preferring RSR-rectified responses over their original versions.
We construct $\mathcal{D}_{\mathrm{rel}}=\{(I, y_{+}, y_{-})\}_{k}$ with $(y_{+}, y_{-})=(y_{\mathrm{rect}}^{(k)}, y_{\mathrm{orig}}^{(k)})$, where $y_{+}^{(k)}\sim\mathcal{Y}_{\mathrm{rect}}(I)$ and $y_{-}^{(k)}\sim\mathcal{Y}_{\mathrm{orig}}(I)$.



Granularity preferences encourage more informative responses in the RSR-rectified response pool.
We construct $\mathcal{D}_{\mathrm{gran}}=\{(I,y_{+},y_{-})\}_{k}$, where $y_{+},y_{-}\sim \mathcal{Y}_{\mathrm{rect}}(I)$ and 
$\mathrm{F1}_{\mathrm{gran}}(I,y_{+})>\mathrm{F1}_{\mathrm{gran}}(I,y_{-})+\tau$, with $\tau$ filtering out pairs with similar scores.


\vspace{-2mm}
\subsection{Metric-Margin DPO}
\label{sec:margin_dpo}
\vspace{-1mm}

We optimize the model with a metric-margin DPO objective,
which is computed as
\vspace{-2mm}
\begin{equation}
\small
    \mathcal{L}
    =
    -
    \mathbb{E}_{(I,y_{+},y_{-})\sim\mathcal{D}}
    \log
    \sigma
    \left(
    \Delta_\theta(I,y_{+},y_{-}) - m
    \right),
    \label{eq:metric_margin_dpo}
\end{equation}
where $ \Delta_\theta(I,y_{+},y_{-})$ denotes a reward gap, and $m$ denotes the margin.
We assign an instance-specific margin to each preference pair $(y_{w}, y_{l})$.  A larger performance gap $\delta\in[0,1]$ of preference pairs leads to a larger margin, which is modeled as $m = \gamma + \alpha \delta$, where $\gamma$ and $\alpha$ are hyper-parameters.

The reward gap is defined as $\Delta_\theta(I,y_{+},y_{-})=r_\theta(I,y_{+})-r_\theta(I,y_{-})$.
$r_\theta(\cdot)$ denotes implicit DPO reward, which is computed as
\vspace{-2mm}
\begin{equation}
\small
    r_\theta(I,y)
    =
    \beta
    \log
    \frac{\pi_\theta(y \mid I)}
         {\pi_{\mathrm{ref}}(y \mid I)} ,
\end{equation}
where $\pi_\theta$ is the policy model, $\pi_{\mathrm{ref}}$ is the frozen reference model, and $\beta$ is the  temperature.

The computation of $m$ depends on the preference set. For reliability preferences $(I,y_{+},y_{-})\in\mathcal{D}_{\mathrm{rel}}$, $m_{\mathrm{rel}} =  \gamma_{\mathrm{rel}} + \alpha_{\mathrm{rel}} \delta_{\mathrm{rel}}$. With
$\mathrm{P}$ in Eq.~\eqref{eq:normal_prf}, $\delta_{\mathrm{rel}}$ is
\vspace{-2mm}
\begin{equation}
\small
\begin{aligned}
    \delta_{\mathrm{rel}}
=
\frac{\mathrm{P}(y_{+})-\mathrm{P}(y_{-})}
{\max_{(I,y_{+},y_{-})\in\mathcal{D}_{\mathrm{rel}}}\left(\mathrm{P}(y_{+})-\mathrm{P}(y_{-})\right)},
\end{aligned}
\end{equation}
where the denominator normalizes the gap into $[0,1]$. For granularity preferences $(I,y_{+},y_{-})\in\mathcal{D}_{\mathrm{gran}}$, $m_{\mathrm{gran}} =  \gamma_{\mathrm{gran}} + \alpha_{\mathrm{gran}} \delta_{\mathrm{gran}}$. $\delta_{\mathrm{gran}}$ is
\vspace{-2mm}
\begin{equation}
\small
\begin{aligned}
    \delta_{\mathrm{gran}}
=
\frac{\mathrm{F1}_{\mathrm{gran}}(y_{+})-\mathrm{F1}_{\mathrm{gran}}(y_{-})}
{\max_{(I,y_{+},y_{-})\in\mathcal{D}_{\mathrm{gran}}}\left(\mathrm{F1}_{\mathrm{gran}}(y_{+})-\mathrm{F1}_{\mathrm{gran}}(y_{-})\right)},
\end{aligned}
\end{equation}
where $\mathrm{F1}_{\mathrm{gran}}$ is defined in Eq.~\eqref{eq:gran_prf}.
We set $\gamma_{\mathrm{rel}}$ and $\alpha_{\mathrm{rel}}$  larger than $\gamma_{\mathrm{gran}}$ and $\alpha_{\mathrm{gran}}$ for  
$\mathcal{D}_{\mathrm{gran}}$, so reliability preferences impose overall larger margins than granularity preferences. This encourages reliability-first optimization while favoring finer-grained responses when reliability is satisfied.

The loss of $\mathcal{D}_{\mathrm{rel}}$ and $\mathcal{D}_{\mathrm{gran}}$ is computed as 
\vspace{-2mm}
\begin{equation}
\small
   \begin{aligned}
        \mathcal{L}_{\mathrm{rel}}
    =
    -
    \mathbb{E}_{(I,y_{+},y_{-})\sim \mathcal{D}_{\mathrm{rel}} }
    \log
    \sigma
    \left(
    \Delta_\theta - m_{\mathrm{rel}}
    \right),\\
     \mathcal{L}_{\mathrm{gran}}
    =
    -
    \mathbb{E}_{(I,y_{+},y_{-})\sim \mathcal{D}_{\mathrm{gran}} }
    \log
    \sigma
    \left(
    \Delta_\theta - m_{\mathrm{gran}}
    \right).
   \end{aligned}
    \label{eq:metric_margin_dpo}
\end{equation}
The final objective is $\mathcal{L}_{\mathrm{total}}=\mathcal{L}_{\mathrm{rel}}+\lambda\mathcal{L}_{\mathrm{gran}}$, where $\lambda$ is a hyper-parameter. 

\vspace{-2mm}
\section{Experiments}
\label{sec:experiments}
\vspace{-1mm}
We conduct experiments to answer three questions:
(1) How do existing MLLMs perform on reliability-prioritized fine-grained generation under different prompting styles?
(2) Can our method improve both reliability and supported granularity across prompting styles?
(3) Which components of our method contribute to the improvement?

\vspace{-2mm}
\subsection{Experimental Setup}
\label{sec:exp_setup}

\paragraph{Benchmark and metrics.}
We evaluate all models on \textsc{GranFact} using the granularity-neutral and granularity-aware metrics defined in Section~\ref{sec:flow_metrics}.
The former measures reliability at any valid hierarchy level, while the latter further accounts for the specificity of reliable descriptions.

\vspace{-2mm}
\paragraph{Models and prompts.}
We evaluate a diverse set of open-source and closed-source MLLMs, including InstructBLIP-Vicuna-7B~\cite{dai2023instructblip}, InternVL3.5-8B~\cite{zhu2025internvl3}, Kimi-K2.6~\cite{moonshot2026kimik26}, Qwen2.5-VL-7B~\cite{bai2025qwen25vl}, Qwen3-VL-8B-Instruct~\cite{bai2025qwen3vl}, GLM-4.6V~\cite{hong2025glmv}, GPT-5.4~\cite{openai2026gpt54}, and Gemini-3.1-Flash-Lite~\cite{googledeepmind2026gemini31flashlite}.
For each model, we evaluate three prompt styles: \textit{conservative}, \textit{neutral}, and \textit{aggressive}, which induce different levels of generation specificity.
Implementation details, model versions, decoding settings, and full prompts are provided in Appendix~\ref{sec:appendix_exp_details}.

\vspace{-2mm}
\paragraph{Training data.}
For training, we construct a small auxiliary training set using category labels from the \textsc{GranFact} annotations to retrieve and synthesize additional single-object images.
This results in 525 structured training images, all of which are disjoint from the \textsc{GranFact} evaluation set.
More details are provided in Appendix~\ref{sec:appendix_training_data}.
\vspace{-2mm}
\subsection{Main Results on \textsc{GranFact}}
\label{sec:main_results}
\vspace{-1mm}
\begin{table*}[t]
\centering
\small
\setlength{\tabcolsep}{3.0pt}
\begin{tabular}{llccccccccc}
\toprule
\multirow{2}{*}{Model}
& \multirow{2}{*}{Prompt Style}
& \multicolumn{4}{c}{Granularity-neutral Metrics}
& \multicolumn{5}{c}{Granularity-aware Metrics} \\
\cmidrule(lr){3-6}
\cmidrule(lr){7-11}
&
& $\mathrm{IR}$
& $\mathrm{P}$
& $\mathrm{R}$
& $\mathrm{F1}$
& $\mathrm{GIR}$
& $\mathrm{P}_{\mathrm{gran}}$
& $\mathrm{R}_{\mathrm{gran}}$
& $\mathrm{F1}_{\mathrm{gran}}$
& $\mathrm{G}_{\mathrm{avg}}$ \\
\midrule

\multicolumn{11}{l}{\textit{Comparable-Scale Open-Weight Models}} \\
\midrule

\multirow{3}{*}{InstructBLIP-Vicuna-7B}
& Aggressive   & 0.1962 & 0.3897 & 0.3079 & 0.3440 & 0.0904 & 0.1627 & 0.1286 & 0.1437 & 0.4176 \\
& Neutral      & 0.3959 & 0.6125 & 0.2397 & 0.3446 & 0.1352 & 0.2192 & 0.0858 & 0.1233 & 0.3578 \\
& Conservative & 0.3890 & 0.6211 & 0.2330 & 0.3388 & 0.1323 & 0.2281 & 0.0855 & 0.1244 & 0.3672 \\
\midrule

\multirow{3}{*}{InternVL3.5-8B}
& Aggressive   & 0.3133 & 0.4817 & 0.5595 & 0.5177 & 0.1632 & 0.2421 & 0.2812 & 0.2602 & 0.5026 \\
& Neutral      & 0.3873 & 0.6246 & 0.5257 & 0.5709 & 0.1919 & 0.3035 & 0.2554 & 0.2774 & 0.4858 \\
& Conservative & 0.4062 & 0.6330 & 0.5574 & 0.5928 & 0.1816 & 0.2894 & 0.2548 & 0.2710 & 0.4571 \\
\midrule

\multirow{3}{*}{Qwen2.5-VL-7B}
& Aggressive   & 0.3838 & 0.6258 & 0.5551 & 0.5883 & 0.2126 & 0.3416 & 0.3029 & 0.3211 & 0.5458 \\
& Neutral      & 0.4923 & 0.6699 & 0.5130 & 0.5811 & 0.2624 & 0.3529 & 0.2702 & 0.3061 & 0.5268 \\
& Conservative & 0.4647 & 0.6701 & 0.5130 & 0.5811 & 0.2229 & 0.3229 & 0.2472 & 0.2800 & 0.4819 \\
\midrule

\multirow{3}{*}{Qwen3-VL-8B-Instruct}
& Aggressive   & 0.3718 & 0.6591 & 0.6469 & 0.6529 & 0.2450 & 0.3981 & 0.3907 & 0.3944 & 0.6040 \\
& Neutral      & 0.4251 & 0.6865 & 0.6130 & 0.6477 & 0.2635 & 0.4047 & 0.3613 & 0.3818 & 0.5895 \\
& Conservative & 0.4664 & 0.7198 & 0.6511 & 0.6838 & 0.2573 & 0.3857 & 0.3488 & 0.3663 & 0.5358 \\
\midrule

\multirow{3}{*}{\textbf{Ours (Qwen3-VL-8B)}}
& Aggressive   & 0.4017 & 0.6719 & 0.6480 & 0.6597 & 0.2655 & 0.4183 & \textbf{0.4034} & \textbf{0.4107} & \textbf{0.6225} \\
& Neutral      & 0.4897 & 0.7027 & 0.6163 & 0.6567 & \textbf{0.2938} & \textbf{0.4217} & 0.3698 & 0.3941 & 0.6001 \\
& Conservative & \textbf{0.4940} & \textbf{0.7242}  & \textbf{0.6514} & \textbf{0.6859} & 0.2742 & 0.3935 & 0.3539 & 0.3726 & 0.5433 \\
\midrule

\multicolumn{11}{l}{\textit{Frontier / Large-Scale Reference Models}} \\
\midrule

\multirow{3}{*}{Kimi-K2.6}
& Aggressive   & 0.4527 & 0.6682 & 0.5337 & 0.5934 & 0.2202 & 0.4166 & 0.3328 & 0.3700 & \textbf{0.6235} \\
& Neutral      & 0.4647 & 0.6844 & 0.4954 & 0.5747 & 0.2332 & 0.4193 & 0.3035 & 0.3521 & 0.6127 \\
& Conservative & 0.4923 & 0.7126 & 0.5357 & 0.6116 & 0.2105 & 0.3786 & 0.2846 & 0.3249 & 0.5313 \\
\midrule

\multirow{3}{*}{GLM-4.6V}
& Aggressive   & 0.4057 & 0.6751 & 0.7075 & 0.6910 & 0.2533 & 0.3881 & 0.4067 & 0.3972 & 0.5748 \\
& Neutral      & 0.4603 & 0.7093 & 0.6671 & 0.6876 & 0.2717 & 0.3881 & 0.3649 & 0.3762 & 0.5471 \\
& Conservative & 0.4922 & 0.7530 & 0.7128 & 0.7323 & 0.2654 & 0.3803 & 0.3600 & 0.3699 & 0.5051 \\
\midrule

\multirow{3}{*}{GPT-5.4}
& Aggressive   & 0.2754 & 0.5894 & 0.7008 & 0.6403 & 0.1684 & 0.3478 & 0.4136 & 0.3779 & 0.5901 \\
& Neutral      & 0.4983 & 0.7467 & 0.6318 & 0.6845 & 0.2752 & 0.3936 & 0.3331 & 0.3608 & 0.5271 \\
& Conservative & 0.4879 & 0.7612 & 0.6777 & 0.7170 & 0.2369 & 0.3616 & 0.3219 & 0.3406 & 0.4750 \\
\midrule

\multirow{3}{*}{Gemini-3.1-Flash-Lite}
& Aggressive   & 0.4672 & 0.7510 & 0.7292 & 0.7400 & 0.3096 & 0.4595 & \textbf{0.4461} & \textbf{0.4527} & 0.6118 \\
& Neutral      & 0.4836 & 0.7571 & 0.7019 & 0.7284 & \textbf{0.3141} & \textbf{0.4622} & 0.4285 & 0.4447 & 0.6105 \\
& Conservative & \textbf{0.5112} & \textbf{0.7722} & \textbf{0.7424} & \textbf{0.7570} & 0.3019 & 0.4265 & 0.4101 & 0.4182 & 0.5524 \\

\bottomrule
\end{tabular}
\caption{
Performance on \textsc{GranFact} under different prompt styles.
Models are grouped into comparable-scale open-weight models and frontier/large-scale reference models.
}
\label{tab:model_benchmark}
\end{table*}



Table~\ref{tab:model_benchmark} summarizes the main results on \textsc{GranFact}. We highlight four key observations below.
\paragraph{Reliability--granularity trade-off is prevalent.}
Aggressive prompting generally increases granularity but reduces reliability across models.
For example, compared with the conservative prompt, the aggressive prompt increases $\mathrm{G}_{\mathrm{avg}}$ for Qwen3-VL-8B-Instruct from $0.5358$ to $0.6040$, while decreasing $\mathrm{IR}$ from $0.4664$ to $0.3718$ and $\mathrm{F1}$ from $0.6838$ to $0.6529$.
This shows that prompting models to be more specific does not necessarily make their fine-grained descriptions reliable.
\vspace{-2mm}
\paragraph{Image-level reliability remains difficult.}
Across models, $\mathrm{IR}$ is much lower than precision, indicating that image-level reliability remains challenging in open-ended visual description.
For example, Gemini-3.1-Flash-Lite reaches a precision of $0.7722$ under the conservative prompt, but its $\mathrm{IR}$ is only $0.5112$.
This suggests that a full response can still contain unreliable claims even when much of its generated content is grounded.
\vspace{-2mm}
\paragraph{Frontier models are stronger in reliability yet fragile in fine-grained generation.}
Frontier and large-scale reference models generally achieve stronger reliability than comparable-scale open-weight models, with Gemini-3.1-Flash-Lite obtaining the best $\mathrm{IR}$, $\mathrm{P}$, and $\mathrm{F1}$ under the conservative prompt.
However, this reliability advantage weakens when the model is pushed toward finer-grained generation: for Gemini-3.1-Flash-Lite, $\mathrm{G}_{\mathrm{avg}}$ increases from $0.5524$ to $0.6118$ under the aggressive prompt, while $\mathrm{IR}$ drops from $0.5112$ to $0.4672$.
This shows that even stronger MLLMs remain vulnerable when generating fine-grained descriptions.
\vspace{-2mm}
\paragraph{Our method improves fine-grained generation without sacrificing reliability.}
Compared with Qwen3-VL-8B-Instruct, the proposed RP-DPO  method improves both reliability and granularity under the same prompt styles.
Under the aggressive prompt, it improves $\mathrm{IR}$ from $0.3718$ to $0.4017$ and $\mathrm{F1}_{\mathrm{gran}}$ from $0.3944$ to $0.4107$, while also increasing $\mathrm{G}_{\mathrm{avg}}$ from $0.6040$ to $0.6225$.
This indicates that our method can improve fine-grained generation without making the model either overly conservative or prone to unreliable fine-grained claims. Moreover, our method achieves comparable performance to the largest-scale model on the $G_{\mathrm{avg}}$ metric, further demonstrating the superiority of our method.

\vspace{-2mm}
\subsection{Ablation Study}
\label{sec:ablation}
\vspace{-1mm}
We conduct ablations under the aggressive prompt setting and compare the full method with variants that remove RSR, reliability preferences, granularity preferences, or metric-margin optimization.

\begin{table}[ht]
\centering
\small
\setlength{\tabcolsep}{2.2pt}
\resizebox{\linewidth}{!}{
\begin{tabular}{l r@{\,}c r@{\,}c r@{\,}c r@{\,}c r@{\,}c}
\toprule
Method
& \multicolumn{2}{c}{$\mathrm{IR}$}
& \multicolumn{2}{c}{$\mathrm{F1}$}
& \multicolumn{2}{c}{$\mathrm{GIR}$}
& \multicolumn{2}{c}{$\mathrm{F1}_{\mathrm{gran}}$}
& \multicolumn{2}{c}{$\mathrm{G}_{\mathrm{avg}}$} \\
\midrule
Baseline
& 0.3718 & 
& 0.6529 & 
& 0.2450 & 
& 0.3944 & 
& 0.6040 &  \\
Full
& 0.4017 & \up
& 0.6597 & \up
& \textbf{0.2655} & \up
& \textbf{0.4107} & \up
& 0.6225 & \up \\
w/o RSR
& 0.3374 & \down
& 0.6389 & \down
& 0.2317 & \down
& 0.3963 & \up
& 0.6203 & \up \\
w/o Rel.
& 0.2845 & \down
& 0.6102 & \down
& 0.1938 & \down
& 0.3923 & \down
& 0.6429 & \up \\
w/o Gran.
& 0.4223 & \up
& 0.6691 & \up
& 0.2602 & \up
& 0.3870 & \down
& 0.5784 & \down \\
w/o Margin
& 0.3754 & \up
& 0.6337 & \down
& 0.2441 & \down
& 0.3881 & \down
& 0.6125 & \up \\
\bottomrule
\end{tabular}
}
\caption{
Ablation study under the aggressive prompt setting.
Arrows compare each result with the baseline.
Bold highlights the best results on $\mathrm{GIR}$ and $\mathrm{F1}_{\mathrm{gran}}$, which jointly account for reliability and granularity.
}
\label{tab:ablation}
\end{table}


As shown in Table~\ref{tab:ablation}, the reliability-prioritized components are crucial for preserving factual reliability.
Specifically, removing RSR, reliability preferences, or metric-margin lowers $\mathrm{F1}$ and $\mathrm{GIR}$, although these variants sometimes yield higher $\mathrm{G}_{\mathrm{avg}}$, indicating more aggressive but less reliable fine-grained descriptions.
In contrast, removing granularity preferences improves $\mathrm{IR}$ and $\mathrm{F1}$ but substantially lowers $\mathrm{F1}_{\mathrm{gran}}$ and $\mathrm{G}_{\mathrm{avg}}$, suggesting that the model becomes overly conservative.
Overall, the full method achieves the best $\mathrm{GIR}$ and $\mathrm{F1}_{\mathrm{gran}}$, showing that reliability-oriented components suppress unsupported specificity while granularity-oriented preferences prevent the model from collapsing to coarse descriptions.

\vspace{-2mm}
\section{Conclusion}
\label{sec:conclusion}
\vspace{-1mm}
In this paper, we study reliability-prioritized fine-grained generation in MLLMs, where models should generate the finest visual descriptions that remain reliably supported by the image. We introduce \textsc{GranFact}, a granularity-aware benchmark with expert-verified multi-object images with hierarchical coarse-to-fine annotations, together with a hierarchy-aware evaluation algorithm that measures both correctness and supported granularity. The proposed RP-DPO method prioritizes reliable predictions while encouraging finer-grained descriptions when reliability is ensured. Experiments on \textsc{GranFact} demonstrate that existing MLLMs still struggle to produce descriptions that are both reliable and fine-grained. Experimental results also demonstrate that the proposed RP-DPO method can improve fine-grained generation while preserving reliability.

\section*{Limitations}
\label{sec:limitations}

A limitation of our work is that the evaluation set is relatively small and covers a limited visual domains. Future work will expand the scale and diversity of \textsc{GranFact}.



\bibliography{custom}

\clearpage
\appendix

\section{Theoretical Analysis of Semantic Granularity}
\label{app:semantic_granularity}
\label{sec:appendix_theoretical_analysis}

\subsection{Semantic Hierarchy and Cut Sets}

Let $\mathscr{S}=(\mathcal{U},\mathcal{E})$ be a rooted semantic tree.
Each node $u\in\mathcal{U}$ represents a semantic category.
We write $u\preceq_{\mathscr{S}} v$ if $u=v$ or $u$ is a descendant of $v$,
or equivalently if $u$ is a refinement of $v$.
Let $\Lambda$ denote the set of leaf nodes.
For each node $u$, let $\Lambda(u)\subseteq\Lambda$ be the set of leaves
under $u$.
Thus,
\begin{equation}
    u\preceq_{\mathscr{S}} v
    \quad \Rightarrow \quad
    \Lambda(u)\subseteq \Lambda(v).
\end{equation}

A semantic granularity is represented by a cut set $\Omega\subseteq\mathcal{U}$.
A valid cut set satisfies
\begin{equation}
\label{eq:appendix_cut_set}
\small
\begin{aligned}
    \bigcup_{u\in\Omega}\Lambda(u) &= \Lambda, \\
    \Lambda(u)\cap\Lambda(v) &= \emptyset,
    \quad \forall u\neq v\in\Omega .
\end{aligned}
\end{equation}
That is, nodes in $\Omega$ form a partition of the leaf-level semantic space.

For two cut sets $\Omega_f$ and $\Omega_c$, we say that $\Omega_f$ is
finer than $\Omega_c$, denoted by $\Omega_f\sqsubseteq\Omega_c$, if
for every $u\in\Omega_f$, there exists $v\in\Omega_c$ such that
$u\preceq_{\mathscr{S}} v$.
Because $\Omega_c$ is a cut set, this node $v$ is unique.
Otherwise, two distinct nodes in $\Omega_c$ would have overlapping leaf
sets, contradicting Eq.~(\ref{eq:appendix_cut_set}).
We therefore define the coarsening map
\begin{equation}
\label{eq:appendix_coarsening_map}
\small
\kappa_{f\to c}(u)=v,
\quad
u\in\Omega_f,\ v\in\Omega_c,\ u\preceq_{\mathscr{S}} v .
\end{equation}

\subsection{Semantic Correctness and Bayes Risk}

We consider a task distribution over image--semantic pairs
$(I,\xi^\star)$, where $I\in\mathsf{Img}$ is an image and
$\xi^\star\in\Lambda$ is the finest true semantic state.
Let
\begin{equation}
    T=\phi(I)\in\mathsf{Tr}
\end{equation}
denote the visual representation obtained from the model's visual encoder.
In the analysis below, the model makes semantic predictions based on the
same visual representation $T$, while the target granularity is specified
by the cut set $\Omega$.
A prediction $u\in\Omega$ is semantically correct if its semantic scope
covers the true leaf:
\begin{equation}
    \xi^\star \preceq_{\mathscr{S}} u .
\end{equation}
Equivalently, $\xi^\star\in\Lambda(u)$.

For a cut set $\Omega$, the model can be modeled as a function
\begin{equation}
    h_{\Omega}:\mathsf{Tr}\to\Omega .
\end{equation}
Its semantic risk is defined as
\begin{equation}
\label{eq:appendix_semantic_risk}
\small
\varepsilon(h_{\Omega})
=
\Pr\!\left[
\xi^\star\npreceq_{\mathscr{S}} h_{\Omega}(T)
\right],
\end{equation}
where the probability is taken over the task distribution and the induced
visual representation $T=\phi(I)$.
The Bayes-optimal semantic risk under cut $\Omega$ is
\begin{equation}
\label{eq:appendix_bayes_risk}
\small
\varepsilon^\star(\Omega)
=
\inf_{h_{\Omega}:\mathsf{Tr}\to\Omega}
\varepsilon(h_{\Omega}).
\end{equation}

We also define the cut-induced semantic label associated with $\Omega$.
For each cut set $\Omega$, let $Y_\Omega\in\Omega$ be the unique node
satisfying
\begin{equation}
\label{eq:appendix_cut_induced_label}
    \xi^\star\preceq_{\mathscr{S}}Y_\Omega .
\end{equation}
The uniqueness follows from the fact that $\Omega$ forms a partition of
the leaf-level semantic space.
For two cut sets $\Omega_f\sqsubseteq\Omega_c$, we write
\begin{equation}
\label{eq:appendix_yf_yc}
    Y_f=Y_{\Omega_f},
    \qquad
    Y_c=Y_{\Omega_c}.
\end{equation}
By definition of the coarsening map, we have
\begin{equation}
\label{eq:appendix_yc_kappa_yf}
    Y_c=\kappa_{f\to c}(Y_f).
\end{equation}

\subsection{Proof of Monotonicity}

We restate the proposition below.

\begin{proposition}[Monotonicity of Bayes-Optimal Semantic Risk]
\label{prop:appendix_semantic_risk_monotonicity}
Let $\Omega_f$ and $\Omega_c$ be two cut sets of $\mathscr{S}$, where
$\Omega_f\sqsubseteq\Omega_c$.
Then
\begin{equation}
\small
    \varepsilon^\star(\Omega_c)
    \le
    \varepsilon^\star(\Omega_f).
\end{equation}
\end{proposition}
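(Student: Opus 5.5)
The plan is a coupling (push-forward) argument: every fine-grained predictor induces a coarse-grained predictor that is never less accurate. Fix an arbitrary measurable $h_{\Omega_f}:\mathsf{Tr}\to\Omega_f$ and define the composed predictor
\begin{equation}
h_{\Omega_c} := \kappa_{f\to c}\circ h_{\Omega_f} : \mathsf{Tr}\to\Omega_c ,
\end{equation}
which is well defined because the coarsening map in Eq.~(\ref{eq:appendix_coarsening_map}) is a function, by uniqueness of the covering node in a cut set.

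The key pointwise step is the event inclusion
\begin{equation}
\{\xi^\star\preceq_{\mathscr{S}} h_{\Omega_f}(T)\}\subseteq\{\xi^\star\preceq_{\mathscr{S}} h_{\Omega_c}(T)\}.
\end{equation}
Let $u=h_{\Omega_f}(T)$ and $v=\kappa_{f\to c}(u)$, so $u\preceq_{\mathscr{S}} v$ by definition. If $\xi^\star\preceq_{\mathscr{S}} u$, then $\xi^\star\preceq_{\mathscr{S}} v$ by transitivity of the descendant relation on the rooted tree. Equivalently, in leaf-set form, $\xi^\star\in\Lambda(u)\subseteq\Lambda(v)$. Taking complements and probabilities under the task distribution gives $\varepsilon(\kappa_{f\to c}\circ h_{\Omega_f})\le\varepsilon(h_{\Omega_f})$.

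Then chain with the definition of the infimum in Eq.~(\ref{eq:appendix_bayes_risk}). Since the composed map is an admissible predictor for $\Omega_c$,
\begin{equation}
\varepsilon^\star(\Omega_c)\le\varepsilon(\kappa_{f\to c}\circ h_{\Omega_f})\le\varepsilon(h_{\Omega_f})
\end{equation}
for every $h_{\Omega_f}$. Taking the infimum over $h_{\Omega_f}$ yields $\varepsilon^\star(\Omega_c)\le\varepsilon^\star(\Omega_f)$. Neither infimum needs to be attained.

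The only delicate point is measurability, which I expect to be a technicality rather than an obstacle. If the infimum ranges over measurable predictors, I need $\kappa_{f\to c}\circ h_{\Omega_f}$ to be measurable. This holds because $\Omega_f$ and $\Omega_c$ can be taken as discrete spaces with the power-set $\sigma$-algebra, so any map between them is measurable, and composition preserves measurability. If the tree is infinite, I would also check that $\{\xi^\star\in\Lambda(u)\}$ is an event. I would state this as a standing assumption rather than prove it.
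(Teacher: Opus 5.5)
Your proposal is correct and takes essentially the same approach as the paper. You coarsen an arbitrary fine predictor via $\kappa_{f\to c}$, use the error-event inclusion that follows from $\Lambda(u)\subseteq\Lambda(v)$, and then take the infimum over $h_{\Omega_f}$; your measurability remarks are extra care that the paper omits but does not need, since its predictors are arbitrary maps into discrete cut sets.
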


\begin{proof}
Take any predictor $h_f:\mathsf{Tr}\to\Omega_f$.
Using the coarsening map in Eq.~(\ref{eq:appendix_coarsening_map}), we construct a coarse-grained predictor
\begin{equation}
\label{eq:appendix_coarse_predictor}
\small
    h_c(T)=\kappa_{f\to c}(h_f(T)).
\end{equation}
By definition of $\kappa_{f\to c}$,
\begin{equation}
    h_f(T)\preceq_{\mathscr{S}} h_c(T).
\end{equation}
Therefore, for every image--semantic pair $(I,\xi^\star)$ and its visual
representation $T=\phi(I)$,
\begin{equation}
\small
\xi^\star\preceq_{\mathscr{S}} h_f(T)
\quad\Rightarrow\quad
\xi^\star\preceq_{\mathscr{S}} h_c(T).
\end{equation}
In words, whenever the fine-grained prediction is correct, its coarsened
prediction is also correct.
Equivalently, the error event of $h_c$ is contained in the error event
of $h_f$:
\begin{equation}
\label{eq:appendix_error_inclusion}
\small
\left\{
\xi^\star\npreceq_{\mathscr{S}} h_c(T)
\right\}
\subseteq
\left\{
\xi^\star\npreceq_{\mathscr{S}} h_f(T)
\right\}.
\end{equation}
Taking probabilities gives
\begin{equation}
    \varepsilon(h_c)\le \varepsilon(h_f).
\end{equation}
Since $h_c$ is a valid predictor on $\Omega_c$, we have
\begin{equation}
\small
\varepsilon^\star(\Omega_c)
\le
\varepsilon(h_c)
\le
\varepsilon(h_f).
\end{equation}
This holds for any predictor $h_f:\mathsf{Tr}\to\Omega_f$.
Taking the infimum over all such $h_f$ yields
\begin{equation}
    \varepsilon^\star(\Omega_c)
    \le
    \varepsilon^\star(\Omega_f).
\end{equation}
\end{proof}

\subsection{Information-Theoretic Refinement Analysis}

The proposition above establishes a monotonicity result under semantic
0-1 risk.
We further quantify the amount of additional information required by
semantic refinement.

For $\Omega_f\sqsubseteq\Omega_c$, we define the coarse-to-fine
refinement information deficit as
\begin{equation}
\label{eq:appendix_refinement_deficit}
\small
\eta_{f\mid c}
=
H(Y_f\mid T,Y_c).
\end{equation}
Equivalently, using the definition of conditional mutual information,
\begin{equation}
\label{eq:appendix_refinement_deficit_mi}
\small
\eta_{f\mid c}
=
H(Y_f\mid Y_c)
-
\operatorname{MI}(Y_f;T\mid Y_c).
\end{equation}
Here, $H(Y_f\mid Y_c)$ measures the semantic refinement entropy required
to move from the coarse cut to the fine cut, while
$\operatorname{MI}(Y_f;T\mid Y_c)$ measures the refinement information
provided by the visual representation $T$.
Thus, $\eta_{f\mid c}$ captures the part of the fine-grained semantic
information not resolved by the model's visual evidence.

\begin{proposition}[Additivity and Monotonicity of the Refinement Information Deficit]
\label{prop:appendix_refinement_deficit_monotonicity}
Let $\Omega_{f_2}$, $\Omega_{f_1}$, and $\Omega_c$ be three nested cut
sets satisfying
$\Omega_{f_2}\sqsubseteq\Omega_{f_1}\sqsubseteq\Omega_c$.
Then
\begin{equation}
\label{eq:appendix_refinement_deficit_additivity}
\small
\eta_{f_2\mid c}
=
\eta_{f_2\mid f_1}
+
\eta_{f_1\mid c}
\ge
\eta_{f_1\mid c}.
\end{equation}
Therefore, the refinement information deficit is non-decreasing as the
target granularity becomes finer and can be used to quantify the degree
of semantic refinement.
\end{proposition}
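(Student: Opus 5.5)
The plan is to derive the identity from the chain rule for conditional entropy, using the nesting structure: coarser labels are deterministic functions of finer ones. By Eq.~(\ref{eq:appendix_yc_kappa_yf}) applied to each pair of nested cuts, $Y_c=\kappa_{f_1\to c}(Y_{f_1})$ and $Y_{f_1}=\kappa_{f_2\to f_1}(Y_{f_2})$. The coarsening maps compose: for $u\in\Omega_{f_2}$, $\kappa_{f_1\to c}(\kappa_{f_2\to f_1}(u))$ is an element of $\Omega_c$ above $u$, so it equals $\kappa_{f_2\to c}(u)$ by uniqueness. Hence $Y_c$ and $Y_{f_1}$ are both functions of $Y_{f_2}$, and $Y_c$ is a function of $Y_{f_1}$.

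First, I expand $H(Y_{f_2},Y_{f_1}\mid T,Y_c)$ in two ways using the chain rule:
\begin{equation*}
H(Y_{f_2}\mid T,Y_c)+H(Y_{f_1}\mid T,Y_c,Y_{f_2})
=
H(Y_{f_1}\mid T,Y_c)+H(Y_{f_2}\mid T,Y_c,Y_{f_1}).
\end{equation*}
The term $H(Y_{f_1}\mid T,Y_c,Y_{f_2})$ vanishes because $Y_{f_1}$ is a deterministic function of $Y_{f_2}$. Next, since $Y_c$ is a function of $Y_{f_1}$, conditioning on $(T,Y_c,Y_{f_1})$ is the same as conditioning on $(T,Y_{f_1})$. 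That is, the two conditioning variables generate the same $\sigma$-algebra, so $H(Y_{f_2}\mid T,Y_c,Y_{f_1})=H(Y_{f_2}\mid T,Y_{f_1})=\eta_{f_2\mid f_1}$. Substituting these two facts gives $\eta_{f_2\mid c}=\eta_{f_1\mid c}+\eta_{f_2\mid f_1}$.

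The inequality then follows from nonnegativity of conditional entropy. For discrete labels, $\eta_{f_2\mid f_1}\ge 0$; I assume $\Omega$ is finite or countable, as is implicit in the definition of $H(Y_f\mid T,Y_c)$, while $T$ may be general. Finally, I read off monotonicity: along any chain of successively finer cuts, $\eta_{\cdot\mid c}$ is non-decreasing.

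The main subtlety is justifying the conditioning replacement and the vanishing term when $T$ is continuous. I would handle this by working with discrete conditional entropies defined pointwise given $T=t$, that is, $H(\,\cdot\mid T,\dots)=\mathbb E_T[H(\,\cdot\mid T=t,\dots)]$. Under this definition the chain rule and the deterministic-function facts hold for each fixed $t$, and I then integrate over $T$. This requires well-defined regular conditional distributions of the labels given $T$, which exist because the labels take values in a countable set.
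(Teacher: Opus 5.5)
Your proposal is correct and follows the paper's argument: the paper applies the chain rule conditioned on $(T,Y_c)$, using that $Y_{f_1}$ is a deterministic coarsening of $Y_{f_2}$ to write $H(Y_{f_2}\mid T,Y_c)=H(Y_{f_1}\mid T,Y_c)+H(Y_{f_2}\mid T,Y_{f_1},Y_c)$. It then uses that $Y_c$ is a function of $Y_{f_1}$ to drop $Y_c$ from the last term, and obtains the inequality from nonnegativity of conditional entropy. Your two-way expansion of $H(Y_{f_2},Y_{f_1}\mid T,Y_c)$, the composition check on the coarsening maps, and the pointwise-in-$t$ treatment for continuous $T$ spell out steps the paper leaves implicit, but the route is the same.
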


\begin{proof}
Since $Y_{f_1}$ is a deterministic coarsening of $Y_{f_2}$ and $Y_c$ is
a deterministic coarsening of $Y_{f_1}$, the chain rule of conditional
entropy gives
\begin{equation}
\small
\begin{aligned}
H(Y_{f_2}\mid T,Y_c)
&=
H(Y_{f_1}\mid T,Y_c)
+
H(Y_{f_2}\mid T,Y_{f_1},Y_c)
\\
&=
H(Y_{f_1}\mid T,Y_c)
+
H(Y_{f_2}\mid T,Y_{f_1}).
\end{aligned}
\end{equation}
This is exactly
$\eta_{f_2\mid c}=\eta_{f_1\mid c}+\eta_{f_2\mid f_1}$.
The inequality follows from the non-negativity of conditional entropy.
\end{proof}

We next define a log-score uncertainty risk.
For a predictive distribution $\widehat P_\Omega(\cdot\mid T)$ over
a cut set $\Omega$, let
\begin{equation}
\label{eq:appendix_log_score_risk}
\small
\mathcal R_{\mathrm{ls}}(\widehat P_\Omega)
=
\mathbb E\left[
-\log \widehat P_\Omega(Y_\Omega\mid T)
\right].
\end{equation}
The Bayes-optimal log-score uncertainty risk under cut $\Omega$ is
\begin{equation}
\label{eq:appendix_log_score_bayes}
\small
\mathcal R_{\mathrm{ls}}^\star(\Omega)
=
\inf_{\widehat P_\Omega}
\mathcal R_{\mathrm{ls}}(\widehat P_\Omega).
\end{equation}

\begin{proposition}[Bayes-Optimal Uncertainty-Risk Gap]
\label{prop:appendix_bayes_uncertainty_gap}
Let $\Omega_f$ and $\Omega_c$ be two cut sets of $\mathscr S$, where
$\Omega_f\sqsubseteq\Omega_c$.
Then
\begin{equation}
\label{eq:appendix_uncertainty_gap}
\small
\mathcal R_{\mathrm{ls}}^\star(\Omega_f)
-
\mathcal R_{\mathrm{ls}}^\star(\Omega_c)
=
\eta_{f\mid c}.
\end{equation}
\end{proposition}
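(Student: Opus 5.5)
The plan is to compute both Bayes-optimal log-score risks in closed form as conditional entropies, and then identify their difference with $\eta_{f\mid c}$ via the chain rule.

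\emph{Step 1: Bayes-optimal log-score risk equals a conditional entropy.}
For any cut set $\Omega$ and any predictive distribution $\widehat P_\Omega(\cdot\mid T)$, we decompose
\begin{equation*}
\mathcal R_{\mathrm{ls}}(\widehat P_\Omega)
= H(Y_\Omega\mid T)
+ \mathbb E_T\big[\mathrm{KL}\big(P(Y_\Omega\mid T)\,\|\,\widehat P_\Omega(\cdot\mid T)\big)\big].
\end{equation*}
This follows by adding and subtracting $\log P(Y_\Omega\mid T)$ inside the expectation. By Gibbs' inequality the KL term is nonnegative, and it vanishes for $\widehat P_\Omega = P(Y_\Omega\mid T)$. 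Since this choice is an admissible predictive distribution over $\Omega$, the infimum is attained and
\begin{equation*}
\mathcal R_{\mathrm{ls}}^\star(\Omega)=H(Y_\Omega\mid T).
\end{equation*}

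\emph{Step 2: Apply this to both cuts and use the chain rule.}
Step 1 gives $\mathcal R_{\mathrm{ls}}^\star(\Omega_f)-\mathcal R_{\mathrm{ls}}^\star(\Omega_c)=H(Y_f\mid T)-H(Y_c\mid T)$. Eq.~(\ref{eq:appendix_yc_kappa_yf}) states $Y_c=\kappa_{f\to c}(Y_f)$, so $Y_c$ is a deterministic function of $Y_f$ and hence $H(Y_c\mid T,Y_f)=0$. Expanding $H(Y_f,Y_c\mid T)$ by the chain rule in both orders gives
\begin{equation*}
H(Y_f\mid T)+H(Y_c\mid T,Y_f)=H(Y_c\mid T)+H(Y_f\mid T,Y_c).
\end{equation*}
Substituting $H(Y_c\mid T,Y_f)=0$ yields $H(Y_f\mid T)-H(Y_c\mid T)=H(Y_f\mid T,Y_c)=\eta_{f\mid c}$, by the definition in Eq.~(\ref{eq:appendix_refinement_deficit}). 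This is the argument already used for Proposition~\ref{prop:appendix_refinement_deficit_monotonicity}.

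\emph{Main obstacle.}
The only delicate point is the measure-theoretic setting of Step 1. When $T$ is continuous, the conditional distribution $P(Y_\Omega\mid T)$ must be taken as a regular conditional distribution; this exists because $\Omega$ is a finite (or countable) label set. We also need finiteness: I would assume $H(Y_f\mid T)<\infty$, which holds automatically when $\Omega_f$ is finite. Under that assumption the subtraction in the final identity is well defined and the infimum in Eq.~(\ref{eq:appendix_log_score_bayes}) is a minimum. If infinite entropies are allowed, the identity should instead be read only under the finiteness assumption.
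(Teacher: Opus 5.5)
Your proposal is correct and follows the paper's proof essentially step for step. The paper likewise identifies $\mathcal R_{\mathrm{ls}}^\star(\Omega)$ with $H(Y_\Omega\mid T)$ via propriety of the log score (your KL decomposition just makes this explicit), then applies the chain rule using that $Y_c=\kappa_{f\to c}(Y_f)$ is a deterministic coarsening of $Y_f$; your remarks on regular conditional distributions and on finiteness of $H(Y_f\mid T)$ are sensible clarifications that the paper leaves implicit.
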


\begin{proof}
The logarithmic scoring rule is proper, so the infimum in
Eq.~(\ref{eq:appendix_log_score_bayes}) is achieved by the true posterior
distribution $P(Y_\Omega\mid T)$.
Therefore,
\begin{equation}
\label{eq:appendix_log_score_entropy}
\small
\mathcal R_{\mathrm{ls}}^\star(\Omega)
=
H(Y_\Omega\mid T).
\end{equation}
Applying this to $\Omega_f$ and $\Omega_c$ gives
\begin{equation}
\small
\mathcal R_{\mathrm{ls}}^\star(\Omega_f)
-
\mathcal R_{\mathrm{ls}}^\star(\Omega_c)
=
H(Y_f\mid T)-H(Y_c\mid T).
\end{equation}
Since $Y_c=\kappa_{f\to c}(Y_f)$, $Y_c$ is a deterministic coarsening of
$Y_f$.
By the chain rule of conditional entropy,
\begin{equation}
\small
H(Y_f\mid T)
=
H(Y_c\mid T)+H(Y_f\mid T,Y_c).
\end{equation}
Thus,
\begin{equation}
\small
\mathcal R_{\mathrm{ls}}^\star(\Omega_f)
-
\mathcal R_{\mathrm{ls}}^\star(\Omega_c)
=
H(Y_f\mid T,Y_c)
=
\eta_{f\mid c}.
\end{equation}
\end{proof}

This proposition gives a quantitative characterization of the
coarse-to-fine trade-off under log-score uncertainty risk.
The intrinsic cost of moving from $\Omega_c$ to $\Omega_f$ is exactly the
refinement information deficit: the semantic information needed for
fine-grained refinement that is not resolved by the visual representation
available to the model.

\subsection{Actual Model Decomposition}

The proposition above characterizes the Bayes-optimal uncertainty-risk
gap.
An actual model, however, may deviate from the Bayes posterior at each
granularity.
We therefore analyze how the model's predictive distributions affect the
coarse-to-fine risk gap.

Let $\widehat P_f(\cdot\mid T)$ and $\widehat P_c(\cdot\mid T)$ be the
model's predictive distributions over $\Omega_f$ and $\Omega_c$,
respectively.
We define the coarse distribution induced by the fine-grained prediction
as
\begin{equation}
\label{eq:appendix_coarsened_fine_distribution}
\small
\widehat P_{f\to c}(y_c\mid T)
=
\sum_{y_f\in\Omega_f:\kappa_{f\to c}(y_f)=y_c}
\widehat P_f(y_f\mid T),
\qquad
y_c\in\Omega_c .
\end{equation}
This distribution corresponds to first predicting at the fine granularity
and then discarding the refinement details through the coarsening map.

For notational convenience, define the log-score risk of the induced
coarse distribution as
\begin{equation}
\label{eq:appendix_coarsened_fine_risk}
\small
\mathcal R_{\mathrm{ls}}(\widehat P_{f\to c})
=
\mathbb E\left[
-\log \widehat P_{f\to c}(Y_c\mid T)
\right].
\end{equation}
If the model were strictly hierarchy-consistent, the directly elicited
coarse distribution $\widehat P_c(\cdot\mid T)$ would coincide with the
induced distribution $\widehat P_{f\to c}(\cdot\mid T)$.
We do not require this equality.
Instead, we define the signed hierarchy-readout violation margin as
\begin{equation}
\label{eq:appendix_hierarchy_violation_margin}
\small
\mu_{f\mid c}
=
\mathcal R_{\mathrm{ls}}(\widehat P_c)
-
\mathcal R_{\mathrm{ls}}(\widehat P_{f\to c}).
\end{equation}
A positive $\mu_{f\mid c}$ means that directly eliciting the coarse
prediction yields higher log-score risk than coarsening the model's own
fine-grained prediction.

\begin{assumption}[Average Hierarchy-Readout Stability]
\label{assumption:appendix_average_hierarchy_stability}
We assume average hierarchy-readout stability over the evaluated
comparable granularity pairs:
\begin{equation}
\label{eq:appendix_average_hierarchy_stability}
\small
\mathbb E_{(f,c)}\!\left[\mu_{f\mid c}\right]\le 0,
\end{equation}
where $\mathbb E_{(f,c)}$ denotes the average over the evaluated pairs
$(\Omega_f,\Omega_c)$ with $\Omega_f\sqsubseteq\Omega_c$.
This condition allows individual hierarchy violations, but requires that
directly eliciting coarse-grained predictions is not worse on average
than eliciting fine-grained predictions and then coarsening them.
\end{assumption}

\begin{theorem}[Uncertainty-Risk Gap for an Actual Model]
\label{thm:appendix_actual_uncertainty_gap}
Under Assumption~\ref{assumption:appendix_average_hierarchy_stability},
\begin{equation}
\label{eq:appendix_expected_actual_gap}
\small
\mathbb E_{(f,c)}
\left[
\mathcal R_{\mathrm{ls}}(\widehat P_f)
-
\mathcal R_{\mathrm{ls}}(\widehat P_c)
\right]
\ge
\mathbb E_{(f,c)}
\left[
\eta_{f\mid c}
\right].
\end{equation}
Therefore, if
$\mathbb E_{(f,c)}[\eta_{f\mid c}]>0$, the actual fine-grained prediction
has larger log-score uncertainty risk than the coarse-grained prediction
on average.
\end{theorem}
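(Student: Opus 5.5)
The plan is to decompose the per-pair gap by inserting the induced coarse distribution $\widehat P_{f\to c}$ from Eq.~(\ref{eq:appendix_coarsened_fine_distribution}):
\begin{equation*}
\small
\mathcal R_{\mathrm{ls}}(\widehat P_f)-\mathcal R_{\mathrm{ls}}(\widehat P_c)
=
\big[\mathcal R_{\mathrm{ls}}(\widehat P_f)-\mathcal R_{\mathrm{ls}}(\widehat P_{f\to c})\big]
-\mu_{f\mid c},
\end{equation*}
which is an identity by the definition of $\mu_{f\mid c}$ in Eq.~(\ref{eq:appendix_hierarchy_violation_margin}). Averaging over $(f,c)$ and invoking Assumption~\ref{assumption:appendix_average_hierarchy_stability}, the term $-\mathbb E_{(f,c)}[\mu_{f\mid c}]$ is nonnegative. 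The theorem therefore reduces to a per-pair bound
\begin{equation*}
\small
\mathcal R_{\mathrm{ls}}(\widehat P_f)-\mathcal R_{\mathrm{ls}}(\widehat P_{f\to c})\ge \eta_{f\mid c},
\end{equation*}
which we then average using linearity of $\mathbb E_{(f,c)}$.

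For the per-pair bound, define the within-block conditional $\widehat Q(y_f\mid T,y_c)=\widehat P_f(y_f\mid T)/\widehat P_{f\to c}(y_c\mid T)$ for $y_f$ with $\kappa_{f\to c}(y_f)=y_c$. On the event where the denominator is zero, the left side is $+\infty$ and the bound is trivial. Since $Y_c=\kappa_{f\to c}(Y_f)$ by Eq.~(\ref{eq:appendix_yc_kappa_yf}), we have pointwise $\widehat P_f(Y_f\mid T)=\widehat P_{f\to c}(Y_c\mid T)\,\widehat Q(Y_f\mid T,Y_c)$. Taking $-\log$ and expectations gives
\begin{equation*}
\small
\mathcal R_{\mathrm{ls}}(\widehat P_f)-\mathcal R_{\mathrm{ls}}(\widehat P_{f\to c})=\mathbb E\big[-\log \widehat Q(Y_f\mid T,Y_c)\big].
\end{equation*}
This exactly mirrors the chain-rule step in Proposition~\ref{prop:appendix_bayes_uncertainty_gap}, but at the level of the model rather than the truth.

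The last step, and the one that carries the real content, is Gibbs' inequality, i.e.\ properness of the log score applied conditionally on $(T,Y_c)$. The expected log loss of any conditional distribution $\widehat Q(\cdot\mid T,Y_c)$ is at least that of the true conditional $P(Y_f\mid T,Y_c)$, and the difference is $\mathbb E\big[\mathrm{KL}\big(P(\cdot\mid T,Y_c)\,\|\,\widehat Q(\cdot\mid T,Y_c)\big)\big]\ge 0$. The true conditional's log loss is $H(Y_f\mid T,Y_c)=\eta_{f\mid c}$, which yields the per-pair bound. The main obstacle is bookkeeping rather than anything conceptual. One must check that $\widehat Q$ is supported on the correct block, which holds because $P(Y_f\mid T,Y_c)$ vanishes off $\kappa_{f\to c}^{-1}(Y_c)$. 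One must also handle infinite risks: if $\mathcal R_{\mathrm{ls}}(\widehat P_c)=\infty$ the difference is ill-defined, so I would assume finite risks throughout. The final claim for $\mathbb E_{(f,c)}[\eta_{f\mid c}]>0$ then follows immediately.
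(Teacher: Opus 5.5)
Your proof is correct. It shares the paper's outer structure: insert $\widehat P_{f\to c}$, split off $-\mu_{f\mid c}$, and reduce everything to the per-pair bound $\mathcal R_{\mathrm{ls}}(\widehat P_f)-\mathcal R_{\mathrm{ls}}(\widehat P_{f\to c})\ge\eta_{f\mid c}$. You differ from the paper only in how that bound is proved.

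\textbf{The paper's route.} It writes each log-score risk as a conditional entropy plus an expected KL at its own granularity. It then uses the entropy chain rule to turn $H(Y_f\mid T)-H(Y_c\mid T)$ into $\eta_{f\mid c}$. Finally it cites the contraction (data-processing) property of KL under the coarsening map to conclude $\lambda_{f\mid c}\ge 0$.

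\textbf{Your route.} You factor $\widehat P_f$ into its coarse marginal times a within-block conditional $\widehat Q$, and apply Gibbs' inequality conditionally on $(T,Y_c)$. This is exactly the KL chain rule, with two consequences:
\begin{itemize}
\item You prove the contraction step rather than cite it.
\item You identify the paper's slack term exactly, $\lambda_{f\mid c}=\mathbb E\big[D_{\mathrm{KL}}\big(P(\cdot\mid T,Y_c)\,\Vert\,\widehat Q(\cdot\mid T,Y_c)\big)\big]$, as a single nonnegative expected KL rather than a difference of two KL terms.
\end{itemize}
Your version is more self-contained and makes the source of the slack transparent: it is the model's within-block miscalibration. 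The paper's version separately names the fine-level and coarse-level miscalibrations, which gives the interpretable $\eta_{f\mid c}+\lambda_{f\mid c}-\mu_{f\mid c}$ decomposition used in its discussion.

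\textbf{A small bookkeeping slip.} On the event $\widehat P_{f\to c}(Y_c\mid T)=0$, both $-\log\widehat P_f(Y_f\mid T)$ and $-\log\widehat P_{f\to c}(Y_c\mid T)$ are infinite. So the risk difference there is $\infty-\infty$, not $+\infty$. This does no harm: under your finite-risk assumption, $\widehat P_f(Y_f\mid T)>0$ almost surely. Since $\widehat P_{f\to c}(Y_c\mid T)\ge\widehat P_f(Y_f\mid T)$, that event is null.
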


\begin{proof}
We first insert the induced coarse distribution $\widehat P_{f\to c}$:
\begin{equation}
\label{eq:appendix_actual_gap_insert}
\small
\begin{aligned}
&
\mathcal R_{\mathrm{ls}}(\widehat P_f)
-
\mathcal R_{\mathrm{ls}}(\widehat P_c)
\\
=&\
\left[
\mathcal R_{\mathrm{ls}}(\widehat P_f)
-
\mathcal R_{\mathrm{ls}}(\widehat P_{f\to c})
\right]
-
\left[
\mathcal R_{\mathrm{ls}}(\widehat P_c)
-
\mathcal R_{\mathrm{ls}}(\widehat P_{f\to c})
\right]
\\
=&\
\left[
\mathcal R_{\mathrm{ls}}(\widehat P_f)
-
\mathcal R_{\mathrm{ls}}(\widehat P_{f\to c})
\right]
-
\mu_{f\mid c}.
\end{aligned}
\end{equation}

By the cross-entropy decomposition,
\begin{equation}
\label{eq:appendix_fine_to_coarsened_fine_gap}
\small
\begin{aligned}
&
\mathcal R_{\mathrm{ls}}(\widehat P_f)
-
\mathcal R_{\mathrm{ls}}(\widehat P_{f\to c})
\\
=&\
H(Y_f\mid T)-H(Y_c\mid T)
\\
&+
\mathbb E_T
D_{\mathrm{KL}}
\!\left(
P(Y_f\mid T)
\Vert
\widehat P_f(\cdot\mid T)
\right)
\\
&-
\mathbb E_T
D_{\mathrm{KL}}
\!\left(
P(Y_c\mid T)
\Vert
\widehat P_{f\to c}(\cdot\mid T)
\right)
\\
=&\
\eta_{f\mid c}
+
\lambda_{f\mid c},
\end{aligned}
\end{equation}
where
\begin{equation}
\label{eq:appendix_contraction_gain}
\small
\begin{aligned}
\lambda_{f\mid c}
=&\
\mathbb E_T
D_{\mathrm{KL}}
\!\left(
P(Y_f\mid T)
\Vert
\widehat P_f(\cdot\mid T)
\right)
\\
&-
\mathbb E_T
D_{\mathrm{KL}}
\!\left(
P(Y_c\mid T)
\Vert
\widehat P_{f\to c}(\cdot\mid T)
\right).
\end{aligned}
\end{equation}
Since $Y_c=\kappa_{f\to c}(Y_f)$ and
$\widehat P_{f\to c}$ is obtained by applying the same coarsening map to
$\widehat P_f$, the contraction property of KL divergence under
coarsening gives
\begin{equation}
\label{eq:appendix_lambda_nonnegative}
\small
\lambda_{f\mid c}\ge 0 .
\end{equation}
Combining Eq.~(\ref{eq:appendix_actual_gap_insert}) and
Eq.~(\ref{eq:appendix_fine_to_coarsened_fine_gap}), we obtain
\begin{equation}
\label{eq:appendix_actual_gap_mu}
\small
\mathcal R_{\mathrm{ls}}(\widehat P_f)
-
\mathcal R_{\mathrm{ls}}(\widehat P_c)
=
\eta_{f\mid c}
+
\lambda_{f\mid c}
-
\mu_{f\mid c}.
\end{equation}
Taking the average over the evaluated comparable granularity pairs gives
\begin{equation}
\small
\begin{aligned}
&
\mathbb E_{(f,c)}
\left[
\mathcal R_{\mathrm{ls}}(\widehat P_f)
-
\mathcal R_{\mathrm{ls}}(\widehat P_c)
\right]
\\
=&\
\mathbb E_{(f,c)}
\left[
\eta_{f\mid c}
+
\lambda_{f\mid c}
-
\mu_{f\mid c}
\right].
\end{aligned}
\end{equation}
Using $\lambda_{f\mid c}\ge0$ and the average hierarchy-readout stability
assumption in Eq.~(\ref{eq:appendix_average_hierarchy_stability}), we
have
\begin{equation}
\small
\mathbb E_{(f,c)}
\left[
\mathcal R_{\mathrm{ls}}(\widehat P_f)
-
\mathcal R_{\mathrm{ls}}(\widehat P_c)
\right]
\ge
\mathbb E_{(f,c)}
\left[
\eta_{f\mid c}
\right].
\end{equation}
Thus, under the stated assumption, a positive average refinement
information deficit implies a positive average coarse-to-fine
uncertainty-risk gap.
\end{proof}

\subsection{Implications}

The semantic-risk result shows that moving to a finer cut cannot decrease
the Bayes-optimal 0-1 risk. The information-theoretic analysis further
shows that $\eta_{f\mid c}$ is additive and non-decreasing along nested
semantic refinements, and is exactly equal to the Bayes-optimal
coarse-to-fine log-score risk gap. Therefore, $\eta_{f\mid c}$ quantifies
the unresolved uncertainty introduced by semantic refinement.

For an actual model, the coarse-to-fine risk gap decomposes as
$\eta_{f\mid c}+\lambda_{f\mid c}-\mu_{f\mid c}$.
Since $\lambda_{f\mid c}\ge0$, average hierarchy-readout stability implies
that the expected actual risk gap is lower bounded by the expected
refinement information deficit.

These results motivate reliability-prioritized fine-grained generation:
models should provide additional specificity only when the visual evidence
supports the corresponding refinement, and otherwise back off to a
reliable coarser description.
The empirical analysis in Section~\ref{sec:analysis_prompt_tradeoff}
complements this result by showing that larger granularity gains are more
frequently accompanied by reliability drops in actual MLLM generations.

The analysis above concerns a single hierarchical category claim.
\textsc{GranFact} extends the same reliability-prioritized principle to
open-ended descriptions involving multiple objects, quantities,
attributes, omissions, and hallucinated entities.

\section{Details on Benchmark Construction}
\label{sec:appendix_benchmark_construction}

\subsection{Domain Selection Rationale}
\label{sec:appendix_domain_selection}

The seven domains in \textsc{GranFact} are selected to balance category
coverage, the availability of meaningful semantic hierarchies, and the
visual grounding of fine-grained distinctions.

First, the selected domains provide broad coverage of both natural and
human-made visual concepts. Animals and plants are retained as
representative fine-grained natural categories, while daily objects, cars,
electronics, games, and landmarks extend the benchmark to a wider range of
real-world scenarios. In particular, cars, electronics, games, and
landmarks complement existing fine-grained datasets and evaluation
resources, which commonly place greater emphasis on biological categories.

Second, these domains support semantically meaningful coarse-to-fine
category paths. For example, a car can be progressively described from a
general vehicle category to its brand, series, and specific model. An
electronic device can similarly be described from a general device
category to its product type, brand, product line, and specific model.
Such hierarchical structures enable the benchmark to evaluate whether a
model produces a reliable description at an appropriate level of semantic
granularity.

Third, fine-grained distinctions in these domains can often be supported
by visible evidence. Relevant cues include object shape, logos, component
layouts, color, material, and structural design. These cues are
recognizable to human annotators but remain challenging for MLLMs,
particularly in realistic multi-object scenes. To avoid relying on hidden
metadata or forcing overly specific labels, annotators assign each object
the finest category level that is reliably supported by the visible
evidence. This annotation principle is consistent with the benchmark goal
of rewarding fine-grained descriptions only when their specificity is
visually justified.

\subsection{Comparison with Existing Benchmarks}
\label{sec:appendix_benchmark_comparison}

\begin{table*}[t]
\centering
\small
\resizebox{\textwidth}{!}{%
\begin{tabular}{lrrl}
\toprule
Benchmark
& \# Images
& \# Object-level Units
& Annotation Granularity \\
\midrule
MMHal-Bench~\cite{sun2024aligning}
& 96
& 96
& Object-focused evaluation instances \\
POPE~\cite{li2023evaluating}
& 500
& 3,000
& Object-existence questions \\
FINER-CompreCap~\cite{xiao2026finer}
& 560
& 3,505
& 	Objects and attributes \\
\textsc{GranFact}
& \textbf{581}
& \textbf{5,093}
& \textbf{Objects, attributes, quantity, and coarse-to-fine categories} \\
\bottomrule
\end{tabular}%
}
\caption{
Comparison with representative hallucination and fine-grained MLLM
benchmarks. For question-based benchmarks, object-level units refer to
object-related questions; for densely annotated benchmarks, they refer
to annotated objects.
}
\label{tab:appendix_benchmark_comparison}
\end{table*}

Although \textsc{GranFact} contains 581 images, its image-level scale is
comparable to that of representative hallucination and fine-grained MLLM
benchmarks. Moreover, its images contain 5,093 annotated objects in total.
Among them, 2,302 objects are equipped with coarse-to-fine category paths
and serve as the primary targets of the granularity-aware evaluation. The
remaining single-granularity objects serve as a whitelist, preventing
predictions of visually present non-target objects from being incorrectly
treated as hallucinations.

The numbers in the third column provide an approximate object-level scale
comparison rather than strictly identical evaluation units. In
question-based benchmarks such as POPE, an evaluation unit corresponds to
an object-related question, whereas in densely annotated benchmarks such
as \textsc{GranFact}, it corresponds to an annotated object.

Beyond its object-level coverage, \textsc{GranFact} provides structured
coarse-to-fine category paths, quantity annotations, and visually
identifiable attributes for its multi-granularity evaluation objects.
Consequently, its scale and annotation effort are not fully reflected by
the number of images alone. The benchmark represents a balance among image
coverage, object density, annotation granularity, and annotation quality.

\subsection{Dataset Statistics}
\label{sec:appendix_dataset_statistics}

This subsection provides additional statistics of the \textsc{GranFact}
evaluation set, supplementing the summary visualizations in
Figure~\ref{fig:dataset_statistics}. The benchmark contains 581 images
from seven visual domains and 5,093 annotated objects in total. Among
them, 2,302 objects have coarse-to-fine category annotations and serve as
the primary targets of our granularity-aware evaluation. The remaining
objects have single-granularity annotations and are used as whitelist
objects, ensuring that predictions referring to these visually present
objects are not incorrectly treated as hallucinations.
Table~\ref{tab:appendix_domain_distribution} reports the exact number of
images in each domain.

\begin{table}[h]
\centering
\small
\begin{tabular}{lrr}
\toprule
Domain & \# Images & Percentage \\
\midrule
Daily objects & 183 & 31.5\% \\
Animals       & 100 & 17.2\% \\
Cars          & 80  & 13.8\% \\
Electronics   & 75  & 12.9\% \\
Landmarks     & 75  & 12.9\% \\
Games         & 43  & 7.4\%  \\
Plants        & 25  & 4.3\%  \\
\midrule
Total         & 581 & 100.0\% \\
\bottomrule
\end{tabular}
\caption{
Domain distribution of the \textsc{GranFact} evaluation set.
}
\label{tab:appendix_domain_distribution}
\end{table}

We further summarize the statistics of the multi-granularity objects in
Table~\ref{tab:appendix_entity_statistics}. Here, each multi-granularity
object corresponds to a GT entry $g_j$ in the structured annotation of an
image. These statistics describe the number of multi-granularity objects
per image and provide a more detailed view of the multi-object structure
of the evaluation set. Single-granularity whitelist objects are not
included in these statistics.

\begin{table}[h]
\centering
\small
\begin{tabular}{lr}
\toprule
Statistic & Value \\
\midrule
Total multi-granularity objects & 2302 \\
Average multi-granularity objects per image & 3.96 \\
Median multi-granularity objects per image & 4 \\
Minimum multi-granularity objects per image & 1 \\
Maximum multi-granularity objects per image & 14 \\
Images with 1 multi-granularity object & 124 \\
Images with 2 multi-granularity objects & 116 \\
Images with 3 or more multi-granularity objects & 341 \\
\bottomrule
\end{tabular}
\caption{
Statistics of multi-granularity objects in \textsc{GranFact}.
}
\label{tab:appendix_entity_statistics}
\end{table}

Table~\ref{tab:appendix_depth_distribution} reports the distribution of
category-path depths. For each multi-granularity object $g_j$, the depth
is defined as the length $L_j$ of its coarse-to-fine category path
$\mathcal{H}_j=(c_{j,1},\ldots,c_{j,L_j})$. A larger depth indicates that
the object is annotated with a more detailed semantic hierarchy.

\begin{table}[h]
\centering
\small
\begin{tabular}{lrr}
\toprule
Category-path depth $L_j$ & \# Objects & Percentage \\
\midrule
2 & 7 & 0.3\% \\
3 & 175 & 7.6\% \\
4 & 522 & 22.7\% \\
5 & 880 & 38.2\% \\
6 & 581 & 25.2\% \\
7 & 123 & 5.3\% \\
8 & 12 & 0.5\% \\
9 & 2 & 0.1\% \\
\midrule
Total & 2302 & 100.0\% \\
\bottomrule
\end{tabular}
\caption{
Distribution of category-path depths over multi-granularity objects.
}
\label{tab:appendix_depth_distribution}
\end{table}

In addition to category labels, \textsc{GranFact} includes visual
attributes for multi-granularity objects when such attributes are visually
identifiable and useful for evaluation.
Table~\ref{tab:appendix_attribute_statistics} summarizes the attribute
annotations. Single-granularity whitelist objects are not included in
these attribute statistics.

\begin{table}[h]
\centering
\small
\begin{tabular}{lr}
\toprule
Statistic & Value \\
\midrule
Total attribute annotations & 13144 \\
Average attributes per annotated entity & 5.71 \\
\bottomrule
\end{tabular}
\caption{
Attribute annotation statistics of \textsc{GranFact}.
}
\label{tab:appendix_attribute_statistics}
\end{table}

\subsection{Image Collection, Filtering, and Annotation Protocol}
\label{sec:appendix_image_collection}
\label{sec:appendix_annotation_protocol}

Since the goal of \textsc{GranFact} is to evaluate
reliability-prioritized fine-grained generation, we collected and
annotated images with an emphasis on fine-grained visual understanding.
We recruited 10 domain-aware annotators to participate in the collection
and annotation process. The annotators were familiar with at least one of
the covered visual domains, such as daily objects, animals, plants,
cars, electronics, landmarks, or games. The process was conducted
iteratively: candidate images were first collected according to the
benchmark goal, then preliminarily annotated with entity-level coarse-to-fine categories and visual attributes, and finally reviewed and filtered before
inclusion in the final evaluation set.

During collection, annotators followed several practical guidelines to
ensure that the images were suitable for evaluating fine-grained visual
description. First, each retained image should contain at least one
visually identifiable entity that can be associated with a
coarse-to-fine category path. Second, when available, annotators
preferred images with diverse entity compositions, visually related
entities, or realistic visual contexts, since such cases provide useful
open-ended evaluation settings. Images centered on a single entity were
also retained when the entity could be annotated at a sufficiently
fine-grained level with reasonable confidence. Third, for each annotated
entity, its finest category label should be reasonably supported by the
image and, when needed, by common visual knowledge or reference
information available to the annotators. Images were excluded when the
intended fine-grained identities of key entities were too uncertain to
annotate reliably.

The initial collection contained approximately 700 candidate
image--annotation pairs. Each candidate pair consisted of an image and
preliminary structured annotations. For each annotated entity, the
preliminary annotation included a coarse-to-fine category path and a set
of visual attributes. The category path records the semantic granularity
of the entity from a coarse category to the finest category that could be
reasonably identified. The attributes describe visible non-quantitative
properties such as color, material, posture, spatial position, shape, or
other domain-specific visual cues when applicable.

We manually reviewed the candidate samples and revised or removed samples
with clear annotation issues. The review focused on whether the annotated
entities were visually present, whether their  coarse-to-fine category paths were
appropriate, whether the finest category labels were sufficiently
supported, and whether the visual attributes were consistent with the
image.

We filtered out candidate samples according to the following criteria:
\begin{itemize}
    \item \textbf{Low visual quality:} images that were blurry,
    low-resolution, over-compressed, or otherwise difficult to inspect.
    \item \textbf{Insufficient visual evidence:} images where the
    intended fine-grained category of a key entity could not be
    determined with reasonable confidence.
    \item \textbf{Ambiguous labels:} images where multiple fine-grained
    labels appeared plausible for a key entity or where a stable
    category path was difficult to define.
    \item \textbf{Severe occlusion or truncation:} images where key
    entities were heavily occluded, cropped, or too small to support
    reliable annotation.
    \item \textbf{Duplicate or near-duplicate samples:} visually
    redundant images that contributed little additional diversity.
    \item \textbf{Unsuitable scenes:} images that did not contain
    identifiable fine-grained entities or were not well aligned with the
    intended open-ended visual description setting.
\end{itemize}

When inconsistencies were identified, the annotations were revised when
a reliable correction was clear; otherwise, the sample was removed from
the benchmark. After this filtering and review process, we retained 581
image--annotation pairs as the final \textsc{GranFact} evaluation set.
This process aims to improve annotation reliability while preserving the
fine-grained and visually diverse characteristics needed for evaluating
open-ended model responses.

\subsection{Annotator Instructions, Consent, and Ethical Considerations}
\label{sec:appendix_annotator_details}

This subsection provides additional details about the human annotation
process used for constructing \textsc{GranFact}. The annotation task was
limited to image collection, image filtering, and object-level visual
annotation for research purposes. It did not involve interventions with
human subjects or the collection of sensitive personal information from
annotators.

\paragraph{Annotator recruitment.}
We recruited 10 domain-aware annotators to participate in image
collection, filtering, and annotation. Annotators were selected based on
their familiarity with at least one of the covered visual domains,
including daily objects, animals, plants, cars, electronics, landmarks,
and games. We did not use an external crowdsourcing platform. The
annotators were instructed to work on domains for which they had
sufficient familiarity, and uncertain cases were reviewed or filtered out
during the quality-control stage.

\paragraph{Annotation instructions.}
Annotators were given the following instructions.

\begin{quote}
You will help construct a research benchmark for evaluating
fine-grained visual description in multimodal large language models.
For each candidate image, first determine whether the image is suitable
for fine-grained visual description evaluation. A suitable image should
contain at least one visually identifiable entity that can be associated
with a coarse-to-fine category path. Prefer images with diverse object
compositions, visually related entities, or realistic visual contexts.
Images centered on a single object may also be retained if the object can
be annotated at a sufficiently fine-grained level with reasonable
confidence.

For each retained image, annotate the visible target entities. For each
entity, provide: (1) a coarse-to-fine category path, from a broad
category to the finest category that can be reasonably supported; (2) the
quantity of visible instances; and (3) visible non-quantitative
attributes, such as color, material, shape, pose, spatial position, or
other domain-specific visual cues when applicable.

Do not guess fine-grained identities that are not supported by the image.
If the finest category of an entity cannot be determined with reasonable
confidence, annotate the entity only up to the finest reliable level, or
mark the sample for review. When needed, use common visual knowledge or
reference information to verify fine-grained categories, but do not add
labels that remain visually ambiguous. Coarse but reliable annotations
are preferred over unsupported fine-grained labels.

Exclude candidate images if they are blurry, low-resolution,
over-compressed, severely occluded, heavily cropped, or otherwise
difficult to inspect. Also exclude images where the intended
fine-grained category is too uncertain, where multiple fine-grained
labels are plausible, where a stable  coarse-to-fine category path cannot be defined, or
where the scene is not suitable for open-ended visual description
evaluation. Duplicate or near-duplicate images should be removed.

The collected annotations will be used for constructing a research
benchmark and for reporting aggregate statistics and examples in the
paper. The task focuses on object-level visual annotation and should not
require annotators to provide demographic, sensitive, or private personal
information.
\end{quote}

\paragraph{Review and quality control.}
Candidate image--annotation pairs were manually reviewed before inclusion
in the final evaluation set. The review focused on whether annotated
entities were visually present, whether their  coarse-to-fine category paths were
appropriate, whether the finest category labels were sufficiently
supported, and whether the visual attributes were consistent with the
image. Samples with unclear visual evidence, ambiguous fine-grained
labels, severe occlusion or truncation, low visual quality, or duplicate
content were revised or removed.


\paragraph{Consent and data use.}
Annotators were informed that the annotations would be used to construct
a research benchmark, evaluate multimodal models, and report aggregate
statistics and qualitative examples in the paper. The benchmark focuses
on objects and scenes rather than human subjects. During image selection,
we avoided images centered on identifiable people or sensitive personal
information. For real-world photographs, images were collected by the
research team or used with permission. For web-sourced images,
redistribution will follow the corresponding source licenses or terms of
use. When redistribution rights are unclear, we will release annotations,
metadata, and evaluation code as appropriate rather than redistributing
the original images.

\paragraph{Ethics review.}
The study did not undergo formal ethics board review. The annotation
task involved low-risk object-level image annotation, did not involve
interventions with human subjects, and did not require the collection of
sensitive personal information. We nevertheless provided annotators with
task instructions, informed them of the research use of the annotations,
and filtered images to avoid sensitive or privacy-invasive content.

\paragraph{Annotator compensation and crediting.}
The annotation was conducted as part of the research project by domain-aware annotators rather than through an external crowdsourcing marketplace.
Annotators were compensated or credited according to the project arrangement.
No annotator was asked to provide sensitive personal information as part of the annotation task.

\subsection{More Qualitative Examples}
\label{sec:appendix_annotation_examples}

We provide additional qualitative examples of \textsc{GranFact}
annotations in Figure~\ref{fig:appendix_dataset_examples_1},
Figure~\ref{fig:appendix_dataset_examples_2}, and
Figure~\ref{fig:appendix_dataset_examples_3}. These examples complement
the qualitative examples shown in the main paper and cover the remaining
domains, including landmarks, cars, daily objects, plants, and games.

\begin{figure*}[t]
    \centering
    \includegraphics[width=\textwidth]{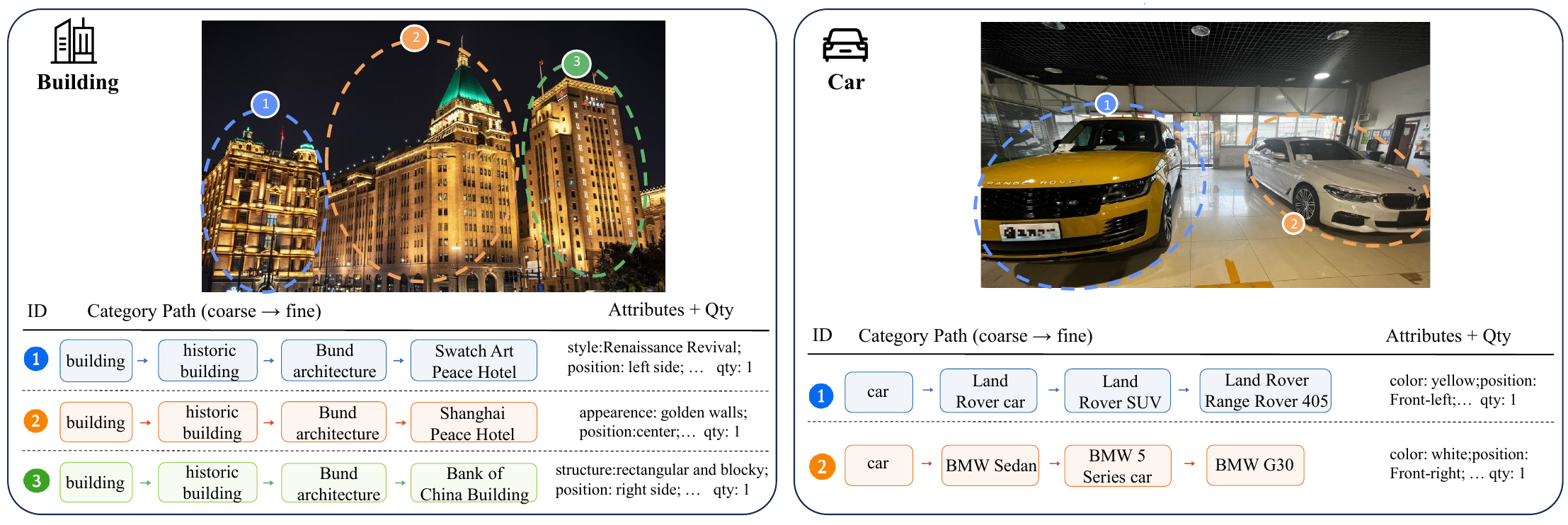}
    \caption{
    Additional qualitative examples from the landmark and car domains.
    }
    \label{fig:appendix_dataset_examples_1}
\end{figure*}

\begin{figure*}[t]
    \centering
    \includegraphics[width=\textwidth]{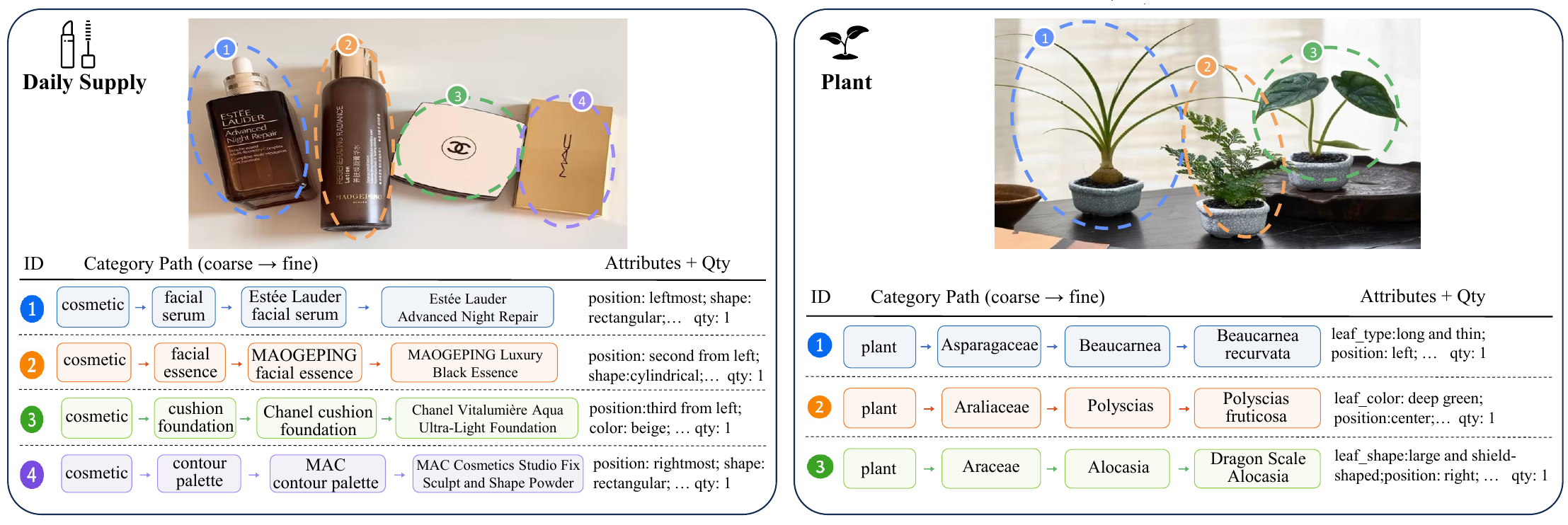}
    \caption{
    Additional qualitative examples from the daily-object and plant
    domains.
    }
    \label{fig:appendix_dataset_examples_2}
\end{figure*}

\begin{figure}[t]
    \centering
    \includegraphics[width=0.5\textwidth]{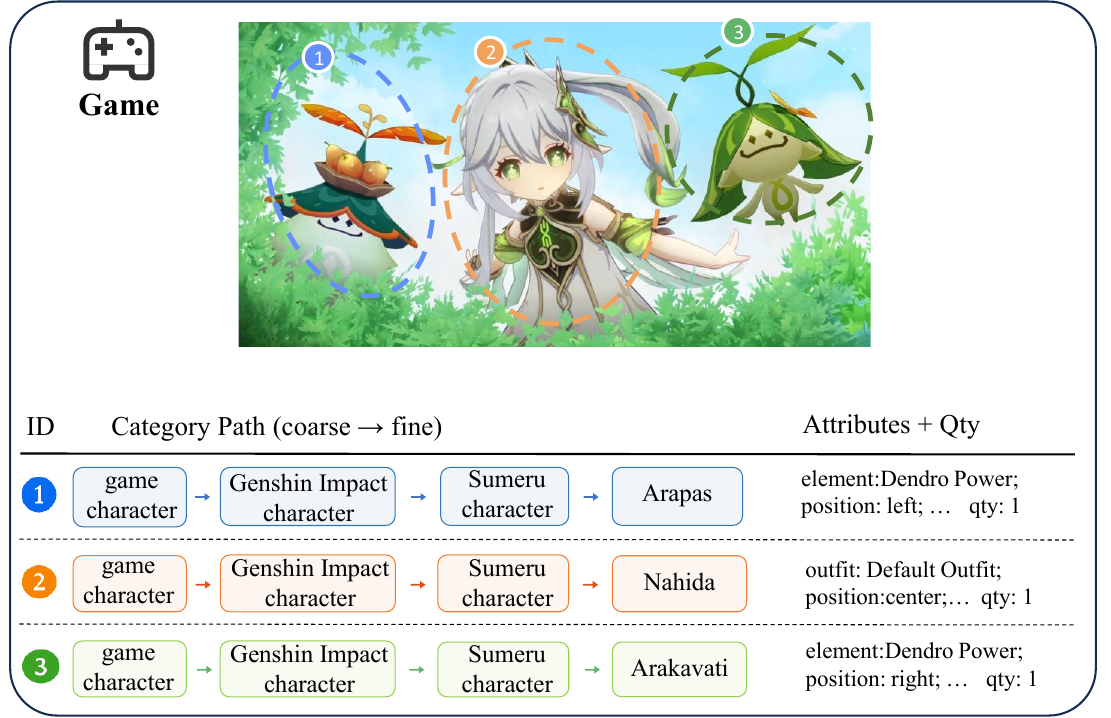}
    \caption{
    Additional qualitative example from the game domain.
    }
    \label{fig:appendix_dataset_examples_3}
\end{figure}

\section{Evaluation Details}
\label{sec:appendix_eval_details}

\subsection{Details of Response Parsing}
\label{sec:appendix_parser}

\textsc{GranFact} evaluates open-ended model responses rather than
closed-set predictions. Before applying the hierarchy-aware assignment
algorithm, we first convert each free-form response into a set of
structured prediction entities. Given an MLLM response, we use an LLM
parser to extract the primary physical objects mentioned in the response,
together with their category descriptions and visual attributes.

A rule-based parser is brittle in this setting because open-ended
responses often express object information across multiple clauses or
sentences. For example, a model may introduce an object in one sentence
and describe its color, position, quantity, or fine-grained identity
later in the response. Responses may also contain coordinated noun
phrases, implicit quantities, uncertain category claims, or background
objects that should not be treated as primary predictions. We therefore
use an LLM parser to leverage its language-understanding ability while
constraining its output with a fixed extraction prompt.

In our implementation, we use Qwen3.5-27B as the parser model. The parser
is instructed to extract only in-scope primary physical objects, keep
quantity and position as attributes, avoid extracting generic background
or low-information context objects, and output a JSON array following the
specified format. The resulting prediction entities are then used as the
input to the prediction--GT assignment algorithm described in
Section~\ref{sec:flow_formulation}. The core parsing prompt is shown in
Figure~\ref{fig:appendix_response_parsing_prompt}. In implementation, we
prepend a short domain-specific scope instruction to restrict extraction
to the target visual domain, while keeping the same output format and
general extraction rules across domains.

\begin{figure*}[t]
\centering
\begin{PromptCardNoBreak}
{\scriptsize
\promptlbl{Response Parsing Prompt}

\vspace{3pt}

You are an expert extraction assistant. Extract all in-scope primary
physical objects from the text into exactly one JSON array.

\vspace{4pt}
\promptlbl{Scope}

Extract primary physical objects that belong to the target visual domain.
Exclude generic background, environment, surface, lighting, UI elements,
and low-information context objects unless the response itself focuses on
them. Minor accessories, holders, connectors, supports, or object parts
should usually be kept as attributes or omitted, unless they are described
as main objects.

\vspace{4pt}
\promptlbl{Extraction Rules}

\begin{itemize}[leftmargin=1.4em,itemsep=0pt,topsep=1pt]
    \item Output exactly one JSON array and no extra text.
    \item Each extracted object must contain \texttt{finest\_category}
    and \texttt{attributes}.
    \item Use the most specific category explicitly supported by the
    response. Every word in \texttt{finest\_category} must be grounded in
    the text.
    \item Use an English category name for \texttt{finest\_category}. Do
    not copy non-English OCR text, packaging text, labels, or abnormal
    OCR-noise sequences into \texttt{finest\_category}.
    \item Keep visual details such as color, material, finish, texture,
    pattern, size, shape, position, orientation, state, condition, and
    part counts in \texttt{attributes}, unless they are part of an
    identity-bearing category name.
    \item Product brand, model, series, or version may stay in
    \texttt{finest\_category} only when they are explicit and
    identity-bearing.
    \item Put quantity and position inside \texttt{attributes}, not as
    top-level fields. Store quantity in \texttt{attributes.number} and
    explicit position or order in \texttt{attributes.position}.
    \item Use an exact integer string for a specific quantity. Use
    \texttt{"1"} for one countable object instance, and use \texttt{"2"}
    for a pair or complete paired set, such as earbuds, shoes, socks,
    gloves, or chopsticks.
    \item Use \texttt{"uncertain"} for ranged, approximate,
    plural-but-unspecified, or genuinely unclear quantities, including
    sets or collections whose item count is not clear.
    \item Do not double count the same physical objects at both group and
    member levels. If a counted group is partially described by more
    specific members, extract the members separately and set the group
    quantity to the remaining unspecified count.
    \item Do not merge objects with different explicit labels, series,
    specifications, capacities, or form factors.
    \item If an object has a labeled part, component, or accessory, do not
    use that part, component, or accessory name as the category of the
    whole object.
\end{itemize}

\vspace{4pt}
\promptlbl{Uncertainty Rules}

If the response gives a certain category but an uncertain subtype or
model, keep only the certain category. If the response says
``possibly X or Y'', use the most specific common super-category supported
by both options. If all category wording is uncertain, infer the most
specific category still supported by the response.

\vspace{4pt}
\promptlbl{Output Format}

\begin{verbatim}
[
  {
    "finest_category": "...",
    "attributes": {
      "number": "...",
      "position": "...",
      "attribute_name": "attribute_value"
    }
  }
]
\end{verbatim}
}
\end{PromptCardNoBreak}
\caption{
Core prompt template used for parsing open-ended MLLM responses into
structured prediction entities. In implementation, this core prompt is
combined with a short domain-specific scope instruction, such as
restricting extraction to electronic devices, cars, buildings, daily-use
consumer goods, or game-related entities.
}
\label{fig:appendix_response_parsing_prompt}
\end{figure*}

\subsection{Granularity-Aware Assignment Score Computation}
\label{sec:appendix_score_computation}
\label{sec:appendix_category_matching_prompt}

This subsection provides additional details on how we compute the
granularity-aware assignment score $w_{ij}$ in Eq.~(\ref{eq:edge_weight}).
For each parsed prediction entity $p_i$ and each GT entity $g_j$, the
score is determined by two components: the matched category granularity
level $\ell_{ij}$ and the attribute consistency score $a_{ij}$. Both
components are obtained with an LLM judge.

\paragraph{Category granularity level.}
Let $q_i$ denote the predicted category of $p_i$, and let
$\mathcal{H}_j=(c_{j,1},\ldots,c_{j,L_j})$ denote the coarse-to-fine
category path of GT entity $g_j$. The goal is to determine the finest
level in $\mathcal{H}_j$ that is reliably supported by the prediction
category $q_i$. We denote this level as
$\ell_{ij}\in\{0,\ldots,L_j\}$, where $\ell_{ij}=0$ indicates that
$p_i$ is category-incompatible with $g_j$, and $\ell_{ij}>0$ indicates
that $p_i$ can be matched to the $\ell_{ij}$-th node in the GT category
path.

To obtain $\ell_{ij}$, we use an LLM judge to perform pairwise
category-level matching between the predicted category $q_i$ and candidate
category nodes in $\mathcal{H}_j$. The judge returns one of three
decisions: \texttt{is\_a}, \texttt{may\_refer\_to}, or
\texttt{cannot\_refer\_to}. The decision \texttt{is\_a} means that
$q_i$ is the target category itself, a subtype or instance of the target,
or a synonym of the target. The decision \texttt{may\_refer\_to} means
that $q_i$ is too generic or underspecified but could still refer to the
target. The decision \texttt{cannot\_refer\_to} means that $q_i$ is
incompatible with the target due to a different domain, brand, family,
generation, species, function, or another mutually exclusive identity.

We traverse the category path $\mathcal{H}_j$ from the finest node
$c_{j,L_j}$ to the coarsest node $c_{j,1}$. If the judge returns
\texttt{is\_a} for node $c_{j,k}$, we stop the traversal and set
$\ell_{ij}=k$, since the prediction is supported at that category level.
If the judge returns \texttt{may\_refer\_to}, we continue the traversal
toward coarser levels, because the prediction may be compatible with the
GT entity but is not specific enough to support the current target node.
If the judge returns \texttt{cannot\_refer\_to}, we stop the traversal
and set $\ell_{ij}=0$, treating the prediction as incompatible with the
entire GT category path. In implementation, pairwise category judgments
are cached to avoid redundant LLM calls for repeated prediction--target
category pairs.

This three-way procedure distinguishes underspecification from
incompatibility. For example, if the GT entity is annotated as
``smartphone'' $\rightarrow$ ``iPhone'' $\rightarrow$
``iPhone 15 Pro Max'', a prediction such as ``iPhone'' may match the
coarser GT node ``iPhone'' but not the finest node. In contrast, a
prediction such as ``iPhone 15 Pro'' should not receive credit merely
because it shares the ancestor ``iPhone'', since it asserts a mutually
exclusive fine-grained model. In such cases, the prediction is treated as
category-incompatible with that GT entity. More generally,
$\ell_{ij}$ reflects the finest GT-supported semantic scope that the
prediction can reliably claim: a coarse but correct prediction receives a
positive but lower matched level, a correct fine-grained prediction
receives a higher matched level, and an incompatible prediction receives
$\ell_{ij}=0$.

The category-matching prompt is shown in
Figure~\ref{fig:appendix_category_matching_prompt}.

\begin{figure*}[t]
\centering
\begin{PromptCardNoBreak}
{\scriptsize
\promptlbl{Pairwise Category-Level Matching Judge Prompt}

\vspace{3pt}

You are determining the relationship between a prediction category and a
target category. Given a \texttt{Prediction} and a \texttt{Target}, decide
whether the prediction is the same as, a subtype of, incompatible with,
or may refer to the target under the category hierarchy.

\vspace{4pt}
\promptlbl{Input}

\begin{itemize}[leftmargin=1.4em,itemsep=0pt,topsep=1pt]
    \item \textbf{Prediction:} \texttt{\{predicted\_category\}}
    \item \textbf{Target:} \texttt{\{target\_label\}}
\end{itemize}

\vspace{4pt}
\promptlbl{Target Format}

The target may contain parenthetical annotations, such as
\texttt{fragrance (bottle)} or \texttt{eye cream (jar)}. The parenthetical
content provides contextual information rather than a strict requirement.
Focus on the main category and use the parenthetical content only as a
hint. For example, \texttt{fragrance} and \texttt{perfume} should be
treated as equivalent categories.

\vspace{4pt}
\promptlbl{Three-Way Decision}

\begin{itemize}[leftmargin=1.4em,itemsep=0pt,topsep=1pt]
    \item \texttt{is\_a}: the prediction is the target, or is clearly a
    subtype or instance of the target. For example,
    \texttt{iPhone 15 Pro} is \texttt{is\_a} \texttt{iPhone} and
    \texttt{smartphone}; \texttt{perfume} is \texttt{is\_a}
    \texttt{fragrance (bottle)}.
    \item \texttt{cannot\_refer\_to}: the prediction cannot refer to the
    target because they are from different domains, brands, families,
    generations, species, functions, or mutually exclusive identities. For
    example, \texttt{iPhone 15 Pro} is \texttt{cannot\_refer\_to}
    \texttt{Samsung Galaxy}, and \texttt{perfume} is
    \texttt{cannot\_refer\_to} \texttt{eye cream (jar)}.
    \item \texttt{may\_refer\_to}: the prediction may refer to the target
    but is too generic, underspecified, or semantically overlapping without
    a strict is-a relation. For example, \texttt{smartphone} is
    \texttt{may\_refer\_to} \texttt{iPhone 15 Pro}, and
    \texttt{beverage} is \texttt{may\_refer\_to} \texttt{Coca Cola}.
\end{itemize}

\vspace{4pt}
\promptlbl{Decision Priority}

\begin{itemize}[leftmargin=1.4em,itemsep=0pt,topsep=1pt]
    \item If you can confidently determine that the prediction is
    incompatible with the target, return \texttt{cannot\_refer\_to}.
    \item If the prediction is the same as the target, a clear subtype of
    the target, or a synonym of the target, return \texttt{is\_a}.
    \item If the prediction is too generic, underspecified, or uncertain
    but could still refer to the target, return \texttt{may\_refer\_to}.
\end{itemize}

\vspace{4pt}
\promptlbl{Output}

Return only a JSON object in the following format:
\begin{verbatim}
{
  "decision": "is_a",
  "reason": "brief explanation"
}
\end{verbatim}

The value of \texttt{decision} must be one of
\texttt{is\_a}, \texttt{cannot\_refer\_to}, or
\texttt{may\_refer\_to}.
}
\end{PromptCardNoBreak}
\caption{
LLM judge prompt for pairwise category-level matching. In implementation,
the judge is applied between a predicted category and candidate target
categories from the GT hierarchy, and the resulting
\texttt{is\_a}/\texttt{cannot\_refer\_to}/\texttt{may\_refer\_to}
decisions are used to determine the matched category granularity level.
Domain-specific examples are appended for electronics, daily products,
cars, animals, plants, buildings, and game entities.
}
\label{fig:appendix_category_matching_prompt}
\end{figure*}

\paragraph{Attribute consistency score.}
For category-compatible pairs with $\ell_{ij}>0$, we further compute an
attribute consistency score $a_{ij}\in[0,1]$. This score measures the
verifiable truthfulness of predicted non-quantitative attributes and the
recall of GT non-quantitative attributes with respect to the GT entity
$g_j$. The attributes include properties such as color, material,
texture, shape, posture, spatial position, visible state, brand-related
visual cues, and other domain-specific visual details when applicable.
Quantity is not evaluated as part of $a_{ij}$; it is handled separately
by the assignment constraints through the predicted quantity $s_i$ and GT
capacity $d_j$.

We provide the LLM judge with one predicted object and all
category-compatible GT candidates. Category legality has already been
decided upstream, so the judge is instructed not to use category labels,
category depth, global assignment, other predictions, or quantity
allocation. For each candidate $g_j$, the judge identifies: (1) predicted
attribute facts that are clearly supported by the GT attributes; (2)
predicted attribute facts that are clearly contradicted by the GT
attributes; and (3) GT attribute facts that are explicitly recalled or
covered by the predicted attributes. Predicted attribute facts that are
neither clearly supported nor clearly contradicted by the GT attributes
are treated as unverifiable and are excluded from the attribute precision
denominator. GT facts that are not clearly covered by the prediction are
treated as unrecalled.

Let $\mathcal{A}_{i}^{\mathrm{pred}}$ denote the set of non-quantitative
attribute facts predicted for $p_i$, and let
$\mathcal{A}_{j}^{\mathrm{gt}}$ denote the set of non-quantitative GT
attribute facts of $g_j$. Let
$\mathcal{C}_{ij}^{\mathrm{pred}}\subseteq
\mathcal{A}_{i}^{\mathrm{pred}}$ denote the predicted attribute facts
that are judged to be correct, and let
$\mathcal{W}_{ij}^{\mathrm{pred}}\subseteq
\mathcal{A}_{i}^{\mathrm{pred}}$ denote the predicted attribute facts
that are judged to be wrong or contradicted. Let
$\mathcal{R}_{ij}^{\mathrm{gt}}\subseteq
\mathcal{A}_{j}^{\mathrm{gt}}$ denote the GT attribute facts that are
recalled by the prediction. We compute
\begin{equation}
\small
\mathrm{P}_{ij}^{\mathrm{attr}}
=
\frac{
|\mathcal{C}_{ij}^{\mathrm{pred}}|
}{
|\mathcal{C}_{ij}^{\mathrm{pred}}|
+
|\mathcal{W}_{ij}^{\mathrm{pred}}|
},
\end{equation}
\begin{equation}
\small
\mathrm{R}_{ij}^{\mathrm{attr}}
=
\frac{
|\mathcal{R}_{ij}^{\mathrm{gt}}|
}{
|\mathcal{A}_{j}^{\mathrm{gt}}|
},
\end{equation}
and set
\begin{equation}
\small
a_{ij}
=
\frac{
2\mathrm{P}_{ij}^{\mathrm{attr}}
\mathrm{R}_{ij}^{\mathrm{attr}}
}{
\mathrm{P}_{ij}^{\mathrm{attr}}
+
\mathrm{R}_{ij}^{\mathrm{attr}}
}.
\end{equation}
When both $p_i$ and $g_j$ have no non-quantitative attributes, we set
$a_{ij}=1$, so that the edge score reflects a pure category-level match.
If
$|\mathcal{C}_{ij}^{\mathrm{pred}}|+
|\mathcal{W}_{ij}^{\mathrm{pred}}|=0$
in cases where attribute precision is otherwise needed, we set
$\mathrm{P}_{ij}^{\mathrm{attr}}=0$. If
$|\mathcal{A}_{j}^{\mathrm{gt}}|=0$
in cases where attribute recall is otherwise needed, we set
$\mathrm{R}_{ij}^{\mathrm{attr}}=0$. If
$\mathrm{P}_{ij}^{\mathrm{attr}}+\mathrm{R}_{ij}^{\mathrm{attr}}=0$, we
set $a_{ij}=0$.

For category-incompatible pairs with $\ell_{ij}=0$, the attribute score
does not affect the final assignment score, since the edge is already
treated as incompatible at the category level. The LLM judge rubric for
attribute scoring is shown in
Figure~\ref{fig:appendix_attribute_scoring_prompt}.

\begin{figure*}[t]
\centering
\begin{PromptCardNoBreak}
{\small
\promptlbl{Attribute Truthfulness and Recall Judge Prompt}

\vspace{3pt}

You are a pair-level attribute truthfulness and recall judge. Given one
predicted object's non-quantity attribute facts and several
category-legal GT candidates' non-quantity attribute facts, judge
attributes only.

\vspace{5pt}
\promptlbl{Input}

\begin{itemize}[leftmargin=1.5em,itemsep=1pt,topsep=2pt]
    \item \textbf{Predicted object:} \texttt{\{predicted\_object\}}
    \item \textbf{Category-legal GT candidates:}
    \texttt{\{gt\_candidates\}}
\end{itemize}

\vspace{5pt}
\promptlbl{Decision rules}

\begin{itemize}[leftmargin=1.5em,itemsep=1pt,topsep=2pt]
    \item Category legality has already been decided upstream. Do not use
    category labels, category depth, global assignment, other predictions,
    or quantity allocation.
    \item For each candidate GT, identify predicted attribute facts that
    are clearly consistent with or supported by the GT attributes.
    \item Identify predicted attribute facts that are clearly inconsistent
    with or contradicted by the GT attributes.
    \item Identify GT attribute facts that are explicitly recalled or
    covered by the predicted attributes.
    \item Be literal and conservative. Do not infer hidden facts.
    \item If a predicted attribute is not clearly correct or clearly wrong
    from the GT attributes, leave it out of both
    \texttt{correct\_pred\_fact\_indices} and
    \texttt{wrong\_pred\_fact\_indices}; it will be treated as
    uncertain or unverifiable.
    \item If a GT attribute is not clearly covered by the predicted
    attributes, leave it out of \texttt{recalled\_gt\_fact\_indices}; it
    will be treated as unrecalled.
    \item The same predicted fact index should not appear in both
    \texttt{correct\_pred\_fact\_indices} and
    \texttt{wrong\_pred\_fact\_indices}.
    \item Return one result for every input \texttt{candidate\_id}. Do not
    omit candidates; use empty arrays when no facts match.
    \item Keep each reason brief; do not include internal reasoning or
    self-correction.
\end{itemize}

\vspace{5pt}
\promptlbl{Output}

Return only one JSON object with the following structure:
\begin{verbatim}
{
  "candidates": {
    "g3": {
      "correct_pred_fact_indices": [0, 2],
      "wrong_pred_fact_indices": [1],
      "recalled_gt_fact_indices": [0, 2],
      "reason": "brief explanation"
    },
    "g5": {
      "correct_pred_fact_indices": [],
      "wrong_pred_fact_indices": [0],
      "recalled_gt_fact_indices": [],
      "reason": "brief explanation"
    }
  }
}
\end{verbatim}
}
\end{PromptCardNoBreak}
\caption{
LLM judge prompt for evaluating attribute truthfulness and recall between
a predicted entity and category-legal GT candidates.
}
\label{fig:appendix_attribute_scoring_prompt}
\end{figure*}

\paragraph{Assignment score.}
Given $\ell_{ij}$ and $a_{ij}$, we compute the assignment score
$w_{ij}$ as in Eq.~(\ref{eq:edge_weight}). For a real GT entity
$j\le n$, a category-compatible pair receives
\begin{equation}
\small
w_{ij}
=
\frac{\ell_{ij}+a_{ij}}{L_j+1},
\quad \ell_{ij}>0.
\end{equation}
This normalization keeps the score in a comparable range across GT
entities with different category-path depths. It also ensures that
matching a finer category level receives a higher score than matching a
coarser level, while attribute consistency can further distinguish
assignments at the same matched category level.

If $p_i$ is category-incompatible with a real GT entity $g_j$, we set
$w_{ij}=-1$. This negative score discourages the optimizer from assigning
unsupported predictions to real GT entities. For the dummy hallucination
entity $g_{n+1}$, we set $w_{i,n+1}=0$. Therefore, unsupported prediction
quantity can be assigned to the dummy entity instead of being forced to
match a real GT entity. This design makes hallucinated or over-specific
content explicitly count as unsupported while avoiding spurious matches
to real objects.

\subsection{Capacity-Constrained Assignment Solver}
\label{sec:appendix_assignment_solver}

This subsection provides implementation details for solving the
lexicographic assignment problem in Eq.~\ref{eq:lexicographic_flow}.
Given the prediction entities, GT entities, and the granularity-aware
assignment scores, we instantiate the prediction--GT alignment as a
capacity-constrained bipartite flow problem.

\paragraph{Graph construction.}
For each parsed prediction entity $p_i$, we create a prediction node with
supply $s_i$. For each GT entity $g_j$, we create a GT node with capacity
$d_j$. A directed edge is added from $p_i$ to $g_j$ only when the pair is
category-compatible, i.e., when $\ell_{ij}>0$. The edge capacity is set
to
\begin{equation}
\small
    u_{ij} = \min(s_i,d_j),
\end{equation}
so that the assigned quantity on this edge cannot exceed either the
prediction supply or the GT capacity. We then add a source node connected
to all prediction nodes and a sink node connected from all GT nodes. The
source-to-prediction edge has capacity $s_i$, and the GT-to-sink edge has
capacity $d_j$.

In Section~\ref{sec:flow_formulation}, we introduce a dummy hallucination
entity $g_{n+1}$ to present the assignment problem in a unified form:
every predicted quantity is either assigned to a real GT entity or to the
dummy entity. In implementation, we use an equivalent real-GT-only graph
and do not explicitly instantiate the dummy node. Predicted quantity that
cannot be assigned to any category-compatible real GT entity remains
unmatched. This unmatched residual can be used for diagnostic accounting
of unsupported or over-counted predictions, while the matched quantities
on real GT edges determine the assignment scores used by the main
metrics.

\paragraph{Lexicographic objective.}
The assignment objective is lexicographic: it first maximizes the total
category-compatible matched quantity $M(\mathbf{X})$, and then, among
assignments with the same matched quantity, maximizes the
granularity-aware score $M_{\mathrm{gran}}(\mathbf{X})$. We implement
this priority using a standard large-constant transformation in the
min-cost-flow objective.

For each compatible edge $(p_i,g_j)$, we assign a reward consisting of
two parts: a large base reward for assigning one unit of predicted
quantity to a real GT entity, and a smaller granularity-aware reward
based on the edge score $w_{ij}$. Since the solver minimizes cost, this
reward is represented as a negative edge cost:
\begin{equation}
\small
    \mathrm{cost}_{ij}
    =
    -\left(B + \tilde{w}_{ij}\right),
\end{equation}
where $\tilde{w}_{ij}$ is an integer-scaled version of $w_{ij}$ and
$B$ is a large constant.

The constant $B$ is chosen so that the base reward for one additional
matched quantity is larger than any possible difference in total
granularity-aware utility among assignments with the same or fewer
matched quantities. Therefore, the solver always prefers an assignment
with larger category-compatible matched quantity. Only when two
assignments match the same total quantity does the second term
$\tilde{w}_{ij}$ determine which assignment is preferred. This realizes
the lexicographic objective in Eq.~\ref{eq:lexicographic_flow} within a
single min-cost-flow problem.

\paragraph{Successive shortest augmenting path.}
We solve the resulting min-cost-flow problem with a successive shortest
augmenting-path algorithm on the residual network. Starting from zero
flow, the algorithm repeatedly finds the shortest source-to-sink path
with negative total cost in the residual graph. If such a path exists, it
augments as much flow as possible along that path, updates the residual
capacities, and continues. The procedure stops when no negative-cost
source-to-sink path remains. At this point, no further assignment can
increase the lexicographic objective.

The returned flow specifies the assigned quantity $x_{ij}$ for each
activated prediction--GT edge. For each GT entity, the incoming flow
determines how much of its annotated quantity is covered by model
predictions. For each prediction entity, any quantity not assigned to a
real GT entity does not contribute to $M_z$ or $M_{\mathrm{gran},z}$; it
only remains unmatched while the total predicted quantity is still
included in $S_z$. Thus, unmatched prediction quantity affects the final
metrics by reducing the fraction of predicted content that is grounded,
rather than by adding a separate score term. The quantities
$S_z$, $D_z$, $M_z$, and $M_{\mathrm{gran},z}$ are then used for metric
computation.

\paragraph{Validity checks.}
After solving, we verify that the produced assignment satisfies the
capacity constraints. Specifically, the assigned quantity on each edge
must not exceed its edge capacity, the total outgoing assigned quantity
of a prediction entity must not exceed its predicted supply, and the
total incoming assigned quantity of a GT entity must not exceed its GT
capacity. These checks ensure that the resulting assignment is a feasible
capacity-constrained alignment before metric computation.

\subsection{Metric Computation and Well-Definedness}
\label{sec:appendix_metric_details}
\label{sec:appendix_welldefined_metrics}

This subsection provides additional details on the computation of the
metrics introduced in Section~\ref{sec:flow_metrics}. We first describe
how example-level quantities are aggregated at the dataset level, and
then explain why the reported metrics are well-defined even when the
optimal assignment matrix in Eq.~(\ref{eq:lexicographic_flow}) is not
unique.

\paragraph{Dataset-level aggregation.}
For each evaluation example $z\in\mathcal{Z}$, let $S_z$ denote the
total predicted quantity parsed from the model response, and let $D_z$
denote the total annotated GT quantity. After solving the
prediction--GT assignment problem, we obtain the granularity-neutral
score $M_z$ and the granularity-aware matched score
$M_{\mathrm{gran},z}$. Dataset-level precision, recall, and F1 are
computed by micro-averaging these quantities over all examples:
\begin{equation}
\small
\begin{aligned}
S &= \sum_{z\in\mathcal{Z}} S_z,
&
D &= \sum_{z\in\mathcal{Z}} D_z, \\
M &= \sum_{z\in\mathcal{Z}} M_z,
&
M_{\mathrm{gran}} &= \sum_{z\in\mathcal{Z}} M_{\mathrm{gran},z}.
\end{aligned}
\end{equation}
The metrics in Section~\ref{sec:flow_metrics} are then computed from
these aggregated quantities. This micro-averaged computation treats each
predicted or annotated quantity unit consistently across the dataset,
rather than first computing per-image scores and then averaging them.

\paragraph{Granularity-neutral metrics.}
Granularity-neutral metrics evaluate whether the predicted entities can
be grounded in category-compatible GT entities, regardless of how
fine-grained the grounded descriptions are. In particular, $M$ counts the
amount of predicted quantity assigned to category-compatible GT entities.
Thus, precision measures the fraction of generated quantity that is
grounded, while recall measures the fraction of GT quantity that is
covered by grounded predictions:
\begin{equation}
\small
\mathrm{P}=\frac{M}{S},\quad
\mathrm{R}=\frac{M}{D},\quad
\mathrm{F1}=
\frac{2\mathrm{P}\mathrm{R}}{\mathrm{P}+\mathrm{R}} .
\end{equation}
When the denominator of F1 is zero, we set F1 to zero. In addition to
these quantity-level metrics, we report the Image Reliability Rate
$\mathrm{IR}$, which is a stricter image-level metric. An example
contributes positively to $\mathrm{IR}$ only when all predicted quantity
in that response is assigned to category-compatible GT entities:
\begin{equation}
\small
    \mathrm{IR}
    =
    \frac{1}{|\mathcal{Z}|}
    \sum_{z\in\mathcal{Z}}
    \mathbb{I}[M_z=S_z].
\end{equation}
Therefore, $\mathrm{P}$, $\mathrm{R}$, and $\mathrm{F1}$ measure
quantity-level grounding, while $\mathrm{IR}$ measures whether a complete
open-ended response is free from unsupported predicted quantity.

\paragraph{Granularity-aware metrics.}
Granularity-aware metrics further account for the specificity and
attribute consistency of grounded predictions. Instead of counting each
category-compatible assignment equally, $M_{\mathrm{gran}}$ weights
grounded assignments by their matched category level and attribute
consistency. A prediction receives a higher score when it is matched to a
finer supported category level and has more consistent non-quantitative
attributes. Based on $M_{\mathrm{gran}}$, we compute granularity-weighted
precision, recall, and F1:
\begin{equation}
\small
\begin{gathered}
\mathrm{P}_{\mathrm{gran}}=\frac{M_{\mathrm{gran}}}{S},
\quad
\mathrm{R}_{\mathrm{gran}}=\frac{M_{\mathrm{gran}}}{D}, \\
\mathrm{F1}_{\mathrm{gran}}
=
\frac{2\mathrm{P}_{\mathrm{gran}}\mathrm{R}_{\mathrm{gran}}}
{\mathrm{P}_{\mathrm{gran}}+\mathrm{R}_{\mathrm{gran}}}.
\end{gathered}
\end{equation}
Again, when the denominator of $\mathrm{F1}_{\mathrm{gran}}$ is zero, we
set $\mathrm{F1}_{\mathrm{gran}}$ to zero.

We also report the granularity-aware counterpart of image reliability:
\begin{equation}
\small
    \mathrm{GIR}
    =
    \frac{1}{|\mathcal{Z}|}
    \sum_{z\in\mathcal{Z}}
    \mathbb{I}[M_z=S_z]
    \cdot
    \frac{M_{\mathrm{gran},z}}{M_z},
\end{equation}
where the granularity term is set to zero when $M_z=0$.
This metric gives non-zero credit only to responses that are fully
reliable under the granularity-neutral criterion, and further weights
such responses by their average granularity-aware score. Finally, we
report
\begin{equation}
    \mathrm{G}_{\mathrm{avg}}
    =
    \frac{M_{\mathrm{gran}}}{M},
\end{equation}
where $\mathrm{G}_{\mathrm{avg}}$ is set to zero when $M=0$.
This diagnostic metric measures the average granularity-aware score among
grounded predictions. It helps distinguish a model that is reliable but
mostly produces coarse descriptions from one that is reliable and also
more specific.

\paragraph{Well-definedness under non-unique assignments.}
The lexicographic optimization in Eq.~(\ref{eq:lexicographic_flow}) may
admit multiple optimal assignment matrices. This non-uniqueness can occur
when several GT entities receive exactly the same assignment scores, or when
multiple assignments lead to the same reliability and granularity-aware
objectives. However, the reported metrics do not depend on which optimal
assignment matrix is selected.

To see this, consider an example $z$ and let
$\mathcal{X}^{\star}_z$ denote the set of lexicographically optimal
assignment matrices for this example. By definition, every
$\mathbf{X}^{\star}\in\mathcal{X}^{\star}_z$ first achieves the same
maximal value of the primary objective $M(\mathbf{X})$. Therefore,
$M_z$ is invariant across all optimal assignments. Among the assignments
that maximize $M(\mathbf{X})$, the second stage maximizes
$M_{\mathrm{gran}}(\mathbf{X})$. Thus, every lexicographically optimal
assignment also achieves the same maximal value of
$M_{\mathrm{gran}}(\mathbf{X})$, making $M_{\mathrm{gran},z}$ invariant
as well.

The remaining quantities, $S_z$ and $D_z$, are determined directly by
the parsed model response and the GT annotations, and do not depend on
the assignment matrix. Consequently, the example-level scalar quantities
used by the metrics, i.e., $S_z$, $D_z$, $M_z$, and
$M_{\mathrm{gran},z}$, are uniquely determined even if the concrete
optimal assignment matrix is not unique. Since all reported metrics are
functions only of these scalar quantities, the benchmark scores are
well-defined. Non-uniqueness may affect which tied assignment is chosen
for qualitative visualization, but it does not affect any reported
quantitative metric.

\subsection{Uncertain Quantity Handling}
\label{sec:appendix_uncertainty}

Some images contain objects whose exact quantities are difficult to
determine. For example, products on a crowded shelf may be heavily
occluded, partially visible, or arranged in a way that makes exact
counting unreliable. Model responses may also use vague quantity
expressions such as ``several'', ``many'', or ``a group of''. To avoid
treating all such cases as exact counting errors, we explicitly mark
uncertain quantities and handle them separately during assignment and
metric computation.

\paragraph{Quantity normalization.}
For both parsed predictions and GT annotations, each entity is associated
with a quantity type. If an entity has an explicit count, we treat it as
a numeric quantity. If no count is specified, we use a default quantity
of one. If the quantity is expressed as uncertain, approximate, ranged,
or plural-but-unspecified, we mark its quantity type as uncertain.
The uncertain label means that the entity should not be interpreted as
having a fixed exact count, even though a temporary unit value can be
used for the initial assignment step.

\paragraph{Initial assignment.}
The capacity-constrained solver requires finite supplies and capacities.
Therefore, in the initial flow problem, uncertain quantities are
temporarily instantiated with unit supply or capacity, while their
uncertain quantity type is retained. This gives the solver a conservative
initial assignment without committing to a precise count. The standard
capacity constraints and granularity-aware edge utilities are then
applied as described in Appendix~\ref{sec:appendix_assignment_solver}.

\paragraph{Post-assignment repair.}
After the initial assignment, we apply a limited repair step for
uncertain quantities. The repair is augment-only: it does not re-solve
the full assignment problem, but only expands uncertain quantities along
existing category-compatible edges.

First, if a prediction has remaining unmatched quantity and it is
connected to a GT entity whose quantity is uncertain, the uncertain GT
capacity may be expanded to absorb the residual prediction quantity.
Intuitively, when the GT annotation indicates that the exact number is
uncertain, extra compatible predicted instances should not necessarily be
treated as over-counting errors.

Second, if a GT entity has remaining uncovered quantity and it is
connected to a prediction whose quantity is uncertain, the uncertain
prediction supply may be expanded to cover the residual GT quantity.
This reflects the fact that a vague prediction such as ``several
bottles'' can cover more than one compatible GT instance when the exact
predicted count is not specified.

When multiple compatible edges are available for such repair, we prefer
edges with higher granularity-aware utility and stronger attribute
agreement, using a deterministic tie-breaking rule. This keeps the repair
consistent with the assignment objective while avoiding arbitrary
allocation of uncertain quantities.

\paragraph{Credit discount for uncertain predictions.}
Uncertain quantities on the prediction side are handled conservatively in
metric computation. If a prediction has uncertain quantity and is matched
to a GT entity with a certain quantity, we apply a partial credit
discount to the matched true-positive quantity. If both the prediction
and the GT entity have uncertain quantities, no discount is applied.

Formally, for an assigned prediction--GT edge $(p_i,g_j)$ with assigned
quantity $x_{ij}$, we define an effective matched quantity
\begin{equation}
\small
    \widehat{x}_{ij}
    =
    \rho_{ij} x_{ij},
\end{equation}
where we set $\rho_{ij}=\lambda_{\mathrm{unc}}$ only when $p_i$ has uncertain
quantity and $g_j$ has certain quantity; otherwise, $\rho_{ij}=1$.
Thus,
\begin{equation}
\small
    \rho_{ij}
    =
    \begin{cases}
    \lambda_{\mathrm{unc}}, & \mathrm{unc}(p_i)\wedge \neg\mathrm{unc}(g_j), \\
    1, & \text{otherwise},
    \end{cases}
\end{equation}
where $\mathrm{unc}(\cdot)$ indicates uncertain quantity status. We set
$\lambda_{\mathrm{unc}}=0.5$ in our implementation. Thus, an
uncertain prediction receives partial credit when matched to a GT entity
with a known exact count, because the prediction is category-compatible
but does not make an exact quantity commitment. In contrast, when both
sides have uncertain quantities, the match is not penalized for quantity
uncertainty.

The effective matched quantity $\widehat{x}_{ij}$ is used when
accumulating matched credit for the reliability-oriented metrics. The
total predicted quantity $S_z$ still counts the prediction-side quantity,
so uncertain predictions do not receive free precision. They only receive
reduced matched credit when their quantity uncertainty is less specific
than the GT quantity.

\paragraph{Metric accounting.}
Uncertain quantity handling does not introduce a separate metric. Instead,
it affects the quantities used by the existing metrics in
Section~\ref{sec:flow_metrics}. Matched real-GT assignments contribute to
the matched quantities according to their effective credit. Prediction
quantity that remains unmatched after assignment and repair does not
contribute to $M_z$ or $M_{\mathrm{gran},z}$, while the corresponding
predicted quantity is still included in $S_z$. Therefore, unmatched
prediction quantity lowers precision by reducing the fraction of
predicted content that is grounded, rather than by adding a separate
penalty term.

Similarly, uncovered GT quantity affects recall through $D_z$ and the
matched quantity. If an uncertain prediction is expanded to cover
compatible GT residual quantity, the additional matched quantity is
credited according to the discount rule above. This allows vague quantity
predictions to receive appropriate credit when they are compatible with
the GT, while still penalizing them when the GT provides a more precise
count.

\paragraph{Uncertain quantities outside the solver.}
Some entities do not enter the capacity-constrained solver. For example,
a prediction may be rejected before assignment because its category is
incompatible with all GT entities, or a GT entity may have no compatible
prediction. When such entities have uncertain quantities, we estimate
their effective quantities using the average quantity of comparable
entities with known counts in the same evaluation instance when
available, and fall back to a unit quantity otherwise. This convention
prevents uncertain quantities outside the solver from being arbitrarily
treated as exact large counts, while still allowing them to contribute to
the precision or recall denominator in a controlled way.

\section{Detailed Experimental Settings}
\label{sec:appendix_exp_details}

\subsection{Evaluated Models}
\label{sec:appendix_models}

We evaluate both open-source and closed-source MLLMs to cover a broad
range of model families, model scales, and deployment settings.
Table~\ref{tab:appendix_evaluated_models} summarizes the evaluated
models and how they are accessed. For locally evaluated models, we load
the corresponding HuggingFace checkpoints. For API-based models, we use
the API-accessible versions available at the time of evaluation. All
models are evaluated with the same set of images, prompt styles, and
evaluation pipeline.

\begin{table*}[t]
\centering
\small
\begin{tabular}{lll}
\toprule
Model & Access & Size \\
\midrule
InstructBLIP-Vicuna-7B~\cite{dai2023instructblip}
& Local HF checkpoint & 7B \\

InternVL3.5-8B~\cite{zhu2025internvl3}
& Local HF checkpoint & 8B \\

Qwen2.5-VL-7B~\cite{bai2025qwen25vl}
& Local HF checkpoint & 7B \\

Qwen3-VL-8B-Instruct~\cite{bai2025qwen3vl}
& Local HF checkpoint & 8B \\

Kimi-K2.6~\cite{moonshot2026kimik26}
& API & -- \\

GLM-4.6V~\cite{hong2025glmv}
& API & -- \\

GPT-5.4~\cite{openai2026gpt54}
& API & -- \\

Gemini-3.1-Flash-Lite~\cite{googledeepmind2026gemini31flashlite}
& API & -- \\
\bottomrule
\end{tabular}
\caption{
Evaluated MLLMs. The access column indicates whether each model is
evaluated by loading local HuggingFace weights or through an API. For
API-accessed models, we leave the size unspecified.
}
\label{tab:appendix_evaluated_models}
\end{table*}

For locally evaluated models, we follow the official model-specific
conversation template and image preprocessing pipeline when available.
For API-based models, we submit the same image and text prompt through
the corresponding API interface. We do not apply model-specific prompt
engineering beyond adapting the input format required by each interface.
The three evaluation prompt styles are kept semantically identical across
models and are provided in Appendix~\ref{sec:appendix_prompts}. Decoding
and inference settings are described in
Appendix~\ref{sec:appendix_decoding}.

\subsection{Decoding and Inference Settings}
\label{sec:appendix_decoding}

For each image--prompt pair, we generate one response from each model.
All prompt styles use the same decoding configuration within each model,
so that differences among conservative, neutral, and aggressive prompts
are induced by the prompt instructions rather than by decoding changes.

For locally evaluated models, we use deterministic decoding with sampling
disabled, temperature set to $0$, and a maximum generation length of
2048 new tokens. Since sampling is disabled, sampling-specific parameters
such as top-$p$ do not affect generation. For API-based models, we use
the closest available deterministic setting provided by the corresponding
API and set the maximum output length to 2048 tokens when the option is
available.

For locally evaluated models, we use the official conversation template
and default image preprocessing when available. For API-based models, we
use the corresponding API interface and only adapt the input format as
required by the provider. All generated responses are processed by the
same response parsing, assignment, and metric computation pipeline
described in Appendix~\ref{sec:appendix_eval_details}.

\paragraph{Reporting.}
Unless otherwise stated, all reported results are aggregate metrics computed over the full \textsc{GranFact} evaluation set; model performance is reported from a single evaluation run under the specified prompt style and decoding settings.
We do not report error bars, standard deviations, or confidence intervals across multiple random seeds or repeated runs.
To reduce stochastic variation, we use deterministic decoding whenever available, including disabling sampling and setting temperature to $0$ for locally evaluated models.

\subsection{Evaluation Prompts}
\label{sec:appendix_prompts}

We evaluate each model under three prompt styles to study how different
levels of requested specificity affect reliability and granularity. The
neutral prompt asks for a standard image description, the conservative
prompt encourages factual description without guessing, and the
aggressive prompt encourages more detailed generation. All prompts require
English responses and are applied consistently across models.

\begin{table}[t]
\centering
\small
\begin{tabularx}{\columnwidth}{lX}
\toprule
Prompt style & Prompt \\
\midrule
Neutral &
Describe the image. Respond in English only. \\

Conservative &
Describe the image factually and refrain from guessing unobserved details.
Respond in English only. \\

Aggressive &
Describe the image in as much detail as possible. Respond in English only. \\
\bottomrule
\end{tabularx}
\caption{
Evaluation prompts used for the three prompt styles.
}
\label{tab:appendix_eval_prompts}
\end{table}

\subsection{Auxiliary Training Data Construction}
\label{sec:appendix_training_data}

To train our alignment method, we construct a small auxiliary training
set that is separate from the \textsc{GranFact} evaluation set. The goal
is to provide fine-grained, structured training examples without using
the evaluation images. We focus on five domains: daily objects, animals,
plants, electronics, and cars.

We first collect textual category labels from these domains and use them
as image search queries. For each category label, we retrieve five
candidate images from the web. The candidate images are first screened by
an MLLM, Qwen3.5-27B, to remove clearly mismatched or visually unsuitable
images. They are then reviewed by human annotators, who select the image
that best matches the target category when a suitable candidate is
available. If none of the five retrieved images is suitable for a
category, that category is discarded. Table~\ref{tab:appendix_training_label_filtering}
summarizes the number of category labels before and after this filtering
process.

\begin{table}[t]
\centering
\small
\begin{tabular}{lrr}
\toprule
Domain & Initial labels & Retained labels \\
\midrule
Daily objects & 600 & 496 \\
Animals       & 150 & 104 \\
Plants        & 100 & 63  \\
Electronics   & 250 & 208 \\
Cars          & 150 & 104 \\
\midrule
Total         & 1250 & 975 \\
\bottomrule
\end{tabular}
\caption{
Category-label filtering statistics for the auxiliary training set.
}
\label{tab:appendix_training_label_filtering}
\end{table}

After filtering, we randomly combine several retained categories within
each domain to form image-generation specifications. For daily objects,
animals, plants, and electronics, each generated image contains categories
sampled from a range of 3 to 5. For cars, we use a smaller range of 2 to
4 categories, since fine-grained car categories often require more visual
space to remain identifiable. We then use Gemini-3.1-Flash-Image-Preview
to synthesize images from these category combinations. After generation,
we use GPT-5.4 to produce structured annotations for the synthesized
images, including entity-level coarse-to-fine category path and visual attributes.
The generated image--annotation pairs are then checked for validity.

The final auxiliary training set contains 525 images and 1986 annotated
object instances. Table~\ref{tab:appendix_training_domain_stats}
summarizes the per-domain statistics.

\begin{table}[t]
\centering
\small
\resizebox{\columnwidth}{!}{
\begin{tabular}{lrrrr}
\toprule
Domain & Images & Labels & Objects & Avg. obj/img \\
\midrule
Daily objects & 250 & 496 & 990 & 3.96 \\
Animals       & 50  & 104 & 194 & 3.88 \\
Plants        & 50  & 63  & 204 & 4.08 \\
Cars          & 75  & 104 & 195 & 2.60 \\
Electronics   & 100 & 208 & 403 & 4.03 \\
\midrule
Total         & 525 & 975 & 1986 & 3.78 \\
\bottomrule
\end{tabular}
}
\caption{
Domain-level statistics of the auxiliary training set.
}
\label{tab:appendix_training_domain_stats}
\end{table}

The auxiliary training set is constructed using textual category labels
from the \textsc{GranFact} domains, while all training images are kept
image-disjoint from the \textsc{GranFact} evaluation set.

Figure~\ref{fig:appendix_training_statistics} visualizes the domain
distribution, object-count distribution, and category-depth distribution
of the auxiliary training set.

\begin{figure*}[htbp]
\centering
\begin{minipage}{0.32\textwidth}
    \centering
    \includegraphics[width=\linewidth]{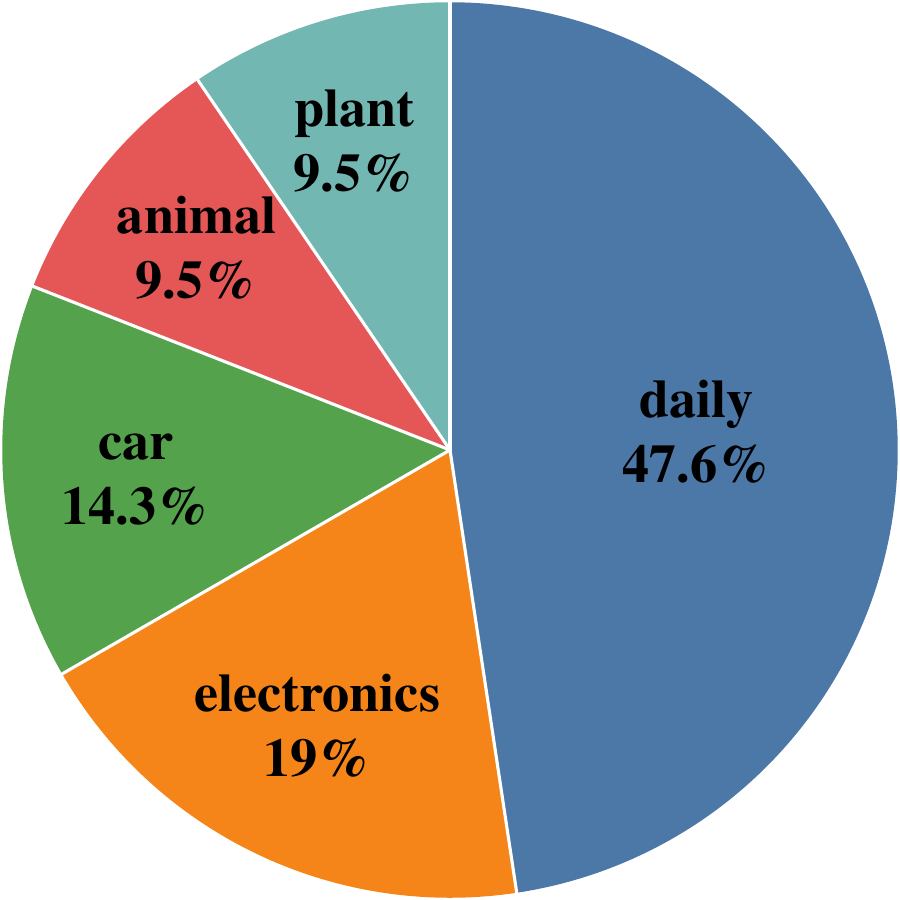}
\end{minipage}
\hfill
\begin{minipage}{0.32\textwidth}
    \centering
    \includegraphics[width=\linewidth]{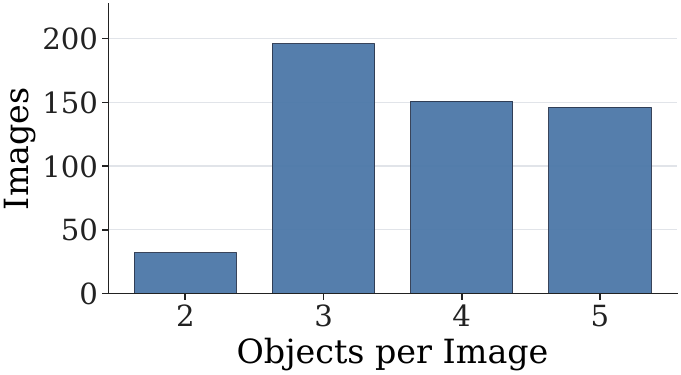}
\end{minipage}
\hfill
\begin{minipage}{0.32\textwidth}
    \centering
    \includegraphics[width=\linewidth]{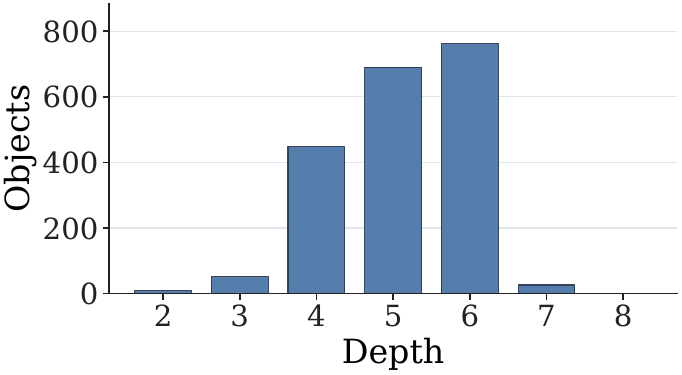}
\end{minipage}
\caption{
Statistics of the auxiliary training set. Left: domain distribution.
Middle: number of annotated objects per image. Right: distribution of
category-path depths.
}
\label{fig:appendix_training_statistics}
\end{figure*}

Overall, the auxiliary training set provides a compact source of
structured fine-grained supervision for preference construction. Its
scale is deliberately smaller than typical large-scale instruction-tuning
datasets, reflecting a low-resource setting where only a limited number
of fine-grained structured annotations are available.

\subsection{Reliability-Guided Semantic Rollback Prompts}
\label{sec:appendix_rsr_modifier_prompts}
\label{sec:appendix_fsr_modifier_prompts}

Reliability-Guided Semantic Rollback (RSR) is used to construct
reliability-oriented preference pairs. Given an original model response
and the diagnostic results from the evaluation pipeline, RSR revises
unsupported or overly specific category claims while preserving the
supported content of the response as much as possible.

RSR uses the image-specific GT category hierarchy to decide whether a
predicted category should be kept, deleted, or rolled back. For each
predicted category, we provide the LLM judge with the GT hierarchy of the
current image. The judge first determines the deepest GT-supported node
that the prediction can match. It then checks whether the prediction is
consistent with the child nodes under that matched node. This second
check is important because an over-specific prediction may share a coarse
ancestor with the GT hierarchy while still making a mutually exclusive
fine-grained claim.

Based on this hierarchy-aware judgment, RSR generates edit commands using
the following rules. If a predicted category cannot match any node in the
GT hierarchy, the corresponding claim is marked for deletion. If the
prediction matches a non-leaf node but contains a fine-grained claim that
conflicts with the child nodes under that node, the claim is rolled back
to the matched node. Otherwise, the prediction is kept unchanged. The
resulting edit commands are then applied to the original response by a
modifier model.

\begin{figure*}[t]
\centering
\begin{PromptCardNoBreak}
{\small
\promptlbl{RSR Edit Command Prompt}

\vspace{3pt}

\textbf{Role.}
You are a hierarchy-aware reliability judge. Given predicted category
claims from a model response and the GT category hierarchy of the current
image, decide whether each claim should be kept, deleted, or rolled back.

\vspace{5pt}

\textbf{Input.}
The input contains: (1) predicted category texts extracted from the model
response, and (2) the image-specific GT category hierarchy.

\vspace{5pt}

\textbf{Decision rules.}
For each predicted category, find the deepest GT-supported hierarchy node
that the prediction can match. If no GT-supported node can be matched,
mark the claim as \texttt{delete}. If the prediction matches a non-leaf
node but makes a fine-grained claim that conflicts with the child nodes
under that node, mark the claim as \texttt{rollback} and roll it back to
the matched node. Otherwise, mark the claim as \texttt{keep}. Missing
specificity is not a conflict: a coarse but correct prediction should be
kept. Ordinary visual modifiers such as color, material, size, pose, or
position should not be treated as category conflicts unless they are part
of an identity-bearing category name.

\vspace{5pt}

\textbf{Output.}
Return only a JSON array. Each item should correspond to one predicted
category claim:
\begin{verbatim}
[
  {
    "predicted_category": "...",
    "action": "keep | delete | rollback",
    "matched_node": "... or null",
    "rollback_to": "... or null",
    "reason": "brief explanation"
  }
]
\end{verbatim}

For \texttt{keep}, set \texttt{rollback\_to} to \texttt{null}. For
\texttt{delete}, set both \texttt{matched\_node} and
\texttt{rollback\_to} to \texttt{null}. For \texttt{rollback}, set
\texttt{rollback\_to} to the supported GT hierarchy node that should
replace the unsupported fine-grained claim.
}
\end{PromptCardNoBreak}
\caption{
Prompt used for generating edit commands in Reliability-Guided Semantic
Rollback.
}
\label{fig:appendix_rsr_edit_prompt}
\end{figure*}


The modifier receives the original response and the edit instructions, and directly returns the revised English description. It is instructed to apply only the specified edits, preserve unrelated content, and avoid introducing new visual claims.

\begin{figure*}[t]
\centering
\begin{PromptCardNoBreak}
{\small
\promptlbl{RSR Modifier Prompt}

\vspace{3pt}

\textbf{Role.}
You are a careful revision assistant for image descriptions. Your task is
to minimally edit an existing English image description according to
explicit object-level edit instructions.

\vspace{5pt}

\textbf{Input.}
The input is a JSON object containing the original description and a list
of edit instructions:
\begin{verbatim}
{
  "original_description": "...",
  "edit_instructions": [
    {
      "action": "rollback | delete",
      "source_category": "...",
      "target_category": "... or null"
    }
  ]
}
\end{verbatim}

\vspace{5pt}

\textbf{Instructions.}
For \texttt{action=rollback}, rewrite mentions of
\texttt{source\_category} as \texttt{target\_category} and remove
incompatible identity-specific details. For \texttt{action=delete},
remove sentences or clauses describing the hallucinated
\texttt{source\_category}. If a hallucinated object is intertwined with a
sentence about real objects, edit only the hallucinated part and keep the
real-object content. Preserve all unrelated content as much as possible.
Do not add new objects, attributes, counts, or visual details. Keep the
final description fluent, coherent, and in English.

\vspace{5pt}

\textbf{Output.}
Return only the final revised English description, with no explanation,
no markdown, and no quotes around the whole answer. Do not mention the
edit instructions or the words \texttt{hallucination},
\texttt{source\_category}, or \texttt{target\_category} in a meta way.
}
\end{PromptCardNoBreak}
\caption{
Prompt used for applying edit instructions and producing the RSR-revised
response.
}
\label{fig:appendix_rsr_modifier_prompt}
\end{figure*}

When RSR produces a revised response, we pair it with the original
response to form a reliability preference pair. The revised response is
treated as preferred because it preserves the supported content of the
original response while reducing unsupported or overly specific category
claims.

\subsection{Preference Pair and Margin Construction}
\label{sec:appendix_preference_data}

This subsection provides a brief recap of how the training preference
pairs are instantiated, together with illustrative examples. As
described in Section~\ref{sec:pref_construction}, we construct two
complementary preference sets: reliability preferences
$\mathcal{D}_{\mathrm{rel}}$ and granularity preferences
$\mathcal{D}_{\mathrm{gran}}$.

For each training image $I$, we sample $K$ candidate responses
$\mathcal{Y}_{\mathrm{orig}}(I)=\{y_{\mathrm{orig}}^{(k)}\}_{k=1}^{K}$
from the base model, and then apply Reliability-Guided Semantic Rollback
(RSR) to obtain the rectified response pool
$\mathcal{Y}_{\mathrm{rect}}(I)=\{\mathrm{RSR}(y_{\mathrm{orig}}^{(k)})\}_{k=1}^{K}$.
All preference pairs are constructed within the same image, so that the
comparison reflects response quality rather than image difficulty.

\paragraph{Reliability preferences.}
Reliability preferences compare an original response with its
RSR-rectified version. When RSR modifies a response by deleting an
unsupported category claim or rolling back an over-specific claim to a
supported coarser category, we form a preference pair
$(I,y_{+},y_{-})=(I,y_{\mathrm{rect}}^{(k)},y_{\mathrm{orig}}^{(k)})$.
The rectified response is treated as preferred because it preserves the
supported content of the original response while reducing unsupported
claims. For such pairs, the margin is computed from the improvement in
granularity-neutral precision, following
Section~\ref{sec:margin_dpo}.

\paragraph{Granularity preferences.}
Granularity preferences are constructed within the rectified response
pool of the same image. Specifically, we compare pairs
$(y_{+},y_{-})$ drawn from $\mathcal{Y}_{\mathrm{rect}}(I)$ and retain a
pair only when
$\mathrm{F1}_{\mathrm{gran}}(I,y_{+})
>
\mathrm{F1}_{\mathrm{gran}}(I,y_{-})+\tau$.
This encourages the model to prefer more informative responses after
reliability has already been improved by RSR. For these pairs, the
margin is computed from the normalized
$\mathrm{F1}_{\mathrm{gran}}$ gap.

\paragraph{Margins.}
Each preference pair is assigned an instance-specific margin
$m=\gamma+\alpha\delta$, where $\delta\in[0,1]$ is the normalized metric
gap of that pair. The gap definition depends on the preference type:
reliability pairs use the improvement in $\mathrm{P}$, while
granularity pairs use the improvement in
$\mathrm{F1}_{\mathrm{gran}}$. In practice, we use larger margin
hyperparameters for reliability preferences than for granularity
preferences, so that the overall training objective remains
reliability-first while still rewarding supported fine-grained
descriptions.

Figure~\ref{fig:appendix_reliability_preference_pair} shows an example
of a reliability preference pair, including the original response, the
RSR-rectified response, their metric scores, and the resulting margin.
Figure~\ref{fig:appendix_granularity_preference_pair} shows an example
of a granularity preference pair, where both responses are reliable, but
the preferred one receives a higher granularity-aware score and thus a
positive margin.

\begin{figure*}[t]
    \centering
    \includegraphics[width=\textwidth]{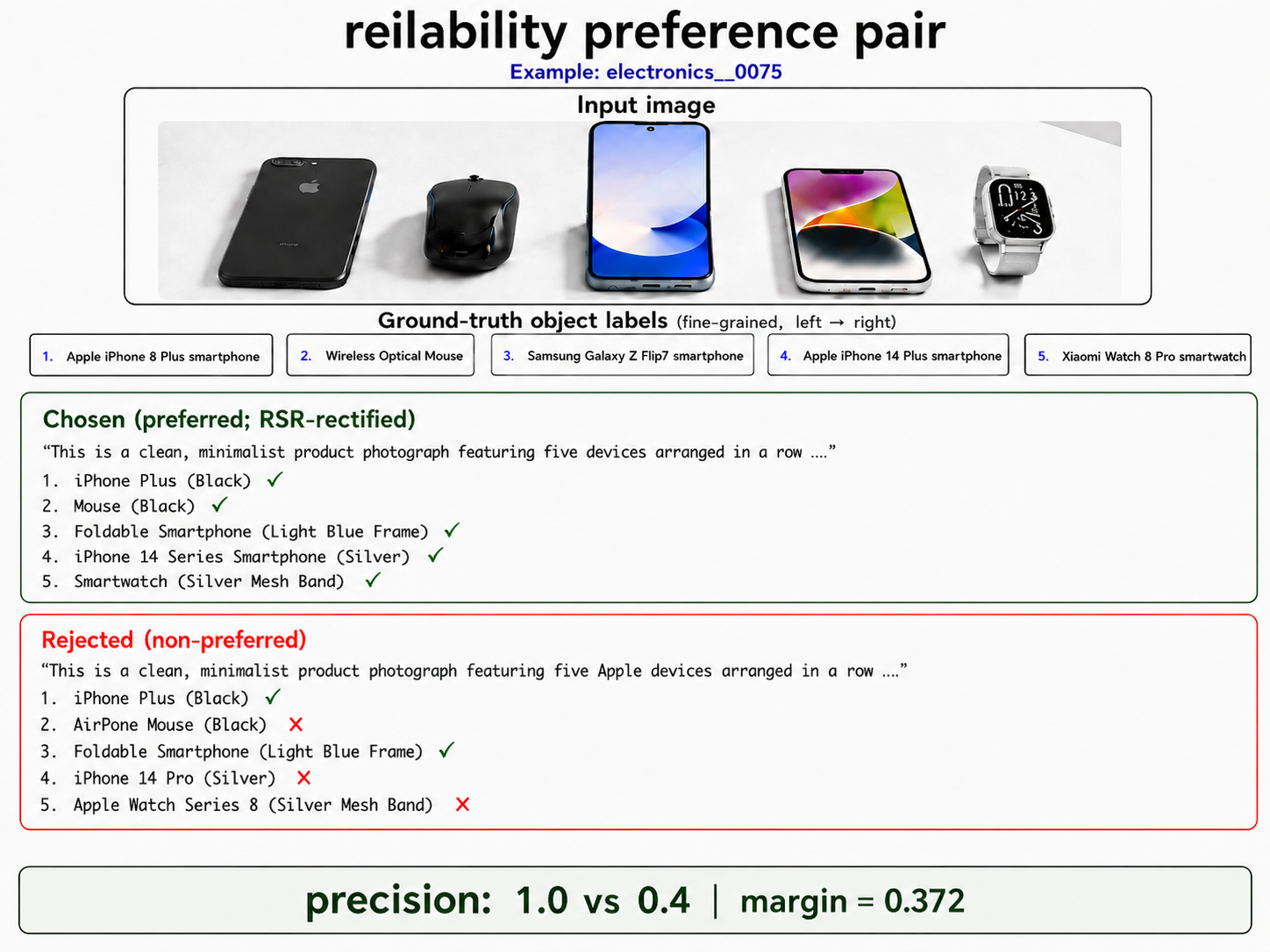}
    \caption{
    Example of a reliability preference pair.
    The preferred response is the RSR-rectified response, while the
    dispreferred response is the original response.
    The figure also shows the corresponding evaluation scores and the
    margin computed from the precision gap.
    }
    \label{fig:appendix_reliability_preference_pair}
\end{figure*}

\begin{figure*}[t]
    \centering
    \includegraphics[width=\textwidth]{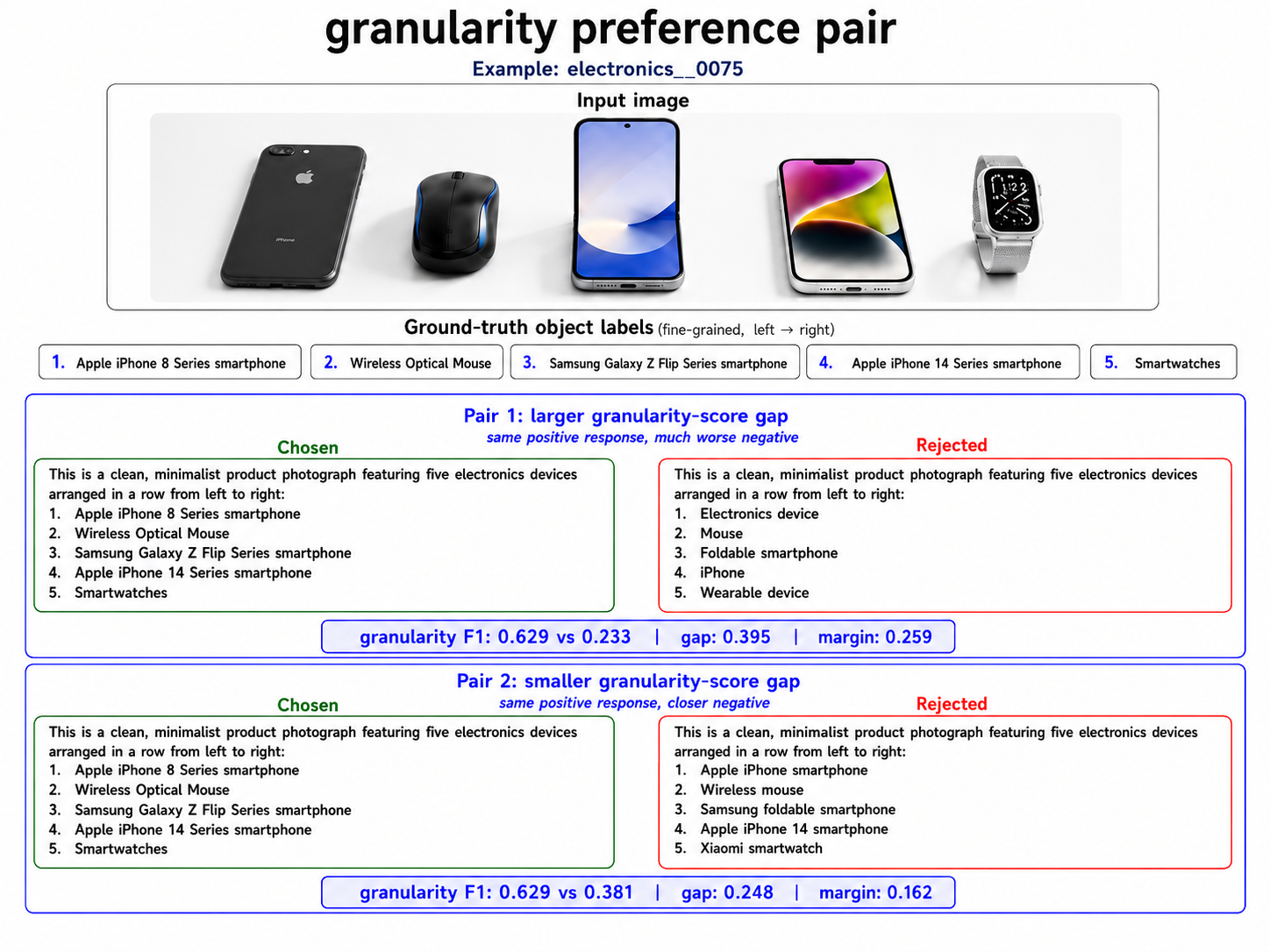}
    \caption{
    Example of a granularity preference pair.
    Both responses are drawn from the RSR-rectified response pool of the
    same image. The preferred response is selected because it achieves a
    higher granularity-aware score, and the figure shows the
    corresponding metric gap and margin.
    }
    \label{fig:appendix_granularity_preference_pair}
\end{figure*}

\subsection{Training Hyperparameters}
\label{sec:appendix_training_hparams}

For the model budget, RP-DPO is trained on Qwen3-VL-8B-Instruct, which has approximately 8B parameters. 
Training is conducted with LoRA, so only a small subset of adapter parameters is updated while the base model weights remain frozen.
All training runs are performed on 8 NVIDIA GeForce RTX 4090D GPUs. 
The full RP-DPO training run takes approximately \textbf{6} wall-clock hours, corresponding to approximately \textbf{48}$\times$ GPU-hours.

We train our method starting from Qwen3-VL-8B-Instruct as the base policy
model. The reference model in DPO is initialized from the same checkpoint
and kept frozen unless otherwise specified. Across response extraction,
assignment-score computation, and RSR, we use Qwen3.5-27B as the LLM
judge with temperature set to $0$ for reproducibility.

We use MS-Swift as the training framework and implement the RP-DPO
objective with our custom \texttt{RPDPOTrainer}. The main training
settings are summarized in
Table~\ref{tab:appendix_training_hparams}. Unless otherwise specified,
we use the final checkpoint after training for evaluation.

\begin{table}[t]
\centering
\small
\resizebox{\linewidth}{!}{%
\begin{tabular}{ll}
\toprule
Hyperparameter & Value \\
\midrule
Base policy model & Qwen3-VL-8B-Instruct \\
Reference model & Frozen initial policy \\
LLM judge & Qwen3.5-27B \\
Judge temperature & 0 \\
Training framework & MS-Swift \\
Trainer & Custom \texttt{RPDPOTrainer} \\
\midrule
$\lambda$ & 0.5 \\
$\gamma_{\mathrm{rel}}$ & 0.1 \\
$\alpha_{\mathrm{rel}}$ & 0.5 \\
$\gamma_{\mathrm{gran}}$ & 0 \\
$\alpha_{\mathrm{gran}}$ & 0.3 \\
DPO $\beta$ & 0.1\\
Pair filtering threshold $\tau$ & 0.002 \\
\midrule
Learning rate & $1\times 10^{-4}$ \\
Per-device train batch size & 1 \\
Gradient accumulation steps & 16 \\
Training steps & 33 \\
Precision & bf16 \\
Model size & 8B parameters \\
Hardware & 8 $\times$ NVIDIA GeForce RTX 4090D GPUs \\
GPU memory & 24GB per GPU \\
Training wall-clock time & \textbf{6} hours \\
Training compute budget & \textbf{48} GPU-hours \\
\midrule
LoRA rank & 32 \\
LoRA alpha & 32 \\
LoRA target modules & -all-linear \\
\bottomrule
\end{tabular}%
}
\caption{
Training settings for RP-DPO.
}
\label{tab:appendix_training_hparams}
\end{table}

\section{Additional Experimental Results}
\label{sec:appendix_additional_results}

This section provides additional experiments on comparison methods,
training stability, evaluation reliability, human-perceived generation
quality, and generalization.

\subsection{Comparison with Additional Baselines}
\label{sec:appendix_additional_baselines}

We additionally compare RP-DPO with two representative methods:
Visual Contrastive Decoding (VCD)~\cite{leng2024mitigating}, a
training-free hallucination mitigation method, and Multi-Objective Direct
Preference Optimization (MODPO)~\cite{zhou2024beyond}, a preference
optimization method for multiple objectives.
For a controlled comparison, MODPO uses the same base model, rollout data,
and training budget as RP-DPO, while VCD is applied to the same base model
at inference time.
All methods are evaluated using Qwen3-VL-8B under the aggressive prompt
style and the same evaluation pipeline.

\begin{table*}[t]
\centering
\small
\resizebox{\textwidth}{!}{%
\begin{tabular}{lccccccccc}
\toprule
Method &
GIR &
$P_{\mathrm{gran}}$ &
$R_{\mathrm{gran}}$ &
$F1_{\mathrm{gran}}$ &
$G_{\mathrm{avg}}$ &
IR &
P &
R &
F1 \\
\midrule
Prompt-based baseline
& 0.2450
& 0.3981
& 0.3907
& 0.3944
& 0.6040
& 0.3718
& 0.6591
& 0.6469
& 0.6529 \\

VCD
& 0.2431
& 0.3971
& 0.3889
& 0.3930
& 0.6010
& 0.3775
& 0.6607
& 0.6471
& 0.6538 \\

MODPO
& 0.2425
& 0.3926
& 0.3785
& 0.3854
& 0.5848
& 0.3879
& 0.6713
& 0.6473
& 0.6591 \\

RP-DPO
& \textbf{0.2655}
& \textbf{0.4183}
& \textbf{0.4034}
& \textbf{0.4107}
& \textbf{0.6225}
& \textbf{0.4017}
& \textbf{0.6719}
& \textbf{0.6480}
& \textbf{0.6597} \\
\bottomrule
\end{tabular}%
}
\caption{
Comparison with hallucination mitigation and preference optimization
baselines. All methods use Qwen3-VL-8B under the aggressive prompt style.
}
\label{tab:appendix_additional_baselines}
\end{table*}

VCD slightly improves IR over the prompt-based baseline, from 0.3718 to
0.3775, while reducing $G_{\mathrm{avg}}$ from 0.6040 to 0.6010.
MODPO further improves IR to 0.3879, but reduces $G_{\mathrm{avg}}$ more
substantially to 0.5848.
In contrast, RP-DPO improves both reliability and supported granularity,
achieving an IR of 0.4017 and a $G_{\mathrm{avg}}$ of 0.6225.
It also obtains the highest values across the other reported metrics.
These results indicate that RP-DPO improves reliability without encouraging
systematically coarser descriptions.

\subsection{Robustness across Training Seeds}
\label{sec:appendix_seed_robustness}

To assess training variability, we repeat RP-DPO training with three
different random seeds using Qwen3-VL-8B and evaluate the resulting
checkpoints under the aggressive prompt style.
Table~\ref{tab:appendix_seed_robustness} reports the mean and standard
deviation across the three training runs.
The prompt-based baseline does not involve preference training and is
reported as a fixed reference.

\begin{table}[t]
\centering
\small
\resizebox{\columnwidth}{!}{%
\begin{tabular}{lcccc}
\toprule
Method &
IR &
GIR &
$F1_{\mathrm{gran}}$ &
$G_{\mathrm{avg}}$ \\
\midrule
Prompt-based baseline
& 0.3718
& 0.2450
& 0.3944
& 0.6040 \\

RP-DPO
& $0.4016 \pm 0.0062$
& $0.2672 \pm 0.0015$
& $0.4099 \pm 0.0007$
& $0.6210 \pm 0.0033$ \\
\bottomrule
\end{tabular}%
}
\caption{
Results across three RP-DPO training runs with different random seeds.
RP-DPO results are reported as mean $\pm$ standard deviation.
}
\label{tab:appendix_seed_robustness}
\end{table}

Across the three runs, RP-DPO consistently outperforms the prompt-based
baseline on IR, GIR, $F1_{\mathrm{gran}}$, and $G_{\mathrm{avg}}$, while
exhibiting small standard deviations.
These results show that the observed improvements are stable across
different training seeds.

\subsection{Reliability of the Evaluation Pipeline}
\label{sec:appendix_evaluation_reliability}

We assess the reliability of the proposed evaluation pipeline from two
complementary perspectives.
First, human experts check the intermediate evaluation outputs produced by
the original judge model.
Second, we re-run the evaluation pipeline using two additional LLM judges
and examine whether the resulting metrics and model rankings remain
consistent.

\subsubsection{Human Check of Evaluation Outputs}
\label{sec:appendix_human_check}

We randomly sample 100 evaluation examples and present the intermediate
outputs produced by the original judge model, Qwen3.5-27B, to 10 human
experts.
The human check covers object extraction, prediction-to-ground-truth
matching, granularity assignment, and attribute matching.
Each case is assigned an integer score from 0 to 5 according to
Table~\ref{tab:appendix_human_check_rubric}.

\begin{table*}[t]
\centering
\small
\begin{tabularx}{\textwidth}{cX}
\toprule
Score & Criterion \\
\midrule
0 &
The judge fails to extract the target objects from the model response. \\

1 &
The judge extracts only part of the target objects, with major missing or
spurious objects. \\

2 &
The judge extracts most target objects correctly, but the
prediction-to-GT matching contains clear errors. \\

3 &
The judge extracts target objects and matches them to GT objects reasonably
well, but there are noticeable errors in granularity or attribute matching. \\

4 &
The judge correctly extracts target objects and matches them to GT objects,
with only minor errors in granularity or attributes. \\

5 &
The judge produces fully correct object extraction, prediction-to-GT
matching, granularity assignment, and attribute matching. \\
\bottomrule
\end{tabularx}
\caption{
Rubric for the human check of intermediate evaluation outputs.
}
\label{tab:appendix_human_check_rubric}
\end{table*}

As shown in Table~\ref{tab:appendix_human_check}, the judge model achieves
a mean score of 4.37.
Moreover, 93\% of the ratings are no lower than 4, and 99\% are no lower
than 3.
This indicates that the intermediate evaluation outputs are generally
considered correct or acceptable by human experts, with only minor
discrepancies in a small fraction of cases.

\begin{table}[t]
\centering
\small
\resizebox{\columnwidth}{!}{%
\begin{tabular}{ccc}
\toprule
Mean score &
High-quality rate ($\geq 4$) &
Acceptable rate ($\geq 3$) \\
\midrule
4.37 & 0.93 & 0.99 \\
\bottomrule
\end{tabular}%
}
\caption{
Human check of the Qwen3.5-27B judge on 100 randomly sampled evaluation
examples by 10 experts.
}
\label{tab:appendix_human_check}
\end{table}

\subsubsection{Robustness across Different LLM Judges}
\label{sec:appendix_cross_judge}

We additionally use Qwen3.6-Flash and DeepSeek-V4-Flash to re-run the
evaluation pipeline on the same 100 randomly sampled examples.
We evaluate InternVL3.5-8B, Qwen3-VL-8B, and Qwen3-VL-8B trained with
RP-DPO.
For conciseness, we report IR, GIR, $G_{\mathrm{avg}}$, P, and R, since
the remaining metrics can be derived from these values.
For each alternative judge, the values in parentheses in
Table~\ref{tab:appendix_cross_judge} indicate the relative change from
Qwen3.5-27B, and the final column reports the mean relative change across
the five metrics.

\begin{table*}[t]
\centering
\scriptsize
\resizebox{\textwidth}{!}{%
\begin{tabular}{llcccccc}
\toprule
Judge &
Evaluated model &
IR &
GIR &
$G_{\mathrm{avg}}$ &
P &
R &
Mean rel. change \\
\midrule

Qwen3.5-27B
& InternVL3.5-8B
& 0.3090
& 0.1612
& 0.5008
& 0.4804
& 0.5573
& -- \\

Qwen3.5-27B
& Qwen3-VL-8B
& 0.3646
& 0.2429
& 0.6006
& 0.6561
& 0.6454
& -- \\

Qwen3.5-27B
& Qwen3-VL-8B (RP-DPO)
& 0.3958
& 0.2631
& 0.6194
& 0.6689
& 0.6473
& -- \\

\midrule

Qwen3.6-Flash
& InternVL3.5-8B
& 0.3056 $(-1.10\%)$
& 0.1585 $(-1.67\%)$
& 0.4980 $(-0.56\%)$
& 0.4765 $(-0.81\%)$
& 0.5538 $(-0.63\%)$
& $-0.95\%$ \\

Qwen3.6-Flash
& Qwen3-VL-8B
& 0.3599 $(-1.29\%)$
& 0.2395 $(-1.40\%)$
& 0.5983 $(-0.38\%)$
& 0.6514 $(-0.72\%)$
& 0.6424 $(-0.46\%)$
& $-0.85\%$ \\

Qwen3.6-Flash
& Qwen3-VL-8B (RP-DPO)
& 0.3911 $(-1.19\%)$
& 0.2590 $(-1.56\%)$
& 0.6166 $(-0.45\%)$
& 0.6645 $(-0.66\%)$
& 0.6439 $(-0.53\%)$
& $-0.88\%$ \\

\midrule

DeepSeek-V4-Flash
& InternVL3.5-8B
& 0.3022 $(-2.20\%)$
& 0.1563 $(-3.04\%)$
& 0.4965 $(-0.86\%)$
& 0.4741 $(-1.31\%)$
& 0.5511 $(-1.11\%)$
& $-1.70\%$ \\

DeepSeek-V4-Flash
& Qwen3-VL-8B
& 0.3571 $(-2.06\%)$
& 0.2350 $(-3.25\%)$
& 0.5959 $(-0.78\%)$
& 0.6468 $(-1.42\%)$
& 0.6371 $(-1.29\%)$
& $-1.76\%$ \\

DeepSeek-V4-Flash
& Qwen3-VL-8B (RP-DPO)
& 0.3866 $(-2.32\%)$
& 0.2540 $(-3.46\%)$
& 0.6138 $(-0.90\%)$
& 0.6588 $(-1.51\%)$
& 0.6394 $(-1.22\%)$
& $-1.88\%$ \\
\bottomrule
\end{tabular}%
}
\caption{
Cross-judge robustness on 100 randomly sampled examples from
\textsc{GranFact}.
Values in parentheses report the relative change from the original
Qwen3.5-27B judge.
}
\label{tab:appendix_cross_judge}
\end{table*}

The results produced by the two alternative judges remain close to those
of Qwen3.5-27B.
For every evaluated model, the magnitude of the mean relative change
across the five metrics remains below 2\%.
Moreover, the three judges produce the same relative ranking among the
evaluated models.
These results indicate that the main evaluation conclusions are robust to
the choice of LLM judge.

\subsection{Human Evaluation of Generated Descriptions}
\label{sec:appendix_human_generation_eval}

We further conduct a human evaluation of the descriptions generated by
the prompt-based Qwen3-VL-8B and RP-DPO.
We randomly sample 100 images and use the cached descriptions generated
under the same evaluation setting.
Three annotators independently score each description from 1 to 5 in terms
of factual correctness, informativeness, fluency, and overall quality.
The evaluation rubric is provided in
Table~\ref{tab:appendix_human_generation_rubric}.

\begin{table*}[t]
\centering
\small
\begin{tabularx}{\textwidth}{lX}
\toprule
Dimension & Anchor descriptions \\
\midrule

Factual correctness &
\textbf{1:} Contains major hallucinations or contradicts the image.
\textbf{3:} Mostly correct but contains minor unsupported details.
\textbf{5:} Fully supported by the image. \\

Informativeness &
\textbf{1:} Overly generic or misses salient visual content.
\textbf{3:} Covers the main content with moderate detail.
\textbf{5:} Provides rich and appropriate details without unnecessary
speculation. \\

Fluency &
\textbf{1:} Disfluent or difficult to understand.
\textbf{3:} Generally understandable with minor language issues.
\textbf{5:} Fluent, natural, and coherent. \\

Overall quality &
\textbf{1:} Poor overall description.
\textbf{3:} Acceptable description.
\textbf{5:} High-quality description that is accurate, informative, and
fluent. \\
\bottomrule
\end{tabularx}
\caption{
Human-evaluation rubric.
Annotators assign an integer score from 1 to 5 for each dimension.
Scores 2 and 4 represent intermediate quality between the adjacent
anchors.
}
\label{tab:appendix_human_generation_rubric}
\end{table*}

\begin{table}[t]
\centering
\small
\resizebox{\columnwidth}{!}{%
\begin{tabular}{lcccc}
\toprule
Method &
Factual correctness &
Informativeness &
Fluency &
Overall quality \\
\midrule
Prompt-based baseline
& 3.42
& 2.93
& \textbf{4.83}
& 3.74 \\

RP-DPO
& \textbf{3.84}
& \textbf{3.20}
& \textbf{4.83}
& \textbf{3.93} \\
\bottomrule
\end{tabular}%
}
\caption{
Human evaluation of cached prompt-based baseline and RP-DPO descriptions
on 100 randomly sampled images.
Each description is independently assessed by three annotators on a
five-point scale.
}
\label{tab:appendix_human_generation_eval}
\end{table}

As shown in Table~\ref{tab:appendix_human_generation_eval}, RP-DPO obtains
higher mean scores for factual correctness, informativeness, and overall
quality, while maintaining the same fluency score as the prompt-based
baseline.
These results indicate that the improvements measured by the automatic
evaluation pipeline are also reflected in human judgments of the generated
descriptions.

\subsection{Generalization Analysis}
\label{sec:appendix_generalization}

We examine generalization from two perspectives.
First, we evaluate RP-DPO separately on semantic classes that are covered
and not covered by the auxiliary training data.
Second, we quantify the visual distribution gap between the synthetic
training images and the real-world \textsc{GranFact} evaluation images.

\subsubsection{Seen- and Unseen-Class Performance}
\label{sec:appendix_seen_unseen}

The auxiliary training data cover five classes: cars, animals, plants,
daily objects, and electronics.
We define images from these classes as the seen-class subset.
The game and landmark classes are not used for auxiliary training and form
the unseen-class subset.
Table~\ref{tab:appendix_seen_unseen} reports the results of the
prompt-based Qwen3-VL-8B and RP-DPO on both subsets under the aggressive
prompt style.

\begin{table*}[t]
\centering
\small
\resizebox{\textwidth}{!}{%
\begin{tabular}{llccccccccc}
\toprule
Subset &
Method &
GIR &
$P_{\mathrm{gran}}$ &
$R_{\mathrm{gran}}$ &
$F1_{\mathrm{gran}}$ &
$G_{\mathrm{avg}}$ &
IR &
P &
R &
F1 \\
\midrule

Seen-class
& Prompt-based baseline
& 0.2346
& 0.4149
& 0.4228
& 0.4188
& 0.6079
& 0.3564
& 0.6825
& 0.6955
& 0.6889 \\

Seen-class
& RP-DPO
& \textbf{0.2548}
& \textbf{0.4353}
& \textbf{0.4351}
& \textbf{0.4352}
& \textbf{0.6250}
& \textbf{0.3887}
& \textbf{0.6965}
& \textbf{0.6962}
& \textbf{0.6963} \\

\midrule

Unseen-class
& Prompt-based baseline
& 0.2857
& 0.3238
& 0.2733
& 0.2964
& 0.5826
& 0.4322
& 0.5557
& 0.4691
& 0.5088 \\

Unseen-class
& RP-DPO
& \textbf{0.3075}
& \textbf{0.3437}
& \textbf{0.2874}
& \textbf{0.3130}
& \textbf{0.6091}
& \textbf{0.4525}
& \textbf{0.5643}
& \textbf{0.4718}
& \textbf{0.5139} \\
\bottomrule
\end{tabular}%
}
\caption{
Generalization results on seen- and unseen-class subsets using
Qwen3-VL-8B under the aggressive prompt style.
}
\label{tab:appendix_seen_unseen}
\end{table*}

RP-DPO improves all reported metrics on both the seen- and unseen-class
subsets.
In particular, the consistent improvements on games and landmarks indicate
that the benefits of RP-DPO are not restricted to the classes covered by
the auxiliary preference-training data.

\subsubsection{Visual Distribution Gap}
\label{sec:appendix_visual_distribution}

The seen-class subset shares semantic classes with the auxiliary training
data, but it is not necessarily visually in-distribution.
The auxiliary training images are synthetic, whereas the
\textsc{GranFact} evaluation images are real-world images with different
visual styles and scene compositions.
We therefore quantify the visual distribution gap using the Fréchet
distance between CLIP image-embedding distributions and the average
nearest-neighbor similarity between the evaluation images and the
auxiliary training set.

As an internal reference, we first compare two splits of the synthetic
training images.
We then compare the complete synthetic training set with
\textsc{GranFact}-All, \textsc{GranFact}-Seen, and
\textsc{GranFact}-Unseen.
Results are reported as mean $\pm$ standard deviation over 100 random
subsamples.

\begin{table}[t]
\centering
\small
\resizebox{\columnwidth}{!}{%
\begin{tabular}{lcc}
\toprule
Image-set comparison &
Fréchet CLIP distance &
Avg. nearest-neighbor similarity \\
\midrule
Training split A vs. Training split B
& $78.89 \pm 1.42$
& $0.7646 \pm 0.0031$ \\

Training set vs. \textsc{GranFact}-All
& $188.97 \pm 4.48$
& $0.5329 \pm 0.0078$ \\

Training set vs. \textsc{GranFact}-Seen
& $167.79 \pm 4.21$
& $0.5912 \pm 0.0076$ \\

Training set vs. \textsc{GranFact}-Unseen
& $318.67 \pm 2.22$
& $0.3210 \pm 0.0047$ \\
\bottomrule
\end{tabular}%
}
\caption{
Visual distribution comparison between the synthetic auxiliary training
images and the real-world \textsc{GranFact} evaluation images.
A larger Fréchet CLIP distance and a lower average nearest-neighbor
similarity indicate a greater visual distribution gap.
}
\label{tab:appendix_visual_distribution}
\end{table}

Compared with the internal training-split comparison, all
\textsc{GranFact} subsets have substantially larger Fréchet CLIP distances
and lower nearest-neighbor similarities.
The same pattern holds for the seen-class subset, indicating that sharing
semantic classes with the auxiliary training data does not make the
evaluation images visually in-distribution.
Together with the unseen-class results, these findings support the
generalization of RP-DPO beyond both the training-covered classes and the
synthetic visual distribution used for preference construction.

\subsection{Qualitative Examples and Error Analysis}
\label{sec:appendix_qualitative_examples}

Figure~\ref{fig:qualitative_response_cards} presents the original responses generated by Gemini-3.1-Flash under conservative and aggressive prompts, and Figure~\ref{fig:gpt_assisted_case_analysis} further visualizes how our evaluator scores them.
For each response, we first parse object-level predictions and then compute an optimal global matching to the ground-truth (GT) objects.
For a sample \(z\), we report both the matching size \(M_z\), i.e., the number of matched prediction--GT pairs under the optimal assignment, and the total granularity score \(M_{z,\mathrm{gran}}\), i.e., the sum of the edge-level granularity scores over all matched pairs.

Under the conservative prompt, the parsed predictions are
\textit{smartphone},
\textit{Samsung smartphone},
\textit{Samsung smartphone}, and
\textit{foldable smartphone}.
All four predictions can be matched to GT objects, giving \(M_z=4\).
However, these matches remain at relatively coarse semantic levels, resulting in a lower total granularity score \(M_{z,\mathrm{gran}}=1.6123\).

Under the aggressive prompt, the parsed predictions become
\textit{Galaxy Z Fold smartphone},
\textit{Samsung smartphone},
\textit{Galaxy S24 Ultra smartphone}, and
\textit{Galaxy Z Flip smartphone}.
The first and fourth predictions are more specific and receive substantially higher granularity scores, which increases the total granularity score to \(M_{z,\mathrm{gran}}=2.0459\).
However, the third prediction is an unsupported fine-grained claim: it identifies the true \textit{Samsung Galaxy S23 Ultra smartphone} as a \textit{Galaxy S24 Ultra smartphone}.
As a result, this prediction remains unmatched, the corresponding GT object is missed, and the matching size drops to \(M_z=3\).

This example illustrates the reliability--granularity trade-off captured by our evaluation.
Aggressive prompting can improve semantic specificity for matched objects, but it may also introduce unsupported fine-grained predictions that reduce object coverage under global matching.
By reporting both \(M_z\) and \(M_{z,\mathrm{gran}}\), our evaluator makes this trade-off explicit at the sample level.

\begin{figure*}[t]
\centering
\includegraphics[width=0.68\textwidth]{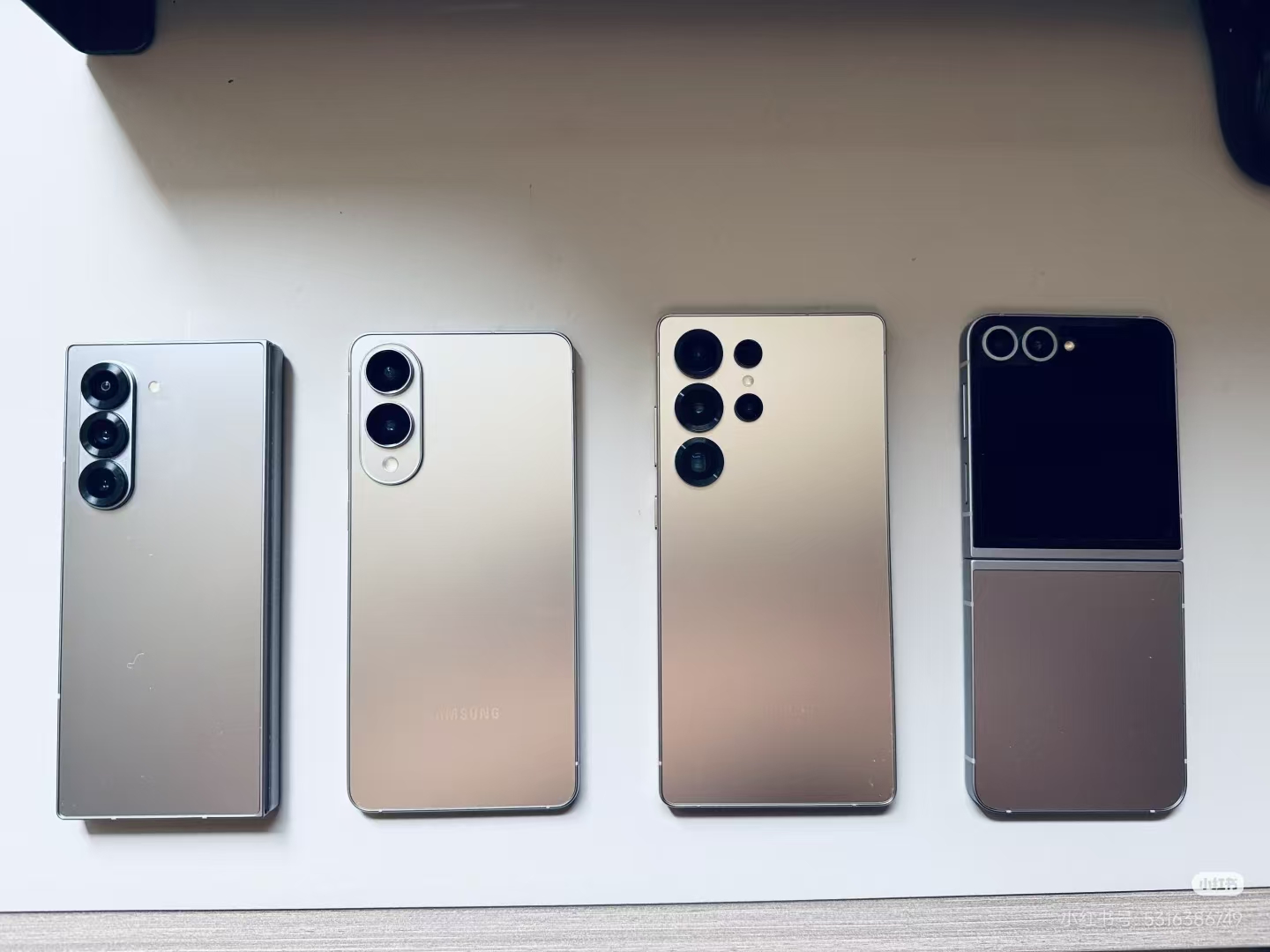}

\vspace{1mm}

\begin{ResponseCardNoBreak}{ConsAccent}{Gemini-3.1-Flash under the conservative prompt}
The image displays four smartphones arranged in a row on a white surface, viewed from the back.

From left to right:
\begin{enumerate}[leftmargin=1.35em,itemsep=0pt,topsep=1pt,parsep=0pt]
    \item \textbf{First phone:} A tall, narrow, light-colored device with a vertical camera module containing three lenses.
    \item \textbf{Second phone:} A light-colored device with a vertical camera module containing two lenses and a flash. The word ``SAMSUNG'' is visible near the bottom.
    \item \textbf{Third phone:} A light-colored device with a camera array consisting of five circular elements arranged in a ``P'' shape. The word ``SAMSUNG'' is faintly visible near the bottom.
    \item \textbf{Fourth phone:} A foldable device shown in its closed state. It has a dark upper section containing two camera lenses and a flash, and a light-colored lower section.
\end{enumerate}
\end{ResponseCardNoBreak}

\vspace{1mm}

\begin{ResponseCardNoBreak}{AggrAccent}{Gemini-3.1-Flash under the aggressive prompt}
This high-angle, eye-level shot displays four Samsung smartphones arranged in a row on a plain, light-colored surface. The phones are positioned vertically, showing their back panels.

Starting from the left:
\begin{enumerate}[leftmargin=1.35em,itemsep=0pt,topsep=1pt,parsep=0pt]
    \item \textbf{First Phone:} This is a tall, narrow device, likely a Galaxy Z Fold model. It has a matte, light-gray finish. On the left side, there is a vertical camera module containing three circular lenses stacked one above the other.
    \item \textbf{Second Phone:} This device has a smooth, metallic, champagne-gold finish. It features a vertical camera housing on the top left with two large, prominent circular lenses and a small flash below them. The ``SAMSUNG'' logo is faintly visible in the lower center of the back panel.
    \item \textbf{Third Phone:} This appears to be a Galaxy S24 Ultra or similar model. It has a similar champagne-gold, matte finish. The camera array is distinct, featuring five circular elements: three large lenses arranged vertically on the left, and two smaller circular components, a lens and a flash/sensor, to their right. The ``SAMSUNG'' logo is subtly embossed near the bottom.
    \item \textbf{Fourth Phone:} This is a foldable device, likely a Galaxy Z Flip model, shown in its closed state. The top half of the back is a glossy black panel containing two circular camera lenses and a small flash. The bottom half is a matte, muted-brown or taupe-colored panel. The phone has a metallic frame visible along the edges.
\end{enumerate}

The lighting is even and soft, casting minimal shadows, which highlights the different textures and finishes of the devices. The background is a clean, neutral white or off-white surface.
\end{ResponseCardNoBreak}

\vspace{-1mm}
\caption{
Qualitative responses of Gemini-3.1-Flash under conservative and aggressive prompts.
}
\label{fig:qualitative_response_cards}
\vspace{-2mm}
\end{figure*}

\begin{figure*}[t]
    \centering
    \includegraphics[width=\textwidth]{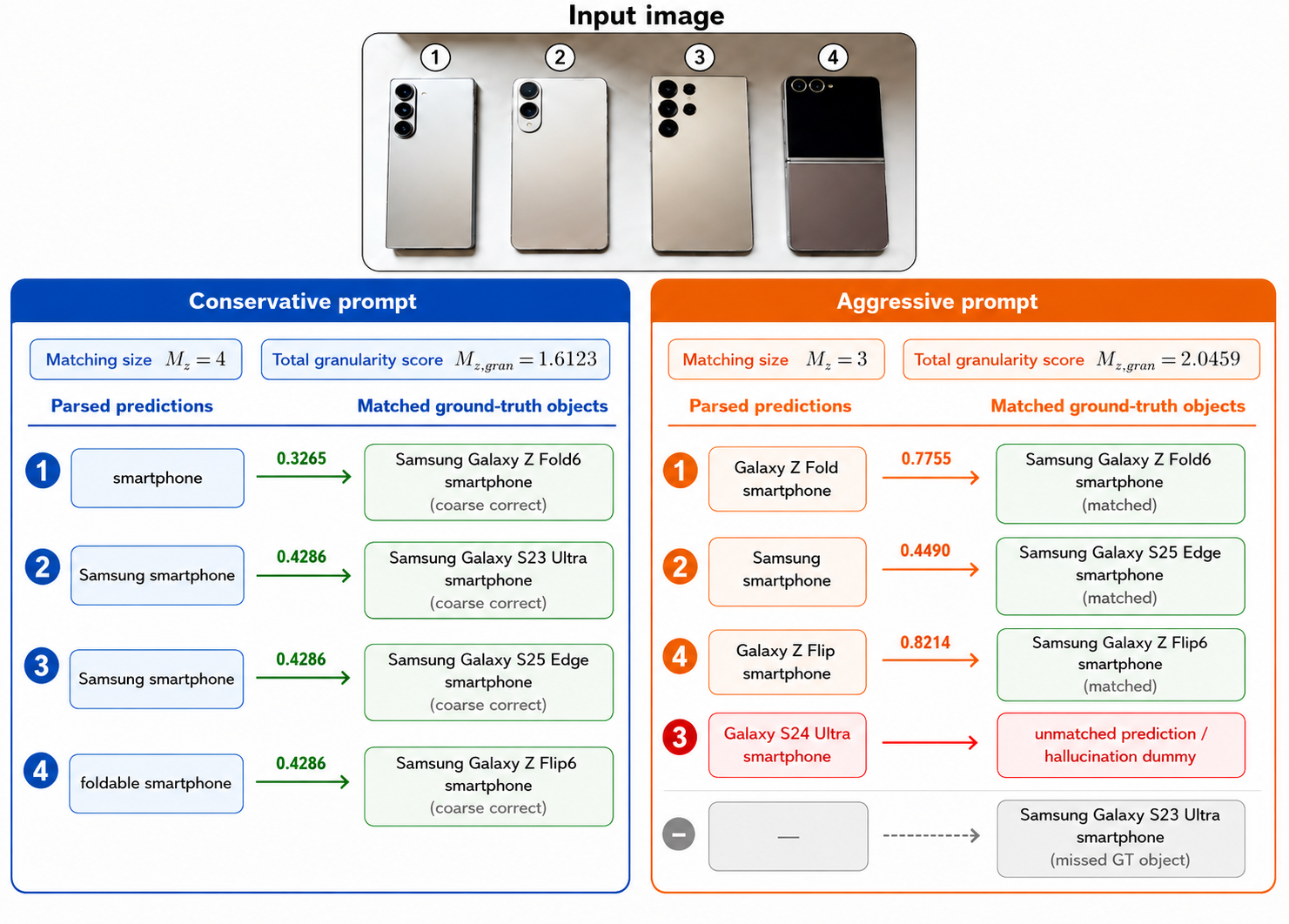}
    \caption{
    Evaluation outcomes for the qualitative example in Figure~\ref{fig:qualitative_response_cards}.
    The conservative response matches all four GT objects ($M_z=4$) but at coarse semantic levels ($M_{z,\mathrm{gran}}=1.6123$).
    The aggressive response achieves finer matched descriptions ($M_{z,\mathrm{gran}}=2.0459$) but introduces an unsupported fine-grained claim, reducing the reliablility score to $M_z=3$.
    }
    \label{fig:gpt_assisted_case_analysis}
\end{figure*}

\section{Artifact Use, License, and Intended Use}
\label{sec:appendix_artifact_use}

\paragraph{Artifacts used and created.}
This work uses publicly available and API-based multimodal large language models for research evaluation and, where applicable, model training. We cite the creators of the main external artifacts used or built upon, including pretrained MLLMs and Direct Preference Optimization. Prior datasets and benchmarks discussed for comparison are cited as related work.

We introduce \textsc{GranFact}, including expert-verified image annotations, a hierarchy-aware evaluation protocol, and evaluation code. The benchmark images are collected from public web sources and real-world photographs. For images collected from public sources, redistribution will follow the corresponding source licenses or terms of use. When redistribution of raw images is not permitted or cannot be verified, we will release only metadata, annotations, and evaluation scripts, or provide instructions for reconstructing the benchmark where appropriate. The annotations and evaluation code will be released under research-friendly licenses, such as CC BY-NC 4.0 for annotations and MIT License for evaluation code, subject to compatibility with the licenses and terms of the underlying resources.

\paragraph{Privacy and offensive content.}
\textsc{GranFact} is designed to evaluate object-level fine-grained visual description rather than person identification. During data collection and annotation, we screen images to avoid content that names or uniquely identifies individual people, such as faces, personal documents, account names, or other personally identifying information. We also filter out offensive, hateful, sexually explicit, or otherwise unsafe content. The released annotations describe object categories, quantities, and visual attributes, and do not include personal information. If any problematic content is later identified, we will remove or replace the corresponding example.

\paragraph{Documentation and statistics.}
We document the construction and coverage of \textsc{GranFact}, including image sources, visual domains, annotation protocol, category hierarchy, object-level annotations, and evaluation metrics. We also report dataset statistics, including the number of evaluation images, domain distribution, number of annotated objects per image, and granularity depth. The auxiliary training set used for RDPO is disjoint from the \textsc{GranFact} evaluation set, and its construction is described separately in the appendix.

\paragraph{Intended use.}
\textsc{GranFact} and the associated evaluation protocol are intended for research on reliable fine-grained multimodal generation and evaluation. They are designed to analyze whether MLLMs can generate visually grounded descriptions at appropriate semantic granularities. They should not be used as guarantees of correctness, nor for high-stakes decision making, privacy-sensitive identification, surveillance, or other applications where unsupported fine-grained predictions could cause harm. Users of the artifacts are responsible for complying with the licenses and terms of use of any underlying images, models, and APIs.

\end{document}